\begin{document}
	
	% If your paper is accepted and the title of your paper is very long,
	% the style will print as headings an error message. Use the following
	% command to supply a shorter title of your paper so that it can be
	% used as headings.
	%
	%\runningtitle{I use this title instead because the last one was very long}
	
	% If your paper is accepted and the number of authors is large, the
	% style will print as headings an error message. Use the following
	% command to supply a shorter version of the authors names so that
	% they can be used as headings (for example, use only the surnames)
	%
	%\runningauthor{Surname 1, Surname 2, Surname 3, ...., Surname n}
	
	\twocolumn[
	
	\aistatstitle{Active Exploration via Experiment Design in  Markov Chains}
	
	\aistatsauthor{ %
		Mojm\'ir Mutn\'y\\
%		ETH Z\"urich\\
		\texttt{mojmir.mutny@inf.ethz.ch} 
		\And
		Tadeusz Janik\\
%		ETH Z\"urich\\
		\texttt{ tjanik@student.ethz.ch} 
		\And
		Andreas Krause \\
%		ETH Z\"urich\\
		\texttt{krausea@ethz.ch}  }
	
	\aistatsaddress{ 		ETH Z\"urich \And  		ETH Z\"urich \And 		ETH Z\"urich  } ]
	
\begin{abstract}
	\looseness -1 A key challenge in science and engineering is to design experiments to learn about some unknown quantity of interest. Classical experimental design optimally allocates the experimental budget to maximize a notion of utility (e.g., reduction in uncertainty about the unknown quantity).  We consider a rich setting, where the experiments are associated with states in a {\em Markov chain}, and we can only choose them by selecting a {\em policy} controlling the state transitions. This problem captures important applications, from exploration in reinforcement learning to spatial monitoring tasks. We propose an algorithm -- \textsc{markov-design} -- that efficiently selects policies whose measurement allocation \emph{provably converges to the optimal one}. The algorithm is sequential in nature, adapting its choice of policies (experiments) informed by past measurements. In addition to our theoretical analysis, we showcase our framework on applications in ecological surveillance and pharmacology.
\end{abstract}
%\tableofcontents
	\vspace{-0.2cm}
\section{Introduction}
\looseness -1 The optimal design of experiments \citep{Pukelsheim2006, Chaloner1995} is a ubiquitous challenge in science and engineering. The key goal is to utilize the limited budget (time, resources) to gain as much information about some unknown quantity of interest.
%	With a limited budget such as time and experimental resources, it is vital to utilize them in a way that extracted information about the unknown quantity of interest is maximized. 
%The statistical field dealing with this problem is referred to as \emph{Optimal Experiment Design} \citep{Pukelsheim2006, Chaloner1995}, a subbranch of active learning. 
\looseness -1 Classical experiment design assumes that an experiment measures a single value with specific conditions. %set by the experimenter. 
Motivated by applications illustrated in more detail below, we assume that experiments are associated with \emph{policies} executed in a \emph{known Markov chain}. There are three major challenges in finding the set of best policies. Firstly, the space of policies can be {\em combinatorial} in the size of the state-action space, and searching over it directly would lead to intractable optimization problems with classical methods. Secondly, the feedback from a policy is {\em stochastic}, due to the randomness of the Markov chain, which needs to be taken into account. Lastly, it is unclear how classical experimental design {\em objectives} can be formulated as a function of the policies. We address these challenges with convex optimization techniques. 
%Our focus is on experimental design metrics measuring the quality of estimation over the state space of a known Markov chain, examples include, the expected error, or negative information. 
\vspace{-0.1cm}
\paragraph{Quantity of interest} \looseness -1 The general goal of experiment design is to estimate some aspects of an unknown quantity $f$ of interest. Here, we assume $f$ is a function of states and actions of a Markov chain $f: \mX \times \mA \rightarrow \mR$. We further assume $f$ belongs to a reproducing kernel Hilbert space (RKHS) ($f\in \mH_k$) with a \emph{known kernel} $k((x,a),(x',a')) = \Phi(x,a)^\top \Phi(x',a')$ \footnote{$\cdot^\top$ denotes the Hilbert space inner product.}, where $x,x' \in \mX$ and $a,a'\in \mA$, and a \emph{known} bound $\norm{f}_{\mH_k} \leq \frac{1}{\lambda}$. As a concrete example, the function $f$ can model the distribution of species over a certain geographical location, where the kernel incorporates spatial features such as access to water, soil salinity, etc., and $x$ corresponds to the spatial location. While $f$ is unknown altogether, we are often interested in estimating only a \emph{linear functional} of it $\bC f$, where $\bC:\mH_k \rightarrow \mR^p$. In the context of biological surveillance, this can be, e.g., a spatial average over certain locations. Another example are values at specific locations, where the rows of $\bC_{i:} = \Phi(x_i,\cdot)^\top$, with $\{x_i\}_i$ being the locations of interest. For simpler models (finite-dimensional $f$), $\bC$ may even be the identity (i.e., the goal is to estimate $f$ completely). We observe $f$ via noisy evaluations at specific states $x$ while performing an action $a$, 
\begin{equation}\label{eq:model}\looseness -1
	y = f^\top \Phi(x, a) + \epsilon
\end{equation}
where $\epsilon$ is random noise realization such that $\mE[\epsilon]=0$.
\vspace{-0.1cm}
\paragraph{Exploration} \looseness -1 Consider an example, where we want to deploy an agent (e.g., a drone) that explores the environment efficiently to learn $\bC f$ from observations at $(x,a)$. Due to the kernel regularity assumption (RKHS), we know that similar $(x, a)$ lead to similar values of $f$, and to understand $\bC f$, intuitively, we should explore diverse landscape features instead of evaluating similar $(x,a)$. In the species distribution example, these are illustrated via the pictograms in Fig.~\ref{fig:banner-density} and Fig.~\ref{fig:banner-traj}. We would like to pick a small subset of states covering all pictograms there, however, we cannot choose states arbitrarily: the only way we can choose them is by following the Markov transition rule, which then generates trajectories as in Fig.~\ref{fig:banner-traj}. In fact, to learn $\bC f$ effectively, we need to visit the states in proportion to the heat map in Fig.~\ref{fig:banner-density} - \emph{optimal visitation of the states}. In this work, we develop a method that picks policies sequentially such that their trajectories (as in Fig.~\ref{fig:banner-traj}) converge to the optimal visitation distribution over states and actions as in Fig.~\ref{fig:banner-density}, detailed below.
\begin{figure*}
	\centering
	\begin{subfigure}[t]{0.3\textwidth}
		\centering
		\includegraphics[width=\textwidth]{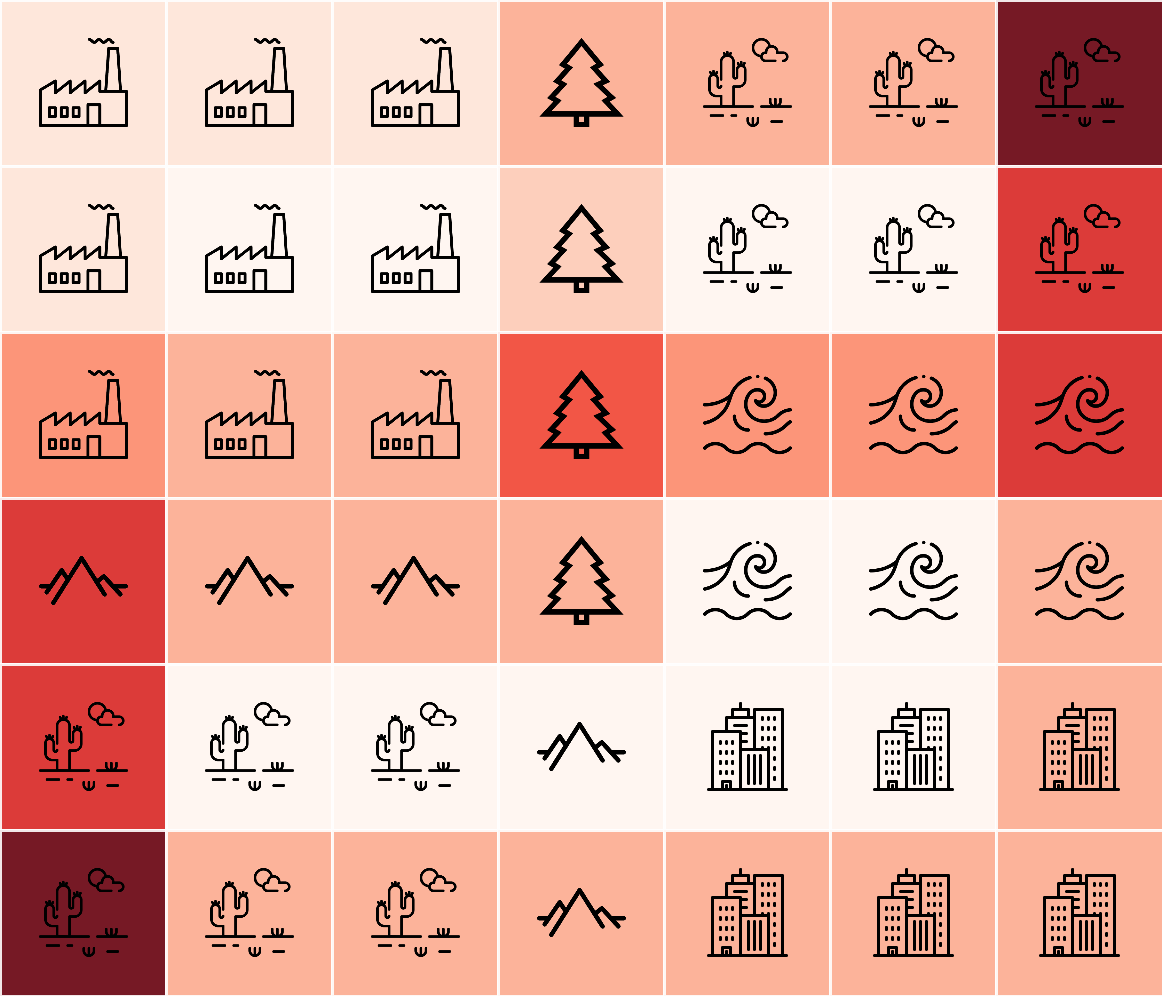}
		\caption{}
		\label{fig:banner-density}
	\end{subfigure}
	\begin{subfigure}[t]{0.3\textwidth}
		\centering
		\includegraphics[width=\textwidth]{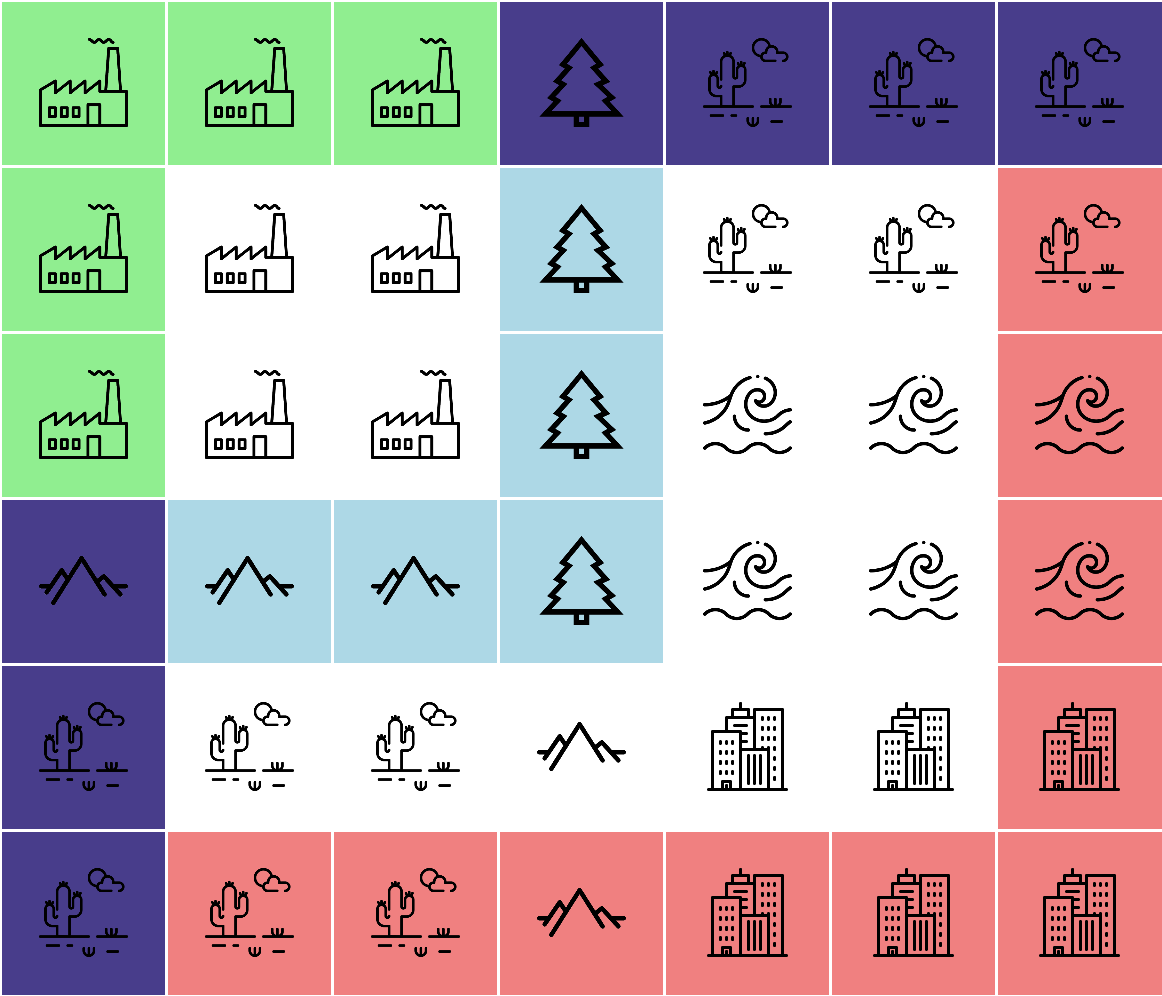}
		\caption{}
		\label{fig:banner-traj}
	\end{subfigure}
	\begin{subfigure}[t]{0.3\textwidth}
		\centering
		\includegraphics[width=\textwidth]{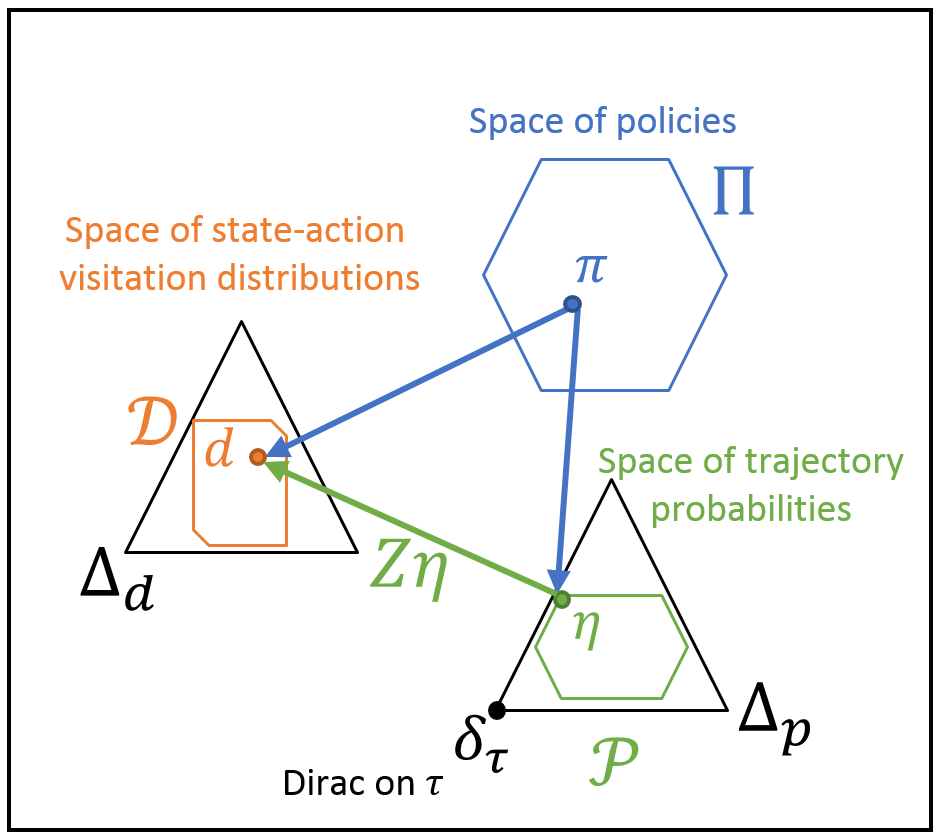}
		\caption{}
		\label{fig:banner-pic}
	\end{subfigure}
	\caption{a) Optimal state-action visitation distribution that maximizes the information about species distribution as a function of state features (indicated via pictograms). Different pictograms represent sectors, and hence different values of the unknown $f$. Note that some sectors are not covered frequently, as the information about the function value in these sectors is already extracted from different sectors with the same pictogram. b) Displayed are \emph{three trajectories} from \emph{three different policies} in \emph{three episodes}, seeking to match the optimal state-action distribution depicted in b). Note that the trajectories move from the initial (lower left) to the terminal state (upper right) and overlap in the purple-colored states. c) Relation between the sets of objects we work with. Given a policy $\pi$, we can calculate state-action visitations and probabilities over trajectories it generates. Not every state-action distribution nor probability over trajectories can be realized by a policy. Also, state-action visitations $d$ and probability over trajectories $\eta$ are related via a linear \emph{conversion map} $\bZ$. A single executed trajectory defines a Dirac delta on the probability space $\mP$, $\delta_\tau$.}
	\label{fig:banner}
	\vspace{-0.4cm}
\end{figure*}
\vspace{-0.2cm}
\paragraph{Contribution}
We design a \emph{novel} algorithm: \textsc{markov-design} for adaptive experiment design, where we collect data by deploying policies in a \emph{known} Markov chain for a fixed horizon. We notice that classical \emph{experimental design} objectives can be reformulated such that they depend on state-action visitations and show how they can be efficiently optimized using \emph{convex reward reinforcement learning} which reduces to a sequence of classical planning problems. The resulting algorithm relies on convex optimization techniques, and is simple to implement. We prove that our algorithm converges to the optimal allocation over trajectories generated by selected policies and propose several variants of the algorithm with varying levels of approximation and analyze their performance. Additionally, we demonstrate our algorithm on two real-world experimental design problems with Markov chain structure. 
\vspace{-0.2cm}
\section{Background and Related Work}
\vspace{-0.2cm}
\paragraph{Environment} \looseness -1 We assume a known set of environment states $\mX$ and actions $\mA$, and a known Markov transition model $p(x'|x,a): \mX \times \mA \rightarrow \mX$. We interact with the environment by executing a \emph{policy}, where actions are sampled $a_{h} \sim \pi_h(a|x_h)$, for horizon $H$, resulting in trajectories $\tau = \{x_0, a_0, x_1, a_1, \dots x_{H-1}, a_{H-1}\}$, where $\tau \in \mT$ is the set of all possible trajectories where the horizon is $H$. The space of \emph{all policies} is denoted $\Pi$. A policy induces a probability distribution over trajectories $\eta$, belonging to the feasible distribution set $\mP \subseteq \Delta_p$, a subset of all possible distributions over trajectories. A distribution supported only on one trajectory $\tau$ is denoted $\delta_\tau$. As in Fig.~\ref{fig:banner-pic}, note that a policy $\pi$ is related to a specific $\eta \in \mP$, but it is generally impossible to associate an arbitrary distribution over trajectories to a policy.

\vspace{-0.1cm}
\subsection{Related Work}\looseness-1 
\vspace{-0.1cm}
\paragraph{Experiment Design} \looseness-1 Optimal experimental design (OED) %is by now a classical branch of statistics 
\citep{Pukelsheim2006} has a rich history, with modern applications in protein design \citep{Romero2013}, Poisson sensing \citep{Mutny2022a}, and many others \citep[cf.,][]{Chaloner1995}. It is closely related to active learning \citep{Settles2009} and bandit optimization \citep{Szepesvari2020}. Traditionally, in OED, the action (experiment) can be chosen deterministically, and the space of actions is not large. Scaling to many actions has been addressed in the bandit literature in the context of \emph{combinatorial bandits} \citep{Zou2014,Talebi2013,Jourdan2021}, where the action structures are, e.g., paths in a graph, or spanning trees. While the goal is optimization (i.e., finding $\argmax_x f(x)$), the techniques are similar in spirit. They circumvent the large action set problem by having an efficient oracle for incrementally increasing the allocation, e.g., the Dijkstra algorithm. Similarly, we provide a way to incrementally increase the visitations of state-actions, and our oracle relies on efficient Bellman optimality solvers.
\vspace{-0.2cm}
\paragraph{Convex RL} \looseness-1 Our algorithms utilize the recent framework of reinforcement learning with convex reward functions due to \citet{Hazan2019} and \citet{Zahavy2021}. These works propose a way to find a stationary policy $\pi^*$ which minimizes a certain convex reward function, which, in light of our work, can be uncertainty about $f$. Exploration with convex rewards in the context of MDPs is investigated by \citet{Tarbouriech2019} in non-episodic context with ergodicity assumption without the notion of similarity as studied here. Different from their work, we consider more complex objectives which incorporate the structure of RKHS, exploiting a connection with the field of experiment design. Shortly after the publication of the first version of this work, \citet{Wagenmaker2022} introduced a similar experiment design problem on Markov chains with the aim of developing instance-optimal RL algorithms. Unlike other works, we focus also on how the optimal policy $\pi^*$ should be executed. Replicating the policy $\pi^*$ for multiple episodes leads to potentially the same states being visited and a suboptimal convergence to the optimal choice of state-actions as we show. Instead, we incrementally construct a sequence of policies that take the history of executed trajectories into account and such that convergence of the empirical visitation distribution to the optimal visitation distribution of policy $\pi^*$ is faster than resampling. 

\vspace{-0.1cm} \looseness -1 If $f$ is interpreted as a reward function, our work can be seen as a reward learning problem similar to \citet{Lindner2021}. The difference here is that our exploration is planned for an entire episode instead of assuming an access to arbitrary state-action oracle (simulator) as in their work, which makes our approach more widely applicable. Episodic planning for reward learning with different assumptions on $f$ and methods is considered by \citet{Belogolovsky2021}. 
\vspace{-0.1cm}
\subsection{Estimation}\looseness -1 \vspace{-0.1cm}
We estimate $\bC f$ using a regularized least squares estimator, where the estimated $\hat{f}_t$ after $t$ episodes is the solution to
$ \hat{f}_t = \argmin_{f \in \mH_k}	\sum_{i=1}^t \sum_{(x,a) \in \tau_i} (f^\top \Phi(x,a) - y_{a,x,i})^2 + \lambda \norm{f}_{\mH_k}^2.$
In order to estimate the functional $\bC f$, we simply use $\bC \hat{f}$. This estimator is motivated by the famed Gauss-Markov theorem, as it minimizes the second moment of residuals \citep[see][for details]{Mutny2022}. Due to the representer theorem \citep[cf.,][]{Schoelkopf2001}, the above problem can be solved even if the RKHS is infinite-dimensional.

\vspace{-0.1cm}  \looseness -1 We aim to understand the uncertainty of the estimate $\bC\hat{f}$ as a function of trajectories taken. To do so, as common in optimal experiment design, we consider the second moment of the residuals  $\bC(\hat{f} -f)$ denoted as $\bE_T = \mE_\epsilon[\bC(f-\hat{f}) (f-\hat{f})^\top\bC^\top]$, where the expectation is understood over the noise realization $\epsilon$ (see Eq. \eqref{eq:model}). Evaluating it,  we can see its dependence on the executed trajectories $\tau_i, i\in[T]$:
\begin{equation}\label{eq:covariance}
	\bE_T \preceq \bC \left( \left(\sum_{i=1}^{T}I(\tau_i) + \lambda \bI \right)^{-1} \right)\bC^\top  %\quad I(\tau) = \sum_{(a,x)\in \tau} \frac{1}{\sigma_{a,x}^2} \Phi(x,a)\Phi(x,a)^\top
\end{equation}
where we call $I(\tau):\mH_k \rightarrow \mH_k\times \mH_k$ the {\em information matrix} for a trajectory $\tau$,
\begin{equation}\label{eq:info-matrix}
I(\tau) = \sum_{(a,x)\in \tau} \frac{1}{\sigma_{a,x}^2} \Phi(x,a)\Phi(x,a)^\top.
\end{equation}
and $\sigma_{a,x}$ denotes the variance of random variable $\epsilon$. The derivation follows a straightforward application of optimality conditions,  representer theorem, and use of bounded norm assumption $\norm{f}_{\mH_k} \leq \frac{1}{\lambda}$. A very detailed derivation is given by \citet{Mutny2022}. 

%\subsection{Optimizing trajectories}\looseness -1
By appropriate choice of trajectories $\tau_i$, we can minimize the second moment of residuals in Eq.~\eqref{eq:covariance}. As alluded to in the beginning, unless the system is completely deterministic, we cannot directly choose trajectories $\tau_i$. Even if we could, this space of trajectories may grow {\em exponentially} in $|S|$ and $|A|$. Instead, we first reformulate the objective via a fractional allocation over trajectories $\eta \in \Delta_p$, essentially by inserting $1/T$ into Eq. \eqref{eq:covariance},
\begin{eqnarray}\label{eq:cov}
	\bE_T(\eta_T) & \hspace{-0.3cm}\preceq &  \hspace{-0.2cm} \frac{1}{T}\underbrace{\bC \left( \left( \sum_{\tau \in \mT} I(\tau)\eta_T(\tau) + \frac{\lambda}{T} \bI \right)^{-1} \right)\bC^\top}_{\text{covariance matrix } \bSigma(\eta_T)}
\end{eqnarray}
\looseness -1 where we identified $\eta_T = \frac{1}{T}\sum_{i=1}^T \delta_{\tau_i}$. Now the goal of experiment design is to invest the $T$ samples (trajectories) in such a way that the matrix in Eq.~\eqref{eq:cov} is as small as possible. 
Since the RHS is matrix-valued, 
a classical approach in OED is to scalarize the objective by one of the well-known scalarization functions (see \ref{table:targets}). A scalarization function $s:\mathbb{S}^{p\times p} \rightarrow \mR$ acts on the space of PSD matrices, and is convex.  These objectives usually correspond to a specific type of uncertainty, such as the squared error, entropy, or predictive error. Upon scalarization we have $s(\bE_T) \leq \frac{1}{T} s(\bSigma(\eta))$, where $s(\bSigma(\eta))$ represents the constant with which the budget $T$ is invested. We seek to have the lowest possible constant with $\eta^*$, $s(\bSigma(\eta^*))$.

% One of the special property of scalarization functions is so called \emph{positive homogeneity}, i.e. $F_T(\frac{t\eta}{T}) = \frac{1}{T} F_1(t\eta)$, which we in fact use in design and analysis of our algorithm. This property usually stated without regularization as in \citet{Pukelsheim2006} and as such mimics the classical definition of homogeneity $F_\circ(\alpha \eta) = g(|\alpha|) F_\circ(\eta)$ for any budget, and motonically increasing $g:\mR^+ \rightarrow \mR^+$.

%\paragraph{Robust and Uncertain objectives}\looseness -1
%	The functional $\bC$ itself can depend on an unknown quantity, which we designate as $\gamma$ as $\bC_\gamma$, where $ \gamma \in \Gamma_t$, $t\in [T]$. To deal with this complication, there are two approaches one can take:
%	\begin{itemize}
	%		\item \emph{Robust design:} take a supremum over the set $\Gamma$, and have a design that takes into account any possible values of $\gamma$
	%		\item \emph{Sequential design:} amend the objective $F$ with the new value of the supremum if by executing a trajectory we can reduce the size of the set $\Gamma_t$
	%	\end{itemize}
%	We give examples for both of these design approaches in the experimental section\ref{sec:experiments}. For example, the value of $\sigma_{a,x}$ as in Eq.\eqref{eq:cov} might not be known in advance, or can be estimated from the observed samples as episodes progress. 
\begin{table}
	\caption{Selected design objectives with their interpretations and name \citep{Fedorov1997}.}
	\label{table:targets}
	\scriptsize
	\centering
	\begin{tabular}{llll}
		\toprule
%		Design     &  Meaning  & $s(\bE_T(\eta))$ \\
%		\midrule
%		D  & Information  & $-\log\det\left( \sum_{\tau \in \mT} \eta(\tau) I(\tau) + \frac{\lambda}{T}  \bI \right)$ \\
%		A  & \hspace{-0.5cm} Parameter error$^2$  & $\Tr\left[\left( \sum_{\tau \in \mT} \eta(\tau) I(\tau) + \frac{\lambda}{T}  \bI \right)^{-1} \right]$ \\
%		V  & \hspace{-0.5cm} Predictive error$^2$ &  $\Tr\left[\bC \left( \sum_{\tau \in \mT} \eta(\tau) I(\tau) + \frac{\lambda}{T}  \bI \right)^{-1} \bC^\top\right]$ \\
		%	E-Design  & Squared error (projection of $\bC f$) & $\bC$  & $\lambda_{\max}\left[\bC \left( \sum_{\tau \in \mT} \eta(\tau) I(\tau) + (\lambda/T) \bI \right)^{-1} \bC^\top \right]$ \\
		%		D-Design \\ (Robust) &  Parameter squared error & $\bC_\gamma$  & $-\inf_\gamma \Tr\left[(\bC_\gamma \left( \sum_{\tau \in \mT} \eta(\tau) I(\tau) + (\lambda/T)  \bI \right)^{-1} \bC_\gamma^\top)^{-1} \right]$ \\
		%		D-Design unknown $\sigma$  & Cell body       & up to $10^6$  \\
		%		E-Design     & Cell body       & up to $10^6$  \\
				Design     &  Represents  & $s(\bSigma)$ \\
				\midrule
				D  &  information  & $-\log\det\left(\bSigma^{-1} \right)$ \\
				A  &  parameter error$^2$  & $\Tr\left[ \bSigma \right]$ \\
				E  &  worst projection error$^2$  & $\lambda_{\max}\left[\bSigma \right]$ \\
%				V  &  Predictive error$^2$ &  $\Tr\left[\bC \left( \sum_{\tau \in \mT} \eta(\tau) I(\tau) + \frac{\lambda}{T}  \bI \right)^{-1} \bC^\top\right]$ \\
		\bottomrule
	\end{tabular}
	\vspace{-0.3cm}
\end{table}

\vspace{-0.2cm}
\section{Experiment Design: Problem Statement}\label{sec:ee} \looseness -1
Our goal is to design and execute a sequence of $T$ policies $\pi_i$, $i\in [T]$, each one for a single episode, for a fixed number of steps $H$ (episode length), such that we reduce the uncertainty of $\bC \hat{f}$ efficiently. Directly optimizing probability distributions over trajectories in $F(\eta)$ is intractable as their description size is $(|\mX||\mA|)^H$ in the worst case. Instead, we focus on the building blocks of trajectories: states and actions, and their visitations. We design policies that visit the states and actions such that uncertainty is maximally reduced as measured by the function $F$ and show that for experiment design objectives the complexity of the problem can be reduced to optimization over distributions with size $|\mX||\mA|H$. 
\vspace{-0.2cm}
\subsection{State-action polytope} \looseness -1 To optimize the choice of policies (and hence trajectories), we work with an object that accumulates the information about state-visitations: the \emph{state-action visitation distribution}. For ease of exposition, assume that $\mX$ and $\mA$ are discrete sets. The space of all possible state-action visitation distributions with a known initial distribution $d_0$ is the \emph{state-action visitation polytope}, 
\begin{align*}
	\mD_h := \Big\{ d_h ~ | ~ d(x,a)\geq 0, ~ \sum_{a,x} d_h(a,x) = 1, \\ ~  \sum_a d_{h}(x',a) = \sum_{x,a}d_{h-1}(x,a)p(x'|x,a)   \Big\}.
\end{align*}
\looseness -1 For reference see \citet{Puterman2014} or %, or more recently in the context of RL
\citet{Neu2020}. The \emph{average state-action visitation polytope} over a time horizon $H$ is the central constraint set used in this work, which we denote by $\mD := \{ d = \frac{1}{H}\sum_{h=1}^{H}d_h ~|~ d_i \in \mD_i ~ \forall i \in [H]\}$. Notice that for fixed horizon, the state-action visitation distribution $d_h$ for each timestep $h$ is different. A useful approximation is to let $H$ be very large. In that case, a reasonable approximation to this polytope is the \emph{average-case polytope}, where $\bar{\mD} = \{ d ~ | ~ d(x,a) \geq 0~ , \sum_{a,x} d(a,x) = 1, ~  \sum_a d(x',a) = \sum_{x,a}d(x,a)p(x'|x,a)   \}$, and the distribution $d$ for each $h$ is the same. The distributions $d_h$ can be generated via transition operator $P_{\pi_h}$ and policy $\pi$,
\begin{equation}\label{eq:transition-operator}
	P_{\pi_h}(x,x') := \sum_a p(x|a,x)\pi_h(a|x), 
\end{equation}
to get  $d_h(x,a) = \left(\prod_{i=1}^h P_{\pi_{h}}d_0(x)\right)$.
Conversely, to match the state-action visitation by executing a policy, we can obtain a policy by marginalization:
\begin{equation}\label{eq:marginalization}
	\pi_h(a|x) = \frac{d_h(a,x)}{\sum_{a} d_h(a,x)} ~	 \text{and} ~ 	\bar{\pi}(a|x) = \frac{d(a,x)}{\sum_{a} d(a,x)},
\end{equation}
where the second case corresponds to the average case. In the latter case, the induced policy $\pi$ is stationary, while the former is non-stationary. We will drop the subscript $h$ from $d$ and $\pi$, and instead refer to them as $d_\pi$ and $\pi$, as the treatment for average and fixed horizon polytopes is essentially the same -- they differ only in the form of marginalization.
\vspace{-0.1cm}
\paragraph{Visitations and Trajectories} \looseness -1 Executing a policy $\pi$ for $H$ steps leads to a trajectory $\tau = \{x_0, a_0, \dots x_{H-1}, a_{H-1}\}$. Since both the policy and/or the environment can be stochastic, $\pi$ induces a distribution over trajectories of length $H$, which we denote by $\eta_\pi \in \mP$. We see that $\pi$ induces both $\eta_\pi$ and $d_\pi$, and in turn $d_\pi$ can be matched by a specific $\pi$. The distributions $d_\pi$ and $\eta_\pi$ can be related using a linear map $\bZ$ that we refer to as a \emph{conversion map} (see Fig.~\ref{fig:banner-pic}) as follows,
\begin{equation}\label{eq:operator}
\begin{aligned}
	d_\pi(a,x) & = \sum_{\tau \in \mT} \eta_\pi(\tau) \sum_{a',x' \in \tau} \delta_{a=a',x=x'} \\ 
			 & =  \sum_{a,x \in \mA \times \mX} \sum_{\tau \in \mT}  \#_{(a,x \in \tau)} \eta_\pi(\tau) = \bZ_{a,x}\eta_\pi,
\end{aligned}
\end{equation}
where $\#_{(a,x \in \tau)}$ refers to the number of times a state combination $(x,a)$ appears in $\tau$. The relation will be important as it allows us to directly optimize the distribution over trajectories (and hence policies) by optimizing state-action visitations. 
\vspace{-0.1cm}
\subsection{Loss Function}  \looseness -1
Using the \emph{conversion map} $\bZ$,  we can relate the objectives $F$ in terms of the empirical distribution of trajectories as $\eta$ with objective $U$ depending only on state-action visitations:
\begin{equation}\label{eq:objective-related}
	\min_{\eta_\pi \in \mP} F(\eta_\pi):= \min_{\eta_\pi \in \mP} U(\bZ \eta_\pi) = \min_{d_\pi \in \mD} U(d_\pi),
\end{equation}
where $d_\pi$ corresponds to the average state-action visitation of the policy $\pi$ (which needs to be neither Markovian nor stationary). Functions $U$ and $F$ are essentially the same; the only difference is in terms of the decision variables involved. % we look at the same objective. 
These variables are related (in one way) via a linear map $\bZ$. This is possible due to the definition of $F(\eta_\pi)$ via the information matrix $I(\tau)$, which is additive in terms of state-action pairs in a trajectory $\tau$. For a formal derivation please see Lemma \ref{app:lemma:conversion} in Appendix \ref{app:basic}.
\vspace{-0.1cm}
\paragraph{Optimum} \looseness -1 We want that the empirical distribution over executed trajectories $\eta_T = \frac{1}{T} \sum_{t=1}^{T}\delta_{\tau_t}$ converges to the optimum of \eqref{eq:objective-related}, $\eta^* \in \arg\min_{\eta \in \mP} F(\eta)$, where $\delta_{\tau_t}$ is a delta-function supported on the trajectory $\tau_t$. The optimum  distribution over trajectories $\eta^*$ corresponds to a \emph{fixed} non-stationary (or stationary for the average case treatment) policy which comes from the set of optimal policies $\Pi^*$ - equivalent in terms of their value of $F$ (any one of them can be chosen).

By picking multiple different policies, each different in each episode, one might hope that one can perform better than a single optimal policy $\pi^*$, that induces $\eta^*$. However, in expectation, it cannot be improved upon due to convexity of $U$, as \vspace{-0.1cm}
\begin{align*} 
&\mE_{\tau_i\sim \pi_i}[F(\eta_T)]\stackrel{\eqref{eq:objective-related}}=\mE_{\tau_i\sim \pi_i}\left[ U \left( \bZ \eta_T \right)\right] \stackrel{\text{Jen.}} \geq U \left( \bZ  \mE_{\tau_i\sim \pi_i}\left[ \eta_T \right] \right) \\ 
&=  U\left(\frac{\bZ}{T}\sum_{t=1}^{T}\mE_{\tau_t\sim \pi_t}\left[ \delta_{\tau_t} \right] \right) \geq  U\left( \frac{\bZ}{T}\sum_{t=1}^{T} q_t \right) \stackrel{\text{opt.}} \geq U(\bZ \eta^*),
\end{align*}
where $q_t$ is the induced probability over trajectories due to the policy $\pi_t$. The last inequality follows as  $\frac{1}{T}\sum_{t=1}^{T}\bZ q_t$ is a convex combination of elements inside the state-action polytope, which is convex, and hence there exists a unique $\bZ \eta$ which can replicate it. Note that given $q_t$, we can find $d_{\pi_t} = \bZ q_t$ that in turns corresponds to the marginalized $\pi_t$. As $\eta^*$ is the optimum over $\mP$, $U(\bZ\eta)$ is larger than $U(\bZ \eta^*)$. The above calculation reveals that the value $F(\eta^*)$ is a good benchmark in expectation, and cannot be improved upon in expectation.

\vspace{-0.2cm}
\section{Convex RL: Non-adaptive Design}\looseness -1
Given a convex objective $U(d)$ (related to $F(\eta) = U(\bZ\eta)$) over polytope $d \in \mD$ (either the average or fixed horizon average polytope), we can solve for a policy $\pi^*$ which achieves the optimum of $d^*$ due to seminal works of \citet{Hazan2019} and \citet{Zahavy2021}. We refer to it as \emph{convex Reinforcement Learning (RL)}. It proceeds by solving a sequence of classical RL problems with a linear reward function that corresponds to the gradient of $U(d)$. We use it as a subroutine in our adaptive algorithm that we explain in Section \ref{sec:adaptive}. %, and which can be viewed as a non-adaptive experiment design counterpart to our it. 
Using this method, we can find a fixed policy $\pi^*$ that matches the optimum value $U(d^*) = F(\eta^*)$. It does so by constructing a convex combination of base policies.
\vspace{-0.1cm}
\paragraph{Mixture Policy}\looseness -1 We refer to a convex combination of policies as  \emph{mixture policy}. It is a tuple consisting of a set $n$ base policies, and set of $n$ positive weights $\alpha_i$ that sum to one, $\pi_{\text{mix},n} = \{(\alpha_i,\pi_i)\}_{i=1}^n$. Such a mixture can be executed by first sampling an index $j \in [n]$ with probability equal to $\alpha_j$, and then evaluating policy $\pi_j$ for $H$ rounds. A property of mixture policies is that the state-action probabilities follow the convex combination of the policies such as $d_{\pi_{\text{mix},n}} = \sum_{i=1}^{n}\alpha_i d_{\pi_i}$. Note that like any other policy, a mixture policy can be summarized by a single policy via marginalization as in Eq.~\eqref{eq:marginalization}.
\vspace{-0.1cm}
\subsection{Convex RL as Frank-Wolfe}\label{sec:convex-rl}
\looseness -1 Convex RL can be solved via the Frank-Wolfe algorithm -- it incrementally constructs a mixture policy whose state-action distribution $d_{\pi_{\text{mix}},n}$ converges to $d^*$ as the number of mixture components $n$ increases. There are two distinct steps. The first is called \emph{density estimation}, corresponding to the evaluation of the gradient in Frank-Wolfe, and the second is \emph{policy search}, corresponding to the linear minimization oracle in Frank-Wolfe.
\vspace{-0.1cm}
\paragraph{Density Estimation}\looseness -1
Given a mixture policy, the density estimation oracle needs to estimate $d_{\pi_{\mix}}$. For discrete Markov chains (i.e., tabular MDPs), this amounts to a straightforward application of the operator $P_\pi(x',x)$ to $d_0$ as in \eqref{eq:transition-operator} for every component of the mixture policy. In particular for the fixed horizon setting $d_{\pi_i}(x) = \frac{1}{H}\sum_{h=1}^{H} \prod_{j=1}^h P_{(\pi_i)_j}d_0(x)$ where subscript $j$ denotes the iteration within the episode as $\pi$ is non-stationary for a fixed horizon. The overall mixture state-action visitation is the convex combination of all mixture components. Beyond discrete Markov chains, any state-visitation density can be estimated via sampling. Note that due to knowledge of transition operator $P$, this means simulation, not interaction with the environment.
%The linear MDPs provide a way to incrementally improve the scalability when $\mX$ is larger or even infinite. In this case, the transition matrix $P_\pi $ with policy $\pi$ is equal to $ P_\pi(x',x) = \sum_{a} \mu(x')^\top \psi(x,a) \pi(a|x)  = \mu(x')^\top z_\pi(x)$, 
%where $z_\pi(x) \in \mR^m$ is the mean embedding of $\psi(x,a)$ with probability distribution $\pi(a|x)$. Lets us define an operator $\bU: \mX \rightarrow \mR^{m}$ and $\bV_h:\mX \rightarrow \mR^{m}$, then $P_{(\pi_i)_h} = \bU \bV_{h,i}^\top$. Using that $d_0 = \delta_{x_0}$ and the definition from above $d_{\pi_i}(x) = \frac{1}{H}\sum_{h=1}^{H} \prod_{i=1}^h P_{(\pi_i)_j}d_0(x) =  \frac{1}{H}\sum_{h=1}^{H} \mu(x) \left( \prod_{i=2}^{h} \bV_{h,i}^\top \bU \right) z(x_0) = \mu \frac{1}{H}\sum_{h=1}^{H} \mu(x) \bM_{h,i} z(x_0)$, where $\bM\in \mR^{m\times m}$. Hence in order to estimate $d_\pi(x)$ we only need to multiply $m \times m$ matrices despite $|\mX|$ infinite.
\vspace{-0.1cm}
\paragraph{Policy Search} \looseness -1 Having estimated $d_{\pi_{\mix},n}$ with $n$ elements, we can now add an element $\pi_{n+1}$ such that the objective $U$ decreases. We linearize the objective $\nabla U(d)$ and solve the linear minimization oracle:
\[d_{\pi_{n+1}} = \argmin_{d_\pi \in \mD} \sum_{x,a} \nabla U(d_{\pi_{\mix}})(x,a) d(x,a) \]%=: \argmin_{d_\pi \in \mD} \sum_{x,a} r(x,a) d(x,a) .\]
This is a classical reinforcement learning problem, where $\nabla U(d_{\pi_{\mix}})$ plays the role of the reward function \citep{Puterman2014}. Hence, it can be solved by any RL solver such as value/policy iteration or linear programming. The newly found $d_{\pi_{n+1}}$ defines $\pi_{n+1}$ or vice versa depending on the RL solver used. The weight of new policy $\pi_{n+1}$, $\alpha_{n+1}$, is found via a line search as in Algorithm \ref{alg:1} or any other convergent step-size scheme for the Frank-Wolfe algorithm \citep{Jaggi2013}. The new mixture policy is then $\pi_{\text{mix},n+1} = \{( (1-\alpha_{n+1}) \alpha_i,\pi_i)\}_{i=1}^n \cup \{\alpha_{n+1}, \pi_{n+1}\}$. This algorithm is summarized in Alg.~\ref{alg:1} as \textbf{Convex RL}. \citet{Hazan2019} prove that in order to converge to $\epsilon$ optimality in terms of $U$, under the regularity conditions as in Assumption \ref{ass:regularity} (see Sec. \ref{sec:theory}), one needs $n \geq \mO( \frac{L}{\epsilon}\log(1/\epsilon))$ steps with step size $\alpha_n = \frac{\epsilon}{L}$ for all $n$. \footnote{A large smoothness parameter $L$ of $U$ is not a pressing issue as $n$ does not correspond to actual samples of the environment only the computational effort}

%\looseness -1 We state it here for completeness under the assumption of perfect density estimation.
%	\begin{theorem}[Modified version of Thm.~4.1 \citep{Hazan2019}] Under Assumption \ref{ass:regularity}, in order to solve \eqref{eq:basic-objective} to $\epsilon$ optimality, we require $n \geq \mO( \frac{L}{\epsilon}\log(1/\epsilon))$ steps with step size $\alpha_n = \frac{\epsilon}{L}$ for all $n$.	\end{theorem}
%Note that assuming a line-search improves only logarithmic dependence on $\epsilon$ in the above theorem \citep{Lacoste2015}.
\vspace{-0.1cm}
\section{\textsc{Markov-Design}: Adaptive Design}\looseness -1 \label{sec:adaptive}
In the previous section, we discussed how to find a \emph{single policy} such that its state-action visitation probability minimizes a certain convex functional $U$. However, ultimately our objective depends on the empirical distribution $\eta_T = \frac{1}{T}\sum_{t=1}^{T}\delta_{\tau_t}$ of the executed trajectories. We now consider ways to generate trajectories: first the straightforward \textsc{non-adaptive} methods that execute a single policy multiple times, and then we describe the \textsc{adaptive} methods, which lie at the core of the contribution. 
\vspace{-0.1cm}
\subsection{Resampling from mixture $\pi^*$}\looseness -1
\vspace{-0.1cm}
\paragraph{Variant: \textsc{non-adaptive}}\looseness -1
As our final density is in the form of a mixture policy, we can either sample a component for each episode or summarize the policy by marginalization and execute it multiple times resulting into empirical $
eta_T$. The value $F(\eta_T) \rightarrow F(\eta^*)$ with probability $1-\delta$, depending on the gradient norm $B = ||\nabla F(\eta_T)||_\infty$, as $\mO(B\log(1/\delta)/\sqrt{T})$. In general, this strategy suffers from the {\em coupon collector problem}, where the same trajectories are resampled with non-zero probability. Namely this manifest itself in number of $T$ needed such that $B$ is well-behaved. For more details and formal statements, see Appendix \ref{app:resampling} and 
\ref{app:discussion}.

\vspace{-0.1cm}
\paragraph{Variant: \textsc{tracking}}\looseness -1 The \textsc{non-adaptive} variant is wasteful in that there exists a non-zero probability that the same base policy is executed multiple times, leading to similar trajectories (and hence redundant experiments). This can be avoided via \emph{tracking} -- closely following the empirical distribution of executed policies to the mixture found by convex RL. Namely, we choose the base policy $\pi_j$ such that $j =\argmax_i (\alpha_i - \hat{\alpha}_i)$, where $\hat{\alpha}_i$ corresponds to the empirical distribution of executed policies. This, however, can be wasteful, as it depends on how the mixture policy is decomposed. If the  mixture contains a lot of dissimilar policies, then this method can be very competitive. On the other hand, if the components are all very similar, then this is as wasteful as the \textsc{non-adaptive} variant. 
\vspace{-0.2cm}
\subsection{Adaptive Optimization on $\mP$} \looseness -1
A more elegant way to avoid resampling is to inform the choice of the next policy with the information about the executed trajectories from previous steps. To do this, we incrementally estimate the empirical distribution of the visited states from past trajectory distributions as $\bZ \eta_t$, where $\eta_t = \frac{1}{t} \sum_{i=1}^{t}\delta_{\tau_i}$. We then seek an addition to this empirical measure $d = \bZ q_t$ as $\frac{1}{1+t} d + \frac{t}{1+t} \bZ \eta_t$ which minimizes the objective $G_t(d) = U(\frac{1}{1+t} d + \frac{t}{1+t} \bZ \eta_t)$. Notice that the weighting $\frac{1}{1+t}$ is chosen such that the new empirical distribution over trajectories $\eta_{t+1} = \frac{1}{1+t} \delta_t + \frac{t}{1+t}\eta_t$ where $\delta_t \sim q_t$ remains still the average allocation over the executed trajectories. It is in a sense a greedy one-step change in the allocation that brings us closer to the optimal allocation.  
\vspace{-0.1cm}
\paragraph{Variants: \textsc{exact} and \textsc{one-Step}}\looseness -1 Depending on how we identify the distribution over trajectories $q_t$ (and the associated $d$), we distinguish two variants. The \textsc{exact} variant finds the exact element such that $q_t = \arg\min_{q\in \mP} U(\frac{1}{1+t} \bZ q_t + \frac{t}{1+t} \bZ \eta_t)$. The \textsc{one-step} variant simply runs the Convex MDP framework for one step, and plays the first component in the mixture policy $\pi_{\mix}$ as summarized in Algorithm \ref{alg:1}. The \textsc{one-step} can be seen as a poor man's version of \textsc{exact}, although there seems to be empirical benefits for the former as we will see. 
\vspace{-0.1cm} \paragraph{Uncertain Objectives} \looseness -1
\looseness -1 In some cases, the objective $F$ depends on an unknown quantity such as unknown variance $\sigma_{x,a}^2$. We can then optimize the worst case over the unknown compact parameter set $\Gamma$, as $F(\eta) = \sup_{\sigma_{a,x} \in \Gamma} F_{\sigma_{a,x}}(\eta)$. Since $F$ is convex and $\Gamma$ compact, the objective remains convex. Further details are given in Appendix \ref{app:robust}. % Alternatively, for the \textsc{one-step} variant we do not need to optimize the whole objective, what suffices is to create an upper bound on the gradient of the objective in step 6. in Alg.~\ref{alg:1}. This is related to the performance of Frank-Wolfe UCB of \citet{Berthet2017}, where if $\Gamma_t$ shirks over episodes, our algorithm converges even to the optimal policy $\pi^*$ depending on the unknown singular $\gamma^*$. For details see Appendix \ref{app:unknown}.
%\vspace{-0.35cm}

\algdef{SE}[SUBALG]{Indent}{EndIndent}{}{\algorithmicend\ }%
\algtext*{Indent}
\algtext*{EndIndent}

\begin{algorithm*}
	\caption{\textsc{Markov-Design}}
	\label{alg:1}
	\begin{algorithmic}[1]
		\Require known Markov chain, $p(x'|x,a)$, Objective $F$ ($U$), Number of episodes $T$
		\While{$t \leq T$}
		\State \textbf{Convex RL:} solving $\min_{d \in \mD}G_t(d):= U( \bZ\eta_{t}\frac{t}{t+1} + \frac{1}{t+1} d)$ 
		\Indent
		\State $\pi_{\mix,1} = \pi_t; i = 1$, $d_{\pi_{\mix},1} = \bZ\eta_{t}$
		\Repeat \quad $i = i + 1$
		\State $\varpi_i =  \argmin_{\pi \in \Pi} \sum_{x,a} d_{\pi}(x,a)\nabla_{x,a} G_t(d_{\pi_{\mix},i}(x,a))$ \Comment{RL problem}
		\State  $d_{{\varpi}_i} = \textbf{Density Estimation}(\varpi_i)$ \Comment{keep track of visitations}
		\State $\alpha_i = \argmin_{\alpha \in \mR} G_t(\alpha d_{\pi_{\mix},i} + (1-\alpha)d_{{\varpi}_i})$ \Comment{line search}
		\State $\pi_{\mix,i+1} = (1-\alpha_i)\pi_{\mix,i}\cup\{(\alpha_i,\varpi_i)\}$ \Comment{update mixture policy}
		\State Update $d_{\pi_{\mix},i+1} = \alpha_id_{\pi_{\mix},i} + (1-\alpha_i)d_{{\varpi}_i}$ \Comment{update mixture policy visitations}
		\Until{convergence}
		\State \textbf{Choose} $\pi_t = \begin{cases}
			\textbf{Marginalize} \quad \pi_{\mix}	& \mbox{if} ~ \textsc{variant = exact} \\
			%				\varpi_i ~ \text{s.t.} ~ i \sim \alpha_i	\quad & \mbox{if} ~ \textsc{unroll = exact; mixture} \\
			\varpi_1 \quad & \mbox{if} ~ \textsc{variant = one-step}
			%				\sum_{i=1}\varpi_i\alpha_i \quad & \mbox{if} ~ \text{unroll = average} \\
		\end{cases}$
		\EndIndent
		\State \textbf{Interaction}
		\Indent
		\State Sample trajectory from $\tau_t \sim \pi_t$ (also as $\delta_{\tau_t} \sim q_t$)
		\State $\bZ \eta_{t+1} = \bZ\frac{t}{t+1}\eta_{t} + \bZ\frac{1}{t+1}\delta_{\tau_t}$ \Comment{keep track of visited states}
		%			  	\State Set $\bZ\delta_{\tau_t}$ \Comment{only for theory; to explain tracking}
		\EndIndent
		\EndWhile
	\end{algorithmic}
\end{algorithm*}

\vspace{-0.2cm}
\section{Convergence Theory}\looseness -1 \vspace{-0.1cm} \label{sec:theory}
%Our convergence results depend on the statistical properties of 
%The complexity of learning with the MDP is captured by so called effective dimension $\deff(\lambda)$, which we define to be: 
%\begin{equation}
%	\deff\left(\eta^*,\frac{\lambda}{T}\right) = \Tr\left(\bV(\eta^*) \left(\bV(\eta^*)+ \frac{\lambda}{B} \bI\right)^{-1} \right),
%\end{equation}
%where $\bV(\eta) = \sum_{\tau \in \mT}\eta^*(\tau)I(\tau)$ and $\eta^*$ is the optimal policy. 
%This quantity has rich history in statistical complexity theory \citep{Vershynin2018} and bandits \citep{Valko2014}. Notice that unlike as with classical experiment design, the allowed distribution are only supported on polytope $\mP$ as this is the only allowed data gathering scheme. 

\looseness -1 The \textsc{one-step} variant is closely related to the Frank-Wolfe algorithm on the space of trajectory distributions and the theory we develop for its convergence is largely based on it. The convergence cannot be linear unless step sizes are adjusted \citep{Lacoste2015}, however, our $\frac{1}{1+t}$ step-sizes are determined by the one-step update specific to this setting, and cannot be changed. %The \textsc{exact} variant bears a close resemblance to the so called \emph{ball oracle} from trust-region literature. % In this case, the ball is a \emph{gauge ball} defined via \emph{gauge norm} $\norm{\eta}_{\mP} = \arg\inf_{c\geq 0}\{ c\eta \in \mP \}$. 

We utilize the same convergence proof for the \textsc{exact} and \textsc{one-step} variant, although the two have different convergence behavior on the real problems. The regularity conditions under which we show convergence are summarized bellow. 
%We need to also define a Markov chain property, which also governs our error in worst case
%\begin{assumption}[Execution probability] For any $\delta_t \in \Delta_p$, $\exists q \in \mP$, $P(\delta_t \sim q_t) \geq p_\Delta$, where $p_\Delta$ be the maximum.
%\end{assumption}
%Notice that for deterministic, or close to deterministic systems $p_\Delta \approx 1$. 
%In addition to this, 
\begin{assumption}[Regularity]\label{ass:regularity} Let $F: \Delta_p \rightarrow \mR $ (and likewise $U$) be convex, differentiable, locally Lipschitz continuous in $\norm{\cdot}_\infty$, and locally smooth as, 
	\begin{equation}\label{eq:convexity-smoothness}\looseness -1
		F(\eta + \alpha h) \leq  			F(\eta) + \nabla F(\eta)^\top h + \frac{L_{\eta,\alpha}}{2} \norm{h}^2_2 .
	\end{equation}
	for $\alpha \in (0,1)$ and $\eta,h \in \Delta_p$, $L :=\max_{\eta,\alpha} L_{\eta,\alpha}$. 
\end{assumption}
\looseness -1 Note that the above differs from classical smoothness assumption, which we refer to as global smoothness. On its own is not sufficient to prove the desired rate of convergence we observe in practice. The problem is that the experiment design objectives can have global smoothness $L$ proportional to $\frac{T}{\lambda}$, which does not suffice to prove convergence. However, the usual behavior is that after a few initial steps (eg. ca. $\mO(p)$), the local smoothness constants drops to a small number. On top of that some objectives might not be smooth at all like E-design. To remedy both of these, one can apply classical smoothing technique due to \citet{Nesterov2005}, where the smoothed function $F_\mu$ is $\mu$ close to $F$, but with smoothness $L_{\mu} = L/(1+\mu L) \leq \frac{1}{\mu}$. Applying our algorithm on objective $F_\mu$ gives order $T$ optimal convergence rate with high probability. 
\begin{figure*}
	\includegraphics[width=\textwidth]{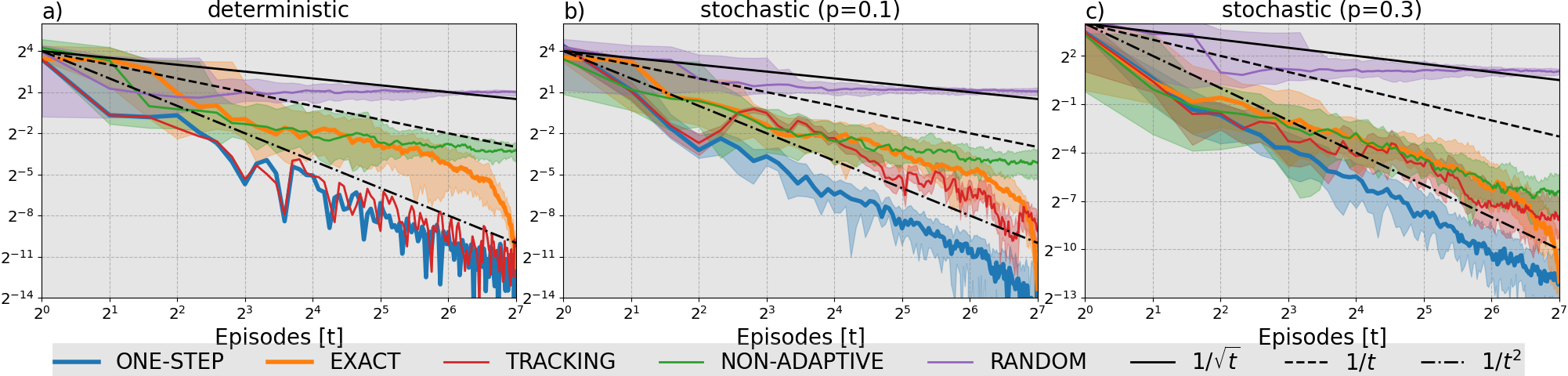}
	\caption{Gridworlds experiment: From a) to c) we increase the stochasticity of the environment. We plot the median with $10\%$ and $90\%$ quantiles over $20$ reruns of the method. Notice that the adaptive methods perform much better than the non-adaptive ones, and very quickly optimize the objective. For deterministic systems, the \textsc{one-step} method is deterministic as it coincides with the greedy method over a set of deterministic policies. Surprisingly, the  \textsc{one-step} method dominates the \textsc{exact} method for short horizons despite \textsc{exact} demonstrating linear convergence in later stages. The non-adaptive method exhibits slow convergence, while  random selection does not converge.}
	\label{fig:gridworld}
	\vspace{-0.8cm}
\end{figure*}

\begin{figure*}
	\centering
	\includegraphics[width=\textwidth]{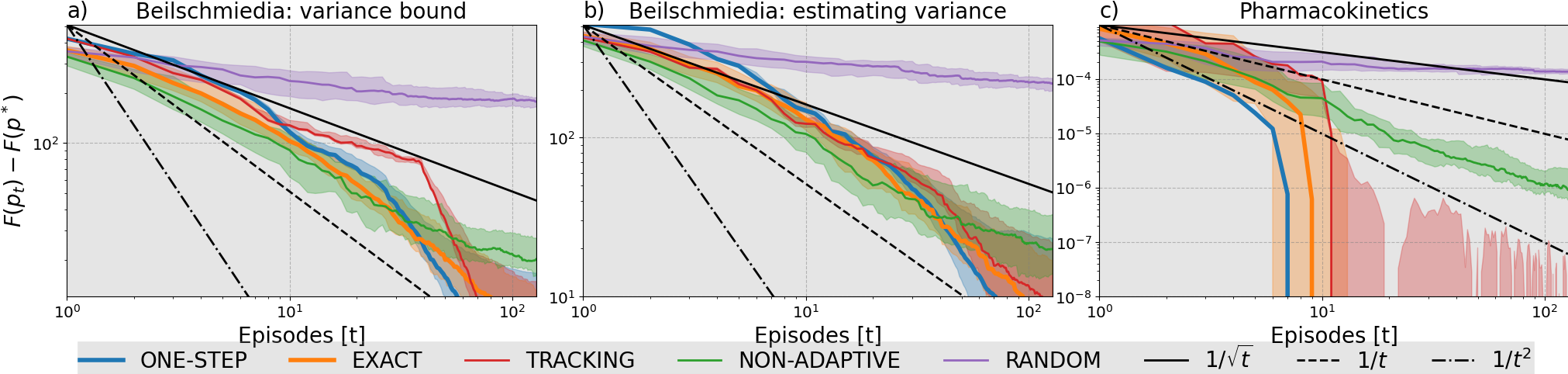}
	\caption{Beilschmiedia, and pharmacokinetics experiment: We report median of 20 reruns with $10\%$ and $90\%$ quantiles. In a) and b) we report the objective value as a function of episodes $t$ for $\sigma_{x,a}$ with a global upper bound, and one where we update the confidence set over $\sigma_{x,a}$ after each episode, respectively. Notice that we can converge to the objective faster in b) than a) due to better exploration policy (as it is more informed about values of variance). Note that \textsc{one-step} and \textsc{exact} are dominating the convergence. In c) we report the same objective for the pharmacokinetic example. \textsc{tracking} works very well for this problem, most likely due to a favorable mixture decomposition. }
	\label{fig:bels}
	\vspace{-0.9cm}
\end{figure*}
%\vspace{-0.15cm}
%\begin{condition}[Closeness condition]\label{def:condition}
%	$\exists t'$ s.t. $\norm{\eta^* - \eta_{t'}}_{\mP} \leq \frac{1}{t'+1}$, and $\norm{\eta}_{\mP} = \arg\inf_{c\geq 0}\{ c\eta \in \mP \}$.
%\end{condition}
%\vspace{-0.15cm}
%This condition guarantees that we can always find a policy with $d_\pi$ which can correct the $\eta_t$ in one steps such that in expectation $\bZ \eta_t + \frac{1}{1+t}(d_\pi -\bZ \eta_t)=\bZ \eta^*$. The norm $||\cdot||_{\mP}$ is known as \emph{gauge norm}.

\begin{theorem} [Convergence \textsc{exact} and \textsc{one-step}]\label{thm:high-probability}
	Under Assumption \ref{ass:regularity}, with the smoothed objective using $\mu = \sqrt{\log T/T}$, the \textsc{exact} and \textsc{one-step} variants satisfy,
%	\begin{equation*}
%		F(\eta_T) - F(\eta^*) \leq \begin{cases} \mO\left(B\sqrt{\frac{\log\left({1}/{\delta}\right)}{T}}\right) & \\ \mO(\frac{B}{T}\sqrt{2\log(1/\delta})  \end{cases} \quad 		\mE_{\tau_i \sim \pi_i}\left[F(\eta_T)\right] - F(\eta^*) \leq \begin{cases} \mO\left(\frac{L \log T}{T}\right)  & \\ \mO(\frac{L}{T^2}) &  \mbox{if} ~ \text{Condition}~\ref{def:condition}  \end{cases}
%	\end{equation*}
%	and the \textsc{one-step} variant satisfies,
	\begin{equation*}
		F(\eta_T) - F(\eta^*) \leq  \mO\left(\frac{1}{T}\sqrt{\sum_{t=1}^{T} \norm{\nabla F(\eta_t)}_\infty^2 \log(T/\delta)} \right) 
	\end{equation*}
with $1-\delta$ probability over transition model and policy. 
\end{theorem}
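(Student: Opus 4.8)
The plan is to read Algorithm~\ref{alg:1} as a \emph{stochastic Frank-Wolfe} scheme on the polytope $\mD$, run on the smoothed objective and with the twist that, instead of moving to the deterministic linear-minimization point, one moves to a random vertex $\bZ\delta_{\tau_t}$ whose conditional mean is the visitation of the deployed policy. Work in state-action space: set $d_t:=\bZ\eta_t$, $\hat d_t:=\bZ\delta_{\tau_t}$, $\gamma_t:=1/(t+1)$, so $d_{t+1}=(1-\gamma_t)d_t+\gamma_t\hat d_t$, and $h_t:=F(\eta_t)-F(\eta^*)=U(d_t)-U(d^\star)$ with $d^\star:=\bZ\eta^*$. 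Let $\mathcal F_t$ be the $\sigma$-algebra of the first $t$ episodes, so $\pi_t$ (hence $\bar d_t:=d_{\pi_t}$) is $\mathcal F_t$-measurable, $\hat d_t$ is not, and $\mE[\hat d_t\mid\mathcal F_t]=\bar d_t$. The first step is the \emph{one-step descent guarantee}
\[
\langle \nabla U(d_t),\,\bar d_t-d_t\rangle \le -h_t+\varepsilon_t .
\]
For the \textsc{one-step} variant $\varepsilon_t=0$: the prefactor $\tfrac1{t+1}$ in $\nabla G_t(\bZ\eta_t)=\tfrac1{t+1}\nabla U(d_t)$ cancels, so $\bar d_t$ equals $\arg\min_{d\in\mD}\langle\nabla U(d_t),d\rangle$, and then $\langle\nabla U(d_t),\bar d_t-d_t\rangle\le\langle\nabla U(d_t),d^\star-d_t\rangle\le U(d^\star)-U(d_t)=-h_t$ by convexity. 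For the \textsc{exact} variant, combine optimality of $q_t$ for $G_t$ (so $G_t(q_t)\le G_t(\eta^*)\le(1-\gamma_t)U(d_t)+\gamma_tU(d^\star)$ by convexity of $U$) with a one-sided smoothness bound on $G_t$ at $d_t$; this yields $\varepsilon_t=O(L_\mu\gamma_t)$, and the finite accuracy of the inner convex-RL loop only inflates $\varepsilon_t$ by its target tolerance, which we take $\le\gamma_t$. Here one uses that marginalizing the mixture policy preserves its \emph{state-action visitation} $d_{\pi_{\mix}}$ (even though it does not preserve the trajectory law), which is all that enters through $\bZ$.

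Second, plug the descent guarantee into the local-smoothness inequality \eqref{eq:convexity-smoothness} applied with $\eta=\eta_t$, $h=\hat d_t-d_t$ (or run the whole argument in $d$-space with the induced smoothness of $U$), using $\|\hat d_t-d_t\|_2\le\sqrt2$:
\[
h_{t+1}\le(1-\gamma_t)h_t+\gamma_t\,\xi_t+\gamma_t\varepsilon_t+O(L_\mu\gamma_t^2),\qquad \xi_t:=\langle\nabla U(d_t),\hat d_t-\bar d_t\rangle .
\]
By construction $\mE[\xi_t\mid\mathcal F_t]=0$ and $|\xi_t|\le\|\nabla U(d_t)\|_\infty\|\hat d_t-\bar d_t\|_1\le 2B_t$ with $B_t=O(\|\nabla F(\eta_t)\|_\infty)$. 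Multiplying by $(t+1)$ turns the recursion into $(t+1)h_{t+1}\le t\,h_t+\xi_t+O(L_\mu\gamma_t)$, and telescoping over $t=1,\dots,T-1$ gives
\[
T\,h_T\le O\!\big(h_1+L_\mu\log T\big)+\sum_{t=1}^{T-1}\xi_t .
\]

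Third, control the martingale $\sum_t\xi_t$. Its increments are bounded by $2B_t$ with $B_t$ $\mathcal F_t$-measurable, so Azuma-Hoeffding (or Freedman, via $\mE[\xi_t^2\mid\mathcal F_t]\le 4B_t^2$) gives, with probability $\ge1-\delta$, $\sum_t\xi_t\le O\!\big(\sqrt{\sum_{t=1}^T\|\nabla F(\eta_t)\|_\infty^2\log(T/\delta)}\big)$, the $\log T$ inside the logarithm coming from a union bound (or a stopping-time version) over $t$. Dividing by $T$, $h_T\le O(L_\mu\log T/T)+O\!\big(\tfrac1T\sqrt{\sum_t\|\nabla F(\eta_t)\|_\infty^2\log(T/\delta)}\big)$. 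Finally run the algorithm on the Nesterov smoothing $F_\mu$ \citep{Nesterov2005} with $\mu=\sqrt{\log T/T}$: then $L_\mu\le 1/\mu$, so $L_\mu\log T/T=O(\sqrt{\log T/T})$; the bias satisfies $0\le F-F_\mu\le O(\mu)=O(\sqrt{\log T/T})$ and $F_\mu(\eta^*_\mu)\le F_\mu(\eta^*)$, which converts the bound on $F_\mu(\eta_T)-F_\mu(\eta^*_\mu)$ into one on $F(\eta_T)-F(\eta^*)$; and $\|\nabla F_\mu(\eta_t)\|_\infty$ is comparable to $\|\nabla F(\eta_t)\|_\infty$. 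Both $O(\sqrt{\log T/T})$ contributions are absorbed into (or of the same order as) the stated data-dependent term, giving the theorem.

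The main obstacle is the interaction between the \emph{local} smoothness — the global constant is $\Theta(T/\lambda)$ and worthless — and the stochasticity: one must keep every quantity in the descent step evaluated within the $O(\gamma_t)$-neighbourhood of $\eta_t$ on which $L_{\eta_t,\gamma_t}$ is controlled, and choose $\mu$ so that the smoothing bias exactly balances the $L_\mu\log T/T$ drift, which forces $\mu=\Theta(\sqrt{\log T/T})$. Secondary care is needed (i) to treat the \textsc{exact} variant uniformly with \textsc{one-step} despite its mixture-policy output and inexact inner solve, and (ii) to push the martingale concentration to the $\ell_2$-of-gradient-norms form $\sqrt{\sum_t\|\nabla F(\eta_t)\|_\infty^2\log(T/\delta)}$ rather than settling for the cruder $\max_t\|\nabla F(\eta_t)\|_\infty\sqrt{T\log(T/\delta)}$.
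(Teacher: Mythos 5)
Your proposal is correct and follows essentially the same route as the paper's proof (Appendix~\ref{app:cnvx}, Theorem~\ref{thm:high}): a stochastic Frank--Wolfe recursion with step size $1/(1+t)$ on the Nesterov-smoothed objective, optimality of the linear oracle plus convexity for the descent step, multiplication by $(t+1)$ and telescoping to get the $L_\mu \log T/T$ drift, Azuma-type concentration with predictable bounds $\norm{\nabla F(\eta_t)}_\infty$ (Lemma~\ref{lemma:tracking2}), the bounded-gradient/bounded-smoothness lemmas for $F_\mu$, and $\mu=\sqrt{\log T/T}$ to balance bias and drift. The only deviations are cosmetic: you work in state-action space via $\bZ$ rather than in trajectory space (which in fact makes the zero-mean property of the noise under the marginalized mixture policy cleaner), and for the \textsc{exact} variant you invoke smoothness of $G_t$ where the paper's convexity argument (Eq.~\eqref{eq:notice}) suffices, costing only an extra $O(L_\mu\gamma_t^2)$ of the same order as the existing quadratic term.
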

Proofs are postponed to the Appendix \ref{app:cnvx}, and \citet{Nesterov2005} smoothing technique is reviewed in Appendix \ref{app:nesterov} for completness. With a reasonable upper bound on the gradient of $F$, the convergence is as $O(\frac{1}{\sqrt{T}})$ even for non-smooth objectives. 

However, the smoothing is not necessary as our experimental results point to. Particularly its use hampers the potentially fast convergence in expectation, which is the driving term in practice, especially for determinisitic Markov chains. Therefore, we state the next result in terms of the local Lipschitz constant $L_{\eta_t,1/t}$, which we conjecture, depend only logarithmically on $T$, leading to convergence $\mO(\frac{\log T}{T})$.
\begin{theorem}[Convergence \textsc{exact} and \textsc{one-step}]\label{thm:convergence-expectation}
	Under Assumption \ref{ass:regularity}, the \textsc{exact} and \textsc{one-step} variants satisfy
	\begin{equation*}
		\mE[F(\eta_t)] - F(\eta^*) \leq \mO\left(\frac{1}{t} \sum_{k=1}^T \frac{L_{\eta_k,1/k}}{1+k}\right)
	\end{equation*}
	for $t\leq T$ with expectation over transition model and policy. 
\end{theorem}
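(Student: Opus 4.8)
The plan is to treat both variants as a single stochastic Frank--Wolfe procedure with the \emph{forced} step sizes $\alpha_t=1/(1+t)$. First I would fix the filtration $\mathcal{F}_t$ generated by $\tau_1,\dots,\tau_{t-1}$ (equivalently by $\eta_t$, hence by $\pi_t$ and the target computed from it), and record the martingale identity $\mE[\bZ\delta_{\tau_t}\mid\mathcal{F}_t]=d_{\pi_t}\in\mD$: the episode trajectory is sampled from $\pi_t$, so its state--action occupancy equals that of $\pi_t$. Since $F(\eta)=U(\bZ\eta)$ and $\bZ\eta_{t+1}=\frac{t}{t+1}\bZ\eta_t+\frac1{t+1}\bZ\delta_{\tau_t}$, I would carry out the whole argument in ``$d$-space'', i.e.\ for $U$ at the iterates $d_t:=\bZ\eta_t$. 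This sidesteps the subtlety that the marginalized mixture policy need not reproduce a prescribed distribution over trajectories --- only its occupancy $\bZ q_t$ matters, and only $\bZ\eta$ enters the objective.

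Next is the descent step. Writing $d_{t+1}=d_t+\alpha_t(\bZ\delta_{\tau_t}-d_t)$ and applying local smoothness (Assumption~\ref{ass:regularity}) with $h=\bZ\delta_{\tau_t}-d_t$ gives
\[
U(d_{t+1})\le U(d_t)+\alpha_t\,\nabla U(d_t)^\top(\bZ\delta_{\tau_t}-d_t)+\frac{L_{d_t,\alpha_t}}{2}\,\alpha_t^2\,\norm{\bZ\delta_{\tau_t}-d_t}_2^2 .
\]
Taking $\mE[\,\cdot\mid\mathcal{F}_t]$, the curvature term is at most $\frac{D^2}{2}L_{d_t,\alpha_t}\alpha_t^2$ with $D$ the (finite) $\ell_2$-diameter of $\mD$ --- its vertices are occupancies of deterministic policies --- and the linear term becomes $\alpha_t\nabla U(d_t)^\top(d_{\pi_t}-d_t)$.

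Then I bound this gap by the suboptimality. For \textsc{one-step}, a chain-rule computation gives $\nabla_d G_t(d_t)=\frac1{t+1}\nabla U(d_t)$, so the first Frank--Wolfe vertex $\varpi_1=\pi_t$ obeys $d_{\pi_t}=\argmin_{d\in\mD}\nabla U(d_t)^\top d$; since $d^*:=\bZ\eta^*\in\mD$, convexity yields $\nabla U(d_t)^\top(d_{\pi_t}-d_t)\le U(d^*)-U(d_t)$. For \textsc{exact}, $d_{\pi_t}=\argmin_{d\in\mD}U(\frac{t}{t+1}d_t+\frac1{t+1}d)$; here I would instead bound the deterministic point $d'_{t+1}:=\frac{t}{t+1}d_t+\frac1{t+1}d_{\pi_t}$ by convexity, $U(d'_{t+1})\le\frac{t}{t+1}U(d_t)+\frac1{t+1}U(d^*)$, and then invoke smoothness a second time to pass from $d'_{t+1}$ to $d_{t+1}=d'_{t+1}+\frac1{t+1}(\bZ\delta_{\tau_t}-d_{\pi_t})$, a conditionally mean-zero perturbation, paying only an $\mO(\alpha_t^2)$ curvature term. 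In both cases, with $e_t:=F(\eta_t)-F(\eta^*)=U(d_t)-U(d^*)$,
\[
\mE[e_{t+1}\mid\mathcal{F}_t]\le\frac{t}{t+1}\,e_t+\frac{D^2}{2}\,\frac{L_{d_t,1/(t+1)}}{(t+1)^2}.
\]

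Finally I solve the recursion: taking full expectations and multiplying by $(t+1)$ gives $(t+1)\mE[e_{t+1}]\le t\,\mE[e_t]+\frac{D^2}{2}\frac{\mE[L_{d_t,1/(t+1)}]}{t+1}$; telescoping and dividing by $t$ yields $\mE[e_t]\le\frac1t\big(e_1+\frac{D^2}{2}\sum_{k=1}^{t-1}\frac{L_{\eta_k,1/k}}{1+k}\big)$, i.e.\ the claimed $\mO\big(\frac1t\sum_{k=1}^{T}\frac{L_{\eta_k,1/k}}{1+k}\big)$ once $e_1$ is absorbed into the $k=1$ summand and the local smoothness of $U$ at $d_k$ is identified, up to $\norm{\bZ}^2$, with that of $F$ at $\eta_k$. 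The main obstacle is the stochasticity: the argument must be carried out conditionally, with the curvature term absorbing the variance of $\bZ\delta_{\tau_t}$ about $d_{\pi_t}$ --- clean here only because $\mD$ is bounded. The most delicate point is the \textsc{exact} case, where the second smoothness application uses the local constant at $d'_{t+1}$, within $\mO(1/t)$ of $d_{t+1}$, so one needs neighbouring local constants to be comparable (implicit in the ``$\max_{\eta,\alpha}$'' form of the assumption), and one must run the convex-RL inner solve in \textsc{exact} to $\epsilon_t=\mO(1/(t+1)^2)$ accuracy so its error is absorbed into the existing $\mO(\alpha_t^2)$ term. The high-probability, Nesterov-smoothed statement (Theorem~\ref{thm:high-probability}) then follows by replacing $L_{d_t,\cdot}$ with $L_\mu\le 1/\mu$ and converting the resulting supermartingale into a concentration bound.
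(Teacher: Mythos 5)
Your proposal is correct and follows essentially the same argument as the paper's proof: a stochastic Frank--Wolfe analysis with the forced step size $1/(1+t)$ --- local-smoothness expansion, conditional expectation annihilating the martingale term, FW-vertex optimality plus convexity giving the $\tfrac{t}{t+1}$ contraction, then multiplying by $(t+1)$ and telescoping. The remaining differences are cosmetic: the paper runs the identical recursion directly in $\eta$-space (so the bound appears in the stated constants $L_{\eta_k,1/k}$ rather than in local constants of $U$, the occupancy-only dependence being automatic because $\nabla F$ factors through $\bZ$), and for \textsc{exact} it uses the gradient inequality together with $G_t(v_t)\le G_t(q_t)$ to reduce to the \textsc{one-step} bound, instead of your direct comparison with $d^*$ and a second smoothness expansion at the deterministic blend point --- both routes are valid.
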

Showing non-trivial bounds on the smoothness and gradient remains a challenging problem not addressed even in the context of classical experiment design \citep{Zhao2022}. The only existing approaches that address this follow a particular initialization scheme which is infeasible for kernelized regime, and specialized for D-design \citep{Todd2016}. Nevertheless, we conjecture that for example for D-design, the gradient and smoothness are quickly of order $\operatorname{polylog}(T)$. Even with a proper initialization, one of the main challenges hampering the analysis of the algorithm is that the algorithm has fixed step-size $1/(1+t)$ and is not monotonically decreasing like other analysis of Frank-Wolfe \citep{Carderera2021}.
 
The consequence of the above theorems is that the suboptimality decreases as $F(\eta_T) \leq F(\eta^*) + T^{-1/2}$, which in turn means that the second moment of residuals in scalarization (see Eq. \ref{eq:cov}), $s(\bE_L)\leq \frac{F(\eta^*)}{T} + \mO(T^{-3/2})$, where the leading term in terms of $T$ depends on the optimal constant. Analog statements hold in expectation albeit with $\mO(T^{-2})$.

\vspace{-0.2cm}
\section{Applications \& Experiments }\looseness -1 \label{sec:experiments} \vspace{-0.1cm}
We present three applications of the proposed framework. We plot the suboptimality gap $F(\eta_t) - F(\eta_*)$. Further details of the experiments can be found in Appendix \ref{app:experiments}. Overall, the empirical convergence tends to have two phases. First slow, but then quickly, the rate of convergence is of the order $\mO(\frac{1}{T})$ to $\mO(\frac{1}{T^2})$. In very limited cases the \textsc{exact} variant exhibits linear convergence (see Figs.~\ref{fig:gridworld} and \ref{fig:bels}). Note that for statistical estimation the suboptimality at $t=T$ is decisive.
\vspace{-0.1cm}
\paragraph{Synthetic Gridworlds}\looseness -1 Consider a grid of a fixed height and width as in Fig.~\ref{fig:banner}, where possible actions are to move up, down, left, and right by one cell. We consider three levels of increasing stochasticity. We assume that with probability $1-p$, the action we take is executed as expected, and with probability $p$ an arbitrary valid action (up, down, left, right) is played instead. The unknown target $f$ corresponds to a linear function and the features $\Phi(x,a)$ correspond to a unit vector in a different direction for each state type (cf., pictogram). Thus, the states of the same type are completely correlated while different ones are not at all correlated. Fig.~\ref{fig:gridworld} shows that the convergence of our algorithm is of order $\mO(1/T^2)$ faster than our theory predicts, and $\mO(1/\sqrt{T})$ for the \textsc{non-adaptive} variant as we expect. With increasing stochasticity, the performance of the \textsc{tracking} variant reduces as it cannot adapt to executed trajectories from prior steps, while our adaptive methods can.

\vspace{-0.1cm}
\paragraph{Species monitoring: path planning} \looseness -1
Suppose we want to estimate the rate of occurrence of a particular species. To model this application, we use occurrence data of \emph{Beilschmiedia}, a tree genus native to Africa, from \citet{Baddeley2015}. We model the occurrence rate $f$ as a positive valued RKHS function that determines the rate of the spatio-temporal Poisson point process. The states in this work are sectors $X$ which are sensed. (See the map Fig.~\ref{fig:bels-map} in Appendix \ref{app:experiments}). We adopt the approach from \citet{Mutny2021a}, where $f$ is estimated with a penalized least squares estimator from count observations. The number of counts in region $X$ is distributed according to $y|X \sim \text{Poisson}(\int_{x \in X} f(x)dx)$. A peculiar property of the Poisson distribution is that its variance and mean are the same. Hence, the values of $\sigma_{a,X}^2 = \int_{x\in X} f(x) dx$, are unknown due to the unknown $f$. To model $f$, we use a squared exponential kernel that takes the slope $s_{x,y}$ and height $h_{x,y}$ of a point $(x,y)$ as inputs, as these are predictive of the habitat of Beilschmiedia. We assume that a drone can cover a certain trajectory of length $H$ before it has to return to the starting position. We use the D-design objective to maximize the information about $f$ everywhere in the domain. As $\sigma_{X,a}$ is unknown, we either run our algorithm with a known absolute upper bound on $\sigma_{X,a}$ (due to norm bound on $||f||_{\mH_k}$) as in Fig. \ref{fig:bels}a) or, alternatively, use confidence sets from \citet{Mutny2021a} on $f$ for adaptively collected data points. These allow us to construct confidence sets on $\sigma_{X,a}$ after each episode, and we take the upper bound of those. Results are shown in Fig.\ref{fig:bels}b). In both cases, the adaptive variants work well and their convergence rate is consistent with the theory we proved. 
\vspace{-0.1cm}
\paragraph{Group pharmacokinetics}\looseness -1
The goal of pharmacokinetics is to identify the rate of drug transport between the digestion system and other organ systems such as the circulatory system (blood). Such studies are designed for any new drug candidates to understand their absorption rates. The experiments are performed by drawing blood at specific time intervals, and inferring the medication concentrations over time. From this, the experimenters find the corresponding parameters $\gamma$ for the differential equation, which generate these concentration trajectories. We assume that the more accurately we can estimate the drug concentration trajectory over time for each patient, the more accurately we can estimate the parameters of the differential equation that generated these drug concentrations -- and hence focus our design on estimating the trajectories. More realistically, we also assume that the concentration in the blood has two components $f_i(t) = c_b(t) + g_i(t) + \epsilon$ where $\epsilon \sim \mN(0,\sigma^2)$ and $t$ is time. Hereby, $c_b(t)$ is the blood concentration of interest following a differential equation with parameters $\gamma$, and $g_i(t)$ is a patient-specific random variation that contaminates our measurements and we are not interested in inferring it per se. The differential equation is linear and hence forms  a linear constraint on the estimation as $L_\gamma (c_b) = 0$, where $L_\gamma$ is the linear differential operator. Note that this means that $c_b$ is in the null space of operator $L_\gamma$, which we denote $\bC_\gamma$ and the uncertainty of $c_b$ for a fixed $\gamma$ is then due to initial conditions only. While we set the initial concentration in our experiment by specifying dosage, we will model the initial conditions as unknown with little uncertainty, and then design an objective that reduces this uncertainty for a fixed $\gamma$. Since $\gamma$ is unknown, we will use a \emph{robust designs} of the experimental design objective, considering the supremum (resp.~infimum if negative) over reasonable values of $\gamma \in \Gamma$ as $F(\eta) = \sup_{\gamma \in \Gamma} \Tr\left[(\bC_\gamma \left( \sum_{\tau \in \mT} \eta(\tau) I(\tau) + (\lambda/T)  \bI \right)^{-1} \bC_\gamma^\top) \right]$ to define $F$. The policy, in this case, is a medical plan when to draw blood with a constraint that blood can be drawn only $5$ times per patient (episode) with reasonable separations (3 time steps). Each patient corresponds to an episode. We report the results in Fig.~\ref{fig:bels}c).
\vspace{-0.2cm}
\section{Conclusion}\looseness -1 \vspace{-0.2cm}
We introduced a novel algorithm \textsc{markov-design} for experiment design in Markov chains, capable of finding a set of exploratory policies that converge to an optimal allocation over trajectories to learn an unknown function of the states and actions in a known Markov chain. The algorithm solves a sequence of convex RL problems, which are informed by the previous trajectories of the agent. We proved the convergence rate of the method and its superiority to other approaches. We demonstrated its empirical performance in real-world problems and hope that this work will open a new avenue to study controlled Markov chains from an experimental design perspective. 
\newpage
\bibliography{refs_mdp.bib}
\bibliographystyle{apalike}
\newpage
\appendix
\onecolumn

%\aistatstitle{Supplementary Material: Active Exploration via Experiment Design \\in Markov Chains}
\begin{center}
\Large{\bf{Supplementary Material: Active Exploration via Experiment Design \\in Markov Chains}} \hfil \\
\end{center}
\hfil \\
\hrule
%\vspace{-2cm}

% \section{Basic Lemmas}

% \begin{lemma}[Theorem 5.22 in \citet{Zhang2011}]\label{lemma:positive-ordering}
% 	Let $\bA$ and $\bB$ be two PSD matrices such that $\bA_{ij}  \leq \bB_{ij}$ then $\lambda_{\max}(\bA) \leq \lambda_{\max}(\bB)$. 
% \end{lemma}

% \begin{lemma} The gradient of $\nabla F(\eta)(\tau) \leq 0$ for all $\tau$
% \end{lemma}

% \begin{lemma}[]\label{lemma:positive-ordering2}
% 	The objective $F(\eta) = \log(\det(\bSigma(\eta))^{-1})$ with $\bSigma(\eta) = \sum_{\tau \in \mT} I(\tau) \eta(\tau) + \frac{\lambda}{T}$ holds, 
% 	\[ \max_{x, \norm{x}_2=1} (q)^\top \nabla^2 F(\eta)q \leq \max_{x, \norm{x}_2=1} (q^\top \nabla F(\eta))^2  \]
% \end{lemma}
% \begin{proof}
% First we verify that element-wise the $\nabla^2 F(\eta)_{\tau \tau'} \leq \nabla F(\eta)_{\tau}\nabla F(\eta)_{\tau'}$ for any $\tau,\tau'$ indexing the measure $\eta$. Let the former matrix be $\bA$ and the latter be $\bB$, we can use Lemma \ref{lemma:positive-ordering} to finalize the result. 

% then since each $q_i \geq 0$ positive as well due to $\mP \subset \Delta_p$. 
%\[ \sum_{ij}q_iq_j(\bA_{ij}) \leq \sum_{ij} (\bB_{ij})  \]

% \end{proof}
\section{Additional Results}

\subsection{Relating state-visitations and trajectories}\label{app:basic}
%\begin{lemma}
%	\begin{equation}\label{eq:factorial-bound}
	%		\sum_{i=1}^{n} \frac{i!}{n!} \leq 2
	%		\quad \text{and} \quad \sum_{i=1}^{n} \left(\frac{i!}{n!}\right)^2  \leq 2 \quad \text{and} \quad \sum_{i=1}^{n} \left(\frac{i!(i-1)!}{n!n!}\right) \leq \frac{2}{n}
	%	\end{equation}
%
%\end{lemma}
%\begin{proof}
%$\sum_{i=1}^{n} \frac{i!}{n!} = 1 + \frac{1}{n} + \frac{1}{n(n-1)} + \dots + \frac{1}{n!} = 1 + \frac{1}{n}( 1 + \frac{1}{n-1} \dots) \leq 1 + \frac{1}{n}(n-1) \leq 2.$
%For the second,
%$\sum_{i=1}^{n} \left(\frac{i!}{n!}\right)^2 = 1 + \frac{1}{n^2} + \frac{1}{n^2(n-1)^2} + \dots + \frac{1}{(n!)^2} = 1 + \frac{1}{n^2}(1 + \frac{1}{(n-1)^2}+ \dots) \leq 1 + \frac{(n-1)1}{n^2} \leq 2 $
%Thirdly,
%$\sum_{i=1}^{n} \left(\frac{i!(i-1)!}{n!n!}\right) = \frac{1}{n} + \frac{1}{n(n-1)(n-2)} + \dots \frac{1}{n!^2} = \frac{1}{n}(1 + \frac{1}{(n-1)(n-2)} + \dots ) \leq \frac{1}{n}(1 + \frac{1}{(n-1)}\frac{n-2}{(n-2)} ) \leq \frac{2}{n}$
%\end{proof}
We constructively prove that $F(\eta)$ and $U(d)$ can be related easily for scalarized information matrix objectives. 
\begin{lemma}[Conversion Mapping]\label{app:lemma:conversion}
	Let $F(\eta) = s(\bS(\eta))$, where $\bS(\eta) \in \mathbb{S}_+$  positive semi-definite cone of operators $\mH_k \rightarrow \mH_k$ s.t. $\bS(\eta) = \sum_{\tau \in \mT} \eta(\tau) I(\tau) + \lambda \bI$ and and $s:\mathbb{S}_+ \rightarrow \mR$, then there exists $U(d) = U(\bZ \eta) := F(\eta)$ which is equal to $U(d) = s(\sum_{a,x \in \mA\times \mX} d(a,x)\Phi(x,a)\Phi(x,a)^\top + \lambda \bI )$.
\end{lemma}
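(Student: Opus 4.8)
The plan is to prove the identity by a direct computation: expand $\bS(\eta)$ using the definition of the information matrix, interchange the (finite) order of summation over trajectories and over state–action pairs, and recognize the resulting inner sum as exactly the conversion map $\bZ$ from Eq.~\eqref{eq:operator}. Concretely, starting from $\bS(\eta) = \sum_{\tau\in\mT}\eta(\tau)I(\tau) + \lambda\bI$ and substituting $I(\tau) = \sum_{(a,x)\in\tau}\frac{1}{\sigma_{a,x}^2}\Phi(x,a)\Phi(x,a)^\top$ from Eq.~\eqref{eq:info-matrix}, I would rewrite the double sum as a sum over all $(a,x)\in\mA\times\mX$ weighted by the occurrence count $\#_{(a,x\in\tau)}$:
\[
\bS(\eta) = \sum_{(a,x)\in\mA\times\mX} \frac{1}{\sigma_{a,x}^2}\Phi(x,a)\Phi(x,a)^\top \Big( \sum_{\tau\in\mT}\#_{(a,x\in\tau)}\,\eta(\tau)\Big) + \lambda\bI .
\]
By the definition of $\bZ$ the parenthesized quantity is precisely $(\bZ\eta)(a,x) = d(a,x)$, hence $\bS(\eta) = \sum_{(a,x)} d(a,x)\Phi(x,a)\Phi(x,a)^\top + \lambda\bI$ (carrying the $1/\sigma_{a,x}^2$ weights, which may equivalently be absorbed into the feature map $\Phi$). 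Applying the scalarization $s$ to both sides and \emph{defining} $U(d) := s\big(\sum_{(a,x)} d(a,x)\Phi(x,a)\Phi(x,a)^\top + \lambda\bI\big)$ then gives $F(\eta) = s(\bS(\eta)) = U(\bZ\eta)$, which is the claim.

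The steps that need a word of justification are: (i) the interchange of summations — immediate, since every trajectory has finite horizon $H$ and, for any policy-induced $\eta$, all summands are positive operators, so rearrangement is valid termwise even in an infinite-dimensional RKHS; (ii) well-definedness of $U$ on the visitation polytope — because $F$ depends on $\eta$ only through $\bZ\eta$, any two trajectory distributions with the same state–action visitation yield the same objective, so $U$ is unambiguously defined on $\mathrm{range}(\bZ) \supseteq \mD$; and (iii) normalization — the average-visitation polytope $\mD$ uses $d = \frac1H\sum_h d_h$, so one should take the correspondingly normalized $\bZ$ (a harmless $1/H$ rescaling, which also rescales the regularizer), and I would state this explicitly to keep the lemma consistent with the definition of $\mD$.

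I do not anticipate a genuine obstacle: the lemma is essentially a bookkeeping identity. The only point worth stressing in the write-up is \emph{where} the additivity of $I(\tau)$ over the state–action pairs of $\tau$ (Eq.~\eqref{eq:info-matrix}) is used — it is precisely this additivity that makes $F$ factor through the linear map $\bZ$. If the per-trajectory objective depended on $\tau$ non-additively (e.g.\ via interactions between different steps within a trajectory), the reduction would fail; everything else is term rearrangement together with the definition of $\bZ$.
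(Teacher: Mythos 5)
Your proposal is correct and matches the paper's own argument: both define $U$ by the stated formula, substitute the definition of $I(\tau)$, swap the order of summation over trajectories and state--action pairs, and identify the inner sum with the conversion map $\bZ$ from Eq.~\eqref{eq:operator}. Your extra remarks on the $1/\sigma_{a,x}^2$ weights (which the paper silently absorbs into $\Phi$), well-definedness of $U$ on $\mathrm{range}(\bZ)$, and the $1/H$ normalization are harmless refinements of the same bookkeeping identity, not a different route.
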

\allowdisplaybreaks
\begin{proof}
	The proof goes by construction, where we first construct $U$ and then verify it satisfies the desired property. The whole proof relies only on the additive property of information operator $I(\tau)$.
	
	\begin{eqnarray}
		U(d_\pi) & = & s\left(\sum_{a,x \in \mA\times \mX} d_\pi(a,x)\Phi(x,a)\Phi(x,a)^\top + \lambda \bI\right) \\
		&\stackrel{\eqref{eq:operator}} = & s\left(\sum_{a,x \in \mA\times \mX} \sum_{\tau \in \mT}\bZ(a,s,\tau)\eta_\pi(\tau)\Phi(x,a)\Phi(x,a)^\top + \lambda \bI\right)\\
		& = & s\left(\sum_{a,x \in \mA\times \mX} \sum_{\tau \in \mT}\#(a,x \in \tau)\eta_\pi(\tau)\Phi(x,a)\Phi(x,a)^\top + \lambda \bI\right)\\
		& = & s\left( \sum_{\tau \in \mT} \eta_\pi(\tau) \sum_{a,x \in \mA\times \mX} \#(a,x \in \tau)\Phi(x,a)\Phi(x,a)^\top + \lambda \bI\right)\\
		&\stackrel{\eqref{eq:cov}} = & s\left( \sum_{\tau \in \mT} \eta_\pi(\tau) I(\tau) + \lambda \bI\right) = F(\eta_\pi)
	\end{eqnarray}

\end{proof}

\subsection{Smoothing Technique for Convex Optimization}\label{app:nesterov}
In this section, we will briefly review, by now, the classical technique of convex optimization for non-smooth functions by \citet{Nesterov2005}. The development outlined here is inspired by \citet{Beck2012}. Suppose the function $F(\eta)$ is either smooth with a large constant $L$ or non-smooth. In what follows, the non-smooth case is recovered by letting $L \rightarrow \infty$. A central object component in defining the smoothing is the convex conjugate of $F$, $F^*$, 

\[ F^*(\zeta) := \max_{\eta \in \mP} \eta^\top \zeta - F(\eta) \]
In order to define a smoothed function $F_\eta$, we perform the reverse operation with added regularization
\begin{equation}\label{eq:smoothing-nesterov}
F_\mu(\eta) := \max_{\zeta} \zeta^\top\eta -F^*(\zeta) - \frac{\mu}{2}\norm{\zeta}^2,
\end{equation}
where the domain of $\zeta$ is everywhere where $F^*$ is finite. Equivalently it can be represented via dual reformulation as 
\begin{equation}\label{eq:smoothing}
F_\mu(\eta) = \inf_{\zeta \in \mP} F(\zeta)+ \frac{1}{2\mu}\norm{\zeta - \eta}_2,
\end{equation}
which is sometimes referred to as Moreau proximal smoothing \citep{Beck2012}. Due to the above definition it is clear that $F_\mu(\eta) \leq F(\eta) \leq F_\mu(\eta) + \mu \max_{x,y \in \mP}\norm{x-y}_2 \leq F_\mu(\eta) + 2\mu$. Hence by choosing sufficiently small $\mu$, we can ensure that minimizing $F_\mu$ will closely minimize $F$. In addition, $F_\mu$ is smooth, differentiable, and smooth as summarized in the following lemma. 

\begin{lemma}[Bounded smoothness]\label{lemma:bounded-smoothness} Let $F_\mu$ be smoothing of $F$ as in Eq. \eqref{eq:smoothing}. The function $F_\mu$ is $\frac{L}{1+\mu L}$-smooth. 
\end{lemma}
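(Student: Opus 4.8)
The plan is to prove the bound by working from the proximal (Moreau-type) representation of $F_\mu$, namely $F_\mu(\eta) = \inf_{\zeta\in\mP} F(\zeta) + \frac{1}{2\mu}\norm{\zeta-\eta}_2^2$ in Eq.~\eqref{eq:smoothing} (equivalently the conjugate form Eq.~\eqref{eq:smoothing-nesterov}), and to exploit the co-coercivity (Baillon--Haddad) property of gradients of convex $L$-smooth functions. Before the main estimate I would record two standard facts. First, since $F$ is convex and $L$-smooth, $\nabla F$ is $\tfrac1L$-co-coercive: $\langle \nabla F(\zeta_1)-\nabla F(\zeta_2),\,\zeta_1-\zeta_2\rangle \ge \tfrac1L\norm{\nabla F(\zeta_1)-\nabla F(\zeta_2)}_2^2$. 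Second, for each $\eta$ the map $\zeta\mapsto F(\zeta)+\tfrac1{2\mu}\norm{\zeta-\eta}_2^2$ is strongly convex, so its minimizer $\zeta^\star(\eta)$ is unique, $F_\mu$ is differentiable, and $\nabla F_\mu(\eta)=\tfrac1\mu(\eta-\zeta^\star(\eta))=\nabla F(\zeta^\star(\eta))$; in particular the first-order optimality condition is $\eta=\zeta^\star(\eta)+\mu\nabla F(\zeta^\star(\eta))$.

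The main computation is then short. Fix $\eta_1,\eta_2$, write $\zeta_i=\zeta^\star(\eta_i)$, $u=\zeta_1-\zeta_2$, $v=\nabla F(\zeta_1)-\nabla F(\zeta_2)$, so that $\eta_1-\eta_2 = u+\mu v$ and $\nabla F_\mu(\eta_1)-\nabla F_\mu(\eta_2)=v$. Co-coercivity gives $\langle u,v\rangle \ge \tfrac1L\norm{v}_2^2$, and combined with Cauchy--Schwarz ($\norm{u}_2\norm{v}_2\ge\langle u,v\rangle\ge\tfrac1L\norm{v}_2^2$) it gives $\norm{u}_2\ge\tfrac1L\norm{v}_2$. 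Expanding,
\[
\norm{u+\mu v}_2^2 = \norm{u}_2^2 + 2\mu\langle u,v\rangle + \mu^2\norm{v}_2^2 \ge \Big(\tfrac1{L^2} + \tfrac{2\mu}{L} + \mu^2\Big)\norm{v}_2^2 = \Big(\tfrac1L+\mu\Big)^2\norm{v}_2^2 ,
\]
whence $\norm{\nabla F_\mu(\eta_1)-\nabla F_\mu(\eta_2)}_2 = \norm{v}_2 \le \tfrac{L}{1+\mu L}\,\norm{\eta_1-\eta_2}_2$, which is exactly $\tfrac{L}{1+\mu L}$-smoothness of $F_\mu$.

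I expect the main obstacle to be the rigorous justification of the second preliminary fact --- differentiability of $F_\mu$ and the identity $\nabla F_\mu(\eta)=\nabla F(\zeta^\star(\eta))$ --- and in particular reconciling the constrained form ($\zeta\in\mP$) of Eq.~\eqref{eq:smoothing} with the unconstrained conjugate form Eq.~\eqref{eq:smoothing-nesterov}. The cleanest fix is to absorb the constraint $\zeta\in\mP$ into $F$ via an indicator function (this does not change $L$-smoothness on the effective domain), or to run the argument entirely through conjugate duality: $F$ convex and $L$-smooth implies $F^\ast$ is $\tfrac1L$-strongly convex, hence $F^\ast+\tfrac\mu2\norm{\cdot}_2^2$ is $\big(\tfrac1L+\mu\big)$-strongly convex, and therefore $F_\mu=(F^\ast+\tfrac\mu2\norm{\cdot}_2^2)^\ast$ is $\tfrac1{1/L+\mu}=\tfrac{L}{1+\mu L}$-smooth by the standard strong-convexity/smoothness duality. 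As a sanity check, letting $L\to\infty$ (the non-smooth case) recovers the bound $\tfrac1\mu$ stated in the text, with the monotonicity inequality $\langle u,v\rangle\ge0$ replacing co-coercivity.
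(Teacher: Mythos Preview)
Your proposal is correct. Your principal argument takes a genuinely different route from the paper: you work from the Moreau/proximal representation and use the Baillon--Haddad co-coercivity of $\nabla F$ to bound $\norm{\nabla F_\mu(\eta_1)-\nabla F_\mu(\eta_2)}_2$ directly. The paper's proof is instead the pure conjugate-duality one-liner: $F$ convex and $L$-smooth $\Rightarrow$ $F^\ast$ is $\tfrac1L$-strongly convex $\Rightarrow$ $F^\ast+\tfrac\mu2\norm{\cdot}_2^2$ is $(\tfrac1L+\mu)$-strongly convex $\Rightarrow$ its conjugate $F_\mu$ is $\tfrac{1}{1/L+\mu}=\tfrac{L}{1+\mu L}$-smooth. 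You in fact sketch exactly this argument yourself at the end of the proposal as the ``cleanest fix'' for the constraint issue, so you have both proofs in hand. Your co-coercivity computation is more self-contained (it does not invoke the smoothness/strong-convexity duality as a black box) and makes the non-smooth limit $L\to\infty$ transparent via monotonicity; the paper's approach is shorter once that duality is taken for granted.
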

\begin{proof}
In order to show this we will use the definition in Eq. \eqref{eq:smoothing-nesterov} and the fact that convex conjugate has the property that a conjugate of $s$-strongly convex function is $1/s$-smooth and vice versa \citep{Borwein2005}.

Notice that $F^*$ is $1/L$ strongly-convex, then $F^* + \mu/2 \norm{\cdot}$ is $1/L + \mu$ strongly-convex. Conjugating these cause the function to be $\frac{1}{1/L + \mu} = \frac{L}{1+\mu L}$ smooth. As $L\rightarrow \infty$, the smoothness constant is $1/\mu$. 
\end{proof} 

As a corollary of the above definition, we also have a bound on the gradient that we will utilize later. 

\begin{lemma}[Bounded gradient\footnote{Special thanks to XY for the proof of this lemma.}]\label{lemma:bounded-gradient} Let $F_\mu$ be smoothing of $F$ as in Eq. \eqref{eq:smoothing}, then 
	\begin{equation}
		\norm{\nabla F_\mu(\eta)}_p \leq		\norm{\nabla F(\eta)}_p
	\end{equation}
for any $\eta \in \mP$, and $p \in [1,\infty]$. 
\end{lemma}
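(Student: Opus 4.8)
The plan is to characterize $\nabla F_\mu$ through the conjugate/prox representation already set up in this section and then to compare the two variational problems defining $\nabla F_\mu(\eta)$ and $\nabla F(\eta)$. First I would apply Danskin's theorem to \eqref{eq:smoothing-nesterov}: since $\zeta\mapsto\langle\zeta,\eta\rangle - F^*(\zeta) - \tfrac\mu2\norm{\zeta}_2^2$ is strongly concave its maximizer is unique, so
\[
\nabla F_\mu(\eta)\;=\;\zeta_\mu\;:=\;\argmax_{\zeta}\Big\{\langle\zeta,\eta\rangle - F^*(\zeta) - \tfrac{\mu}{2}\norm{\zeta}_2^2\Big\},
\]
while differentiating $F=F^{**}$, i.e. $F(\eta)=\max_\zeta\{\langle\zeta,\eta\rangle - F^*(\zeta)\}$, gives $\nabla F(\eta)=\zeta_0:=\argmax_\zeta\{\langle\zeta,\eta\rangle - F^*(\zeta)\}$ (read as a subgradient if $F$ is merely convex). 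Equivalently, via the Moreau form \eqref{eq:smoothing} \citep{Beck2012}, $\nabla F_\mu(\eta)=\tfrac1\mu(\eta-\zeta^\star)=\nabla F(\zeta^\star)$ with $\zeta^\star=\operatorname{prox}_{\mu F}(\eta)$; I will use whichever of the two forms is more convenient.

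Next I would pair the optimality inequalities. With $A(\zeta):=\langle\zeta,\eta\rangle - F^*(\zeta)$, optimality of $\zeta_0$ for $A$ and of $\zeta_\mu$ for $A-\tfrac\mu2\norm{\cdot}_2^2$ evaluated at the competitor $\zeta_0$ give $A(\zeta_0)\ge A(\zeta_\mu)$ and $A(\zeta_\mu)-\tfrac\mu2\norm{\zeta_\mu}_2^2\ge A(\zeta_0)-\tfrac\mu2\norm{\zeta_0}_2^2$; summing cancels the $A$-terms and leaves $\norm{\zeta_\mu}_2\le\norm{\zeta_0}_2$, which is the statement for $p=2$. The same first-order conditions, $\eta-\mu\zeta_\mu\in\partial F^*(\zeta_\mu)$ and $\eta\in\partial F^*(\zeta_0)$, together with monotonicity of $\partial F^*$ (equivalently, monotonicity of $\nabla F$ along the segment $[\zeta^\star,\eta]$), also yield the sharper relation $\langle\nabla F(\eta),\nabla F_\mu(\eta)\rangle\ge\norm{\nabla F_\mu(\eta)}_2^2$, i.e. the two gradients are positively aligned and $\nabla F(\eta)$ dominates $\nabla F_\mu(\eta)$ in the direction of $\nabla F_\mu(\eta)$.

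The step I expect to be the real obstacle is upgrading this from $\norm{\cdot}_2$ to a general $\norm{\cdot}_p$. By duality it suffices to bound $\langle\nabla F_\mu(\eta),v\rangle\le\norm{\nabla F(\eta)}_p$ for every $v$ with $\norm{v}_q=1$, $1/p+1/q=1$. The shift identity $F_\mu(\eta+tv)=\inf_{h}F(\eta+tv+h)+\tfrac1{2\mu}\norm{h}_2^2\le F(\zeta^\star+tv)+\tfrac1{2\mu}\norm{\zeta^\star-\eta}_2^2$ already gives $\langle\nabla F_\mu(\eta),v\rangle\le\langle\nabla F(\zeta^\star),v\rangle$, so the matter reduces to controlling $\nabla F(\zeta^\star)$ by $\nabla F(\eta)$ coordinatewise. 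The plan here is to run the homotopy $\mu\mapsto\zeta_\mu$, whose derivative from the stationarity equation is $\dot\zeta_\mu=-(\nabla^2F^*(\zeta_\mu)+\mu\bI)^{-1}\zeta_\mu$, and to show that each $\norm{\zeta_\mu}_p$ is non-increasing in $\mu$; for the concrete design objectives of Table~\ref{table:targets} this can alternatively be verified by hand, since there $\nabla F$ is an explicit spectral function of the information matrix of Lemma~\ref{app:lemma:conversion}, which is linear in $\eta$. I would therefore present the Euclidean bound as the clean general fact and obtain the $p$-general version from this additional structure, noting finally that $\norm{\nabla F_\mu(\eta)}_p\le\norm{\nabla F(\eta)}_p$ is exactly what allows the $\norm{\nabla F(\eta_t)}_\infty$ terms in Theorem~\ref{thm:high-probability} to be stated for the unsmoothed objective despite the algorithm being run on $F_\mu$.
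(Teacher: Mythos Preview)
Your $p=2$ argument via the paired optimality inequalities is correct and self-contained, and you already have in hand the key representation the paper uses: $\nabla F_\mu(\eta)=\tfrac1\mu(\eta-\zeta^\star)$ with $\zeta^\star=\operatorname{prox}_{\mu F}(\eta)=(\bI+\mu\nabla F)^{-1}\eta$. The paper, however, does not split into an easy Euclidean case and a hard general-$p$ case. It instead applies the algebraic identity
\[
\eta-(\bI+\mu\nabla F)^{-1}\eta \;=\;(\bI+\mu\nabla F)^{-1}\bigl((\bI+\mu\nabla F)\eta-\eta\bigr)\;=\;(\bI+\mu\nabla F)^{-1}\bigl(\mu\,\nabla F(\eta)\bigr),
\]
and then invokes non-expansiveness of the resolvent $(\bI+\mu\nabla F)^{-1}$ to get $\norm{\nabla F_\mu(\eta)}_p\le\norm{\nabla F(\eta)}_p$ for every $p$ in one line. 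You had the resolvent form but did not spot this factorization, which is the missing idea that makes the general-$p$ statement fall out immediately.

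Your proposed route to general $p$ is not complete. The homotopy derivative $\dot\zeta_\mu=-(\nabla^2F^*(\zeta_\mu)+\mu\bI)^{-1}\zeta_\mu$ involves an operator that is positive in the $\ell_2$ inner product, which controls $\tfrac{d}{d\mu}\norm{\zeta_\mu}_2^2$ but says nothing about $\tfrac{d}{d\mu}\norm{\zeta_\mu}_p^p$ for $p\neq2$ without additional structural assumptions you have not stated; and falling back on a case-by-case check for the objectives in Table~\ref{table:targets} would prove something weaker than the lemma as written. As it stands, your proposal establishes the claim only for $p=2$.
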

\begin{proof}
To prove this relation, we use representation in Eq. \eqref{eq:smoothing}. Note that $\nabla F_\mu(\eta) = \frac{1}{\mu} (\eta - \zeta^*)$, where $\zeta^*$ is where the infimum is realized. In particular, it holds that $\nabla F(\zeta^*) + \frac{1}{\mu}(\zeta^* - \eta ) = 0$, i.e.  $\zeta^* = (\bI + \mu \nabla F )^{-1}(\eta)$. The $\bI$ designates the identity operator. 

Now, 
\begin{eqnarray*}
	\norm{\nabla F_\mu (\eta)}_p & = & 	\frac{1}{\mu}\norm{\eta - \zeta^*}_p \\
	& = & \frac{1}{\mu}\norm{\eta - (\bI + \mu \nabla F)^{-1}\eta}_p \\
	& = & \frac{1}{\mu}\norm{(\bI + \mu \nabla F)^{-1}\left( (\bI + \mu \nabla F)\eta - \eta\right)}_p \\
	& \leq & \frac{1}{\mu}\norm{\left( (\bI + \mu \nabla F)\eta - \eta\right)}_p \\
	& = & \norm{\nabla F(\eta)}_p, 
\end{eqnarray*}
where in the fourth line we use that resolvent is non-expansive. 
\end{proof}

\subsection{A-design and D-design regularity}
The Assumption \ref{ass:regularity} is satisfied for the objectives in Table \ref{table:targets} with the exception of E-design which is non-smooth. Using the smoothing technique above one can make sure the conditions are satisfied for the optimization algorithm used in this work.

The proofs of the smoothness and convexity of A and V-design can be found in  Appendix, Lemma 8 of \citet{Borsos2020}. The convexity of D-design is clear from \citet{Pukelsheim2006}, the objective is also smooth in the regularized form, however, the constant has poor scaling. The following following lemma demonstrates it. 
\begin{lemma}\label{lemma:regularity}
	The objective $-\log\det\left( \sum_{x\in \mX} x x^\top \eta(x) + \lambda \bI \right)$ is $L$-smooth in terms of $\eta \in \Delta_{|\mX|}$, with $L = \frac{1}{\lambda^2}\lambda_{\max}((\bX\bX^\top)\circ(\bX\bX^\top))$, and additionally the $\lambda_{\max}(\bH(\eta)) \leq \norm{ \nabla F(\eta)}_2^2$, where $\bH$ is the Hessian.  
\end{lemma}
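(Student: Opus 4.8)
The plan is to differentiate $F(\eta) := -\log\det\big(M(\eta)\big)$, where $M(\eta) := \sum_{x\in\mX} xx^\top\eta(x) + \lambda\bI$, twice with respect to $\eta$, recognize the Hessian as a Hadamard square of a Gram matrix, and bound its operator norm in the two claimed ways.

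First I would record that $M(\eta)\succeq\lambda\bI$ for every $\eta\in\Delta_{|\mX|}$, hence $0\prec M(\eta)^{-1}\preceq\lambda^{-1}\bI$. Using the standard identities $\partial_{\eta(x)}\log\det M = \Tr(M^{-1}xx^\top) = x^\top M^{-1}x$ and $\partial_{\eta(y)}M^{-1} = -M^{-1}yy^\top M^{-1}$, the chain rule gives
\[
\nabla F(\eta)_x = -x^\top M^{-1}x, \qquad \bH(\eta)_{xy} = \big(x^\top M^{-1}y\big)^2 .
\]
Writing $\mathbf{G}_{xy} := \langle M(\eta)^{-1/2}x,\, M(\eta)^{-1/2}y\rangle$, we see $\bH(\eta) = \mathbf{G}\circ\mathbf{G}$, a Hadamard square of a positive semidefinite Gram matrix, so $\bH(\eta)\succeq 0$ by the Schur product theorem --- which already re-derives convexity of $F$.

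For the smoothness constant I would pass to a Kronecker (tensorization) representation, which is the one non-routine step: letting $\mathbf{Y}$ be the matrix whose column indexed by $x\in\mX$ is $x\otimes x$, one checks entrywise that $\bH(\eta) = \mathbf{Y}^\top\big(M(\eta)^{-1}\otimes M(\eta)^{-1}\big)\mathbf{Y}$ and $(\bX\bX^\top)\circ(\bX\bX^\top) = \mathbf{Y}^\top\mathbf{Y}$. Since $M(\eta)^{-1}\preceq\lambda^{-1}\bI$ implies $M(\eta)^{-1}\otimes M(\eta)^{-1}\preceq\lambda^{-2}\bI$ (the difference $\lambda^{-2}\bI - M^{-1}\otimes M^{-1}$ telescopes into a sum of Kronecker products of PSD matrices), we obtain the Loewner bound $\bH(\eta)\preceq\lambda^{-2}\mathbf{Y}^\top\mathbf{Y} = \lambda^{-2}(\bX\bX^\top)\circ(\bX\bX^\top)$, uniformly in $\eta$. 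Taking $\lambda_{\max}$ of both sides yields $\|\bH(\eta)\|\le L$ with $L = \lambda^{-2}\lambda_{\max}\big((\bX\bX^\top)\circ(\bX\bX^\top)\big)$, and integrating this uniform Hessian bound (via a second-order Taylor expansion with integral remainder) along any chord $\eta'-\eta$ inside $\Delta_{|\mX|}$ gives the stated $L$-smoothness. I would emphasize that the tensorization is needed precisely because the Hadamard square is not operator monotone, so one cannot simply substitute $M^{-1}\preceq\lambda^{-1}\bI$ into $\mathbf{G}\circ\mathbf{G}$; this is also the main obstacle, since the naive route only gives the cruder $\lambda_{\max}(\bX\bX^\top)^2$ in place of $\lambda_{\max}\big((\bX\bX^\top)\circ(\bX\bX^\top)\big)$.

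Finally, for the gradient-dependent bound I would use the trace: since $\bH(\eta)\succeq 0$, all its eigenvalues are nonnegative, so $\lambda_{\max}(\bH(\eta))\le\Tr\bH(\eta)$; and $\Tr\bH(\eta) = \sum_{x\in\mX}(x^\top M^{-1}x)^2 = \sum_{x\in\mX}\nabla F(\eta)_x^2 = \norm{\nabla F(\eta)}_2^2$, which is exactly the claim. One minor point worth flagging is the kernelized/infinite-dimensional regime, where $\bX\bX^\top$ should be read as the Gram operator and $M^{-1}\otimes M^{-1}$ as a Hilbert--Schmidt operator on the tensor space; the inequalities carry over verbatim, but I would keep the statement finite-dimensional as written and only remark on the extension.
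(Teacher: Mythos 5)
Your proposal is correct, and its core — computing $\nabla F(\eta)_x = -x^\top M(\eta)^{-1}x$ and $\bH(\eta)_{xy} = (x^\top M(\eta)^{-1}y)^2$, i.e.\ the Hadamard square of the Gram matrix $\bX\bV^{-1}\bX^\top$ — coincides with the paper's. Where you diverge is in how the two bounds are obtained, and in both places your argument is more explicit than (and in one place complementary to) the paper's. For the smoothness constant, the paper stops at the Hessian formula and leaves the step to $L = \lambda^{-2}\lambda_{\max}\bigl((\bX\bX^\top)\circ(\bX\bX^\top)\bigr)$ implicit; your Kronecker factorization $\bH = \mathbf{Y}^\top (M^{-1}\otimes M^{-1})\mathbf{Y}$ with $\mathbf{Y}^\top\mathbf{Y} = (\bX\bX^\top)\circ(\bX\bX^\top)$ and $M^{-1}\otimes M^{-1}\preceq \lambda^{-2}\bI$ supplies exactly the missing justification, and your remark that one cannot simply push $M^{-1}\preceq\lambda^{-1}\bI$ through the Hadamard square is on point — an entrywise substitution would indeed fail. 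For the bound $\lambda_{\max}(\bH)\leq\norm{\nabla F(\eta)}_2^2$, the paper argues via entrywise domination $0\le \bH_{ij}\le \nabla_iF\,\nabla_jF$ (Cauchy--Schwarz in the $M^{-1}$ inner product) combined with a monotonicity-of-$\lambda_{\max}$ result for nonnegative matrices (Zhang, Thm.\ 5.22) applied to the rank-one matrix $\nabla F\,\nabla F^\top$, whereas you use $\bH\succeq 0$ and $\lambda_{\max}(\bH)\le\Tr\bH = \sum_x (x^\top M^{-1}x)^2 = \norm{\nabla F(\eta)}_2^2$. Both are valid; your trace route is more elementary and self-contained (no external comparison theorem), while the paper's entrywise route gives the slightly stronger structural fact that $\bH$ is dominated entrywise by $\nabla F\,\nabla F^\top$.
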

\begin{proof}
	The Hessian of the above objective is $\bH = (\bX \bV^{-1} \bX^\top) \circ (\bX \bV^{-1} \bX^\top) $, where $\bV = \bX^\top \bD(\eta) \bX + \lambda \bI$ and $\circ$ refers to the Hadamard product. This can be shown by taking the derivative, linearity of trace, and $\partial \bA^{-1} = \bA^{-1} (\partial \bA )\bA^{-1}$ for symmetric matrix. 
	
	The last part is by noting that all elements of $\bH_{ij}$ are positive, and strictly smaller than $\nabla_i F\nabla_jF$. The result follows by using Theorem 5.22 from \citep{Zhang2011}.
\end{proof}

\section{Formal Results and Proofs}
\subsection{Concentration Results}
First, we show that empirical measures constructed via sampling from a probability distribution can closely track its probability distributions using Azuma inequality. We will use this result in the convergence proofs later in Appendix \ref{app:cnvx}.
% \begin{lemma}[Empirical measure concentration]\label{lemma:tracking} Let $\{\eta_t\}_{t=1}$ be an adapted sequence of random probability distributions over discrete domain $\mX$ with respect to filtration $\mF_{t-1}$. Also, let $x_t \sim \eta_t$, and $\hat{\eta_t} = \frac{1}{t}\sum_{t=1}^{t}\mathbf{1}_{x=x_t}$, then
% 	\begin{eqnarray}
% 			\pP\left(\sup_{x\in \mX} \left|\hat{\eta}_T(x) - \frac{1}{T}\sum_{t=1}^T\eta_t(x)\right| \geq \sqrt{\frac{2}{T}\log\left(\frac{|\mX|}{\delta}\right)}
% 			\right) \leq \delta
% 		\end{eqnarray}
% \end{lemma}
% \begin{proof}
% 	Let $Z_t = \mathbf{1}_{(x_t = x)}$, $W_t = Z_t - (\eta_t)_x$. Since, $\mE[W_t|\mF_{t-1}] = 0$, and  each $W_t$ is bounded as $|W_t|\leq 1$, $W_t$  is martingale difference sequence. Consequently, we can use generalized Azuma inequality to bund the sum,
% 	\[ \pP(|\sum_{t=1}^{T} W_t| \geq \epsilon) \leq \exp\left(-\frac{\epsilon^2}{2T}\right). \]
% 	Setting $\delta/|\mX| = \exp\left(-\frac{\epsilon^2}{2T}\right)$, and using union bound we arrive at the result.
% \end{proof}

\begin{lemma}[Empirical measure concentration (linear)]\label{lemma:tracking2} Let $\{\eta_t\}_{t=1}$ be an adapted sequence of probability distributions on $\mX$, $\mP(\mX)$ with respect to filtration $\mF_{t-1}$. Likewise let $\{f_t\}_{t=1}$ be an adapted sequence of linear functionals $f_t:\mP(\mX) \rightarrow \mR$ s.t. $\norm{f_t}_\infty \leq B_t$. Also, let $x_t \sim \eta_t$, and $\delta_t(x) = \mathbf{1}_{x_t=x}$, then
	\begin{eqnarray}
		\pP\left( \Big|\sum_{t=1}^{T} f_t(\delta_t - \eta_t)\Big| \geq \sqrt{2\sum_{t=1}^{T}B_t^2\log\left(\frac{2}{\delta}\right)}
		\right) \leq \delta
	\end{eqnarray}
\end{lemma}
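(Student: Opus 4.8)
The plan is to recognize this as a standard martingale concentration argument and apply the Azuma--Hoeffding inequality to the martingale difference sequence $D_t := f_t(\delta_t - \eta_t)$. First I would verify that $\{D_t\}_{t=1}^T$ is indeed a martingale difference sequence with respect to the filtration $\{\mathcal{F}_t\}$: since $\eta_t$ is $\mathcal{F}_{t-1}$-measurable and $x_t \sim \eta_t$, we have $\mathbb{E}[\delta_t \mid \mathcal{F}_{t-1}] = \eta_t$ (as distributions/vectors), and because $f_t$ is linear and $\mathcal{F}_{t-1}$-measurable, $\mathbb{E}[D_t \mid \mathcal{F}_{t-1}] = f_t(\mathbb{E}[\delta_t \mid \mathcal{F}_{t-1}] - \eta_t) = f_t(\eta_t - \eta_t) = 0$. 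So $S_T := \sum_{t=1}^T D_t$ is a martingale.

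Next I would establish the almost-sure bound on the increments needed for Azuma. Writing $f_t$ as acting by $f_t(\nu) = \sum_{x} f_t(x)\,\nu(x)$ with $\|f_t\|_\infty = \max_x |f_t(x)| \le B_t$, one has $|D_t| = |f_t(x_t) - \sum_x f_t(x)\eta_t(x)| \le 2B_t$, since both $f_t(x_t)$ and the expectation lie in $[-B_t, B_t]$. Actually, a slightly sharper route is to note $D_t$ lies in an interval of length at most $2B_t$ conditionally (it ranges over $f_t(x) - \mathbb{E}_{\eta_t}[f_t]$ for the possible realizations $x$, all contained in $[-B_t,B_t] - \mathbb{E}_{\eta_t}[f_t]$, an interval of width $\le 2B_t$), which is exactly what the conditional Hoeffding/Azuma bound consumes.

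Then I would invoke Azuma's inequality in its form for martingales with bounded differences $|D_t| \le c_t$ (here $c_t = B_t$ if we use the width-$2B_t$ interval version, or $c_t = 2B_t$ with the cruder bound): for any $\lambda > 0$,
\[
\pP\left( |S_T| \ge \lambda \right) \le 2\exp\left( -\frac{\lambda^2}{2\sum_{t=1}^T c_t^2} \right).
\]
Setting the right-hand side equal to $\delta$ and solving for $\lambda$ gives $\lambda = \sqrt{2 \sum_{t=1}^T c_t^2 \log(2/\delta)}$, which matches the claimed bound $\sqrt{2\sum_{t=1}^T B_t^2 \log(2/\delta)}$ provided we use the width-based constant $c_t = B_t$ (or, equivalently, absorb the factor into the statement). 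I would present the width-of-interval version of conditional Hoeffding to get exactly the stated constant.

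I do not expect a genuine obstacle here — this is a textbook application. The only point requiring a little care is the choice of increment constant: getting $B_t$ rather than $2B_t$ inside the sum requires using the sharp conditional Hoeffding lemma (a bounded random variable with conditional mean zero supported in an interval of length $\ell$ is sub-Gaussian with parameter $\ell/2$), rather than the naive $|D_t| \le 2B_t$ bound. I would state that lemma explicitly (or cite it) and then the rest is immediate. A secondary minor point is being precise that $f_t$ and $\eta_t$ are $\mathcal{F}_{t-1}$-adapted while $\delta_t$ introduces the fresh randomness at step $t$, so that the tower property applies cleanly; this is just bookkeeping with the filtration.
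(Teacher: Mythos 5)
Your proposal is correct and follows essentially the same route as the paper: identify $D_t=f_t(\delta_t-\eta_t)$ as a martingale difference sequence via the tower property and linearity of $f_t$, then apply the two-sided Azuma--Hoeffding inequality for increments confined to an interval of width at most $2B_t$ and solve for the deviation level. If anything, you are slightly more careful than the paper's write-up on the increment bound (the paper's H\"older step nominally gives $2B_t$ rather than $B_t$, but it invokes the interval-width form of Azuma, so the final constant coincides with yours).
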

\begin{proof}
	Let $Q_t = f_t(\delta_{t} - \eta_t)$. In other words $Q_t = \int_{x \in \mX} f_t(x) (\delta_t(x) - \eta_t(x))dx$. Since,
	\begin{eqnarray}
		\mE[Q_t|\mF_{t-1}] & = & \mE[f_t(\delta_{t}(x) - \eta_t(x))|\mF_{t-1}] \stackrel{\text{lin.}} = f_t(\mE[\delta_{t}(x) - \eta_t(x)|\mF_{t-1}]) \\
		& = & f_t(\eta_t(x) - \eta_t(x)) = f_t(0) = 0.
	\end{eqnarray}
	and each $Q_t$ is bounded as $|Q_t|\leq \norm{f_t}_\infty \norm{\delta_t - \eta_t}_1\leq $
	$\norm{f_t} \leq B_t$, $Q_t$  it is martingale difference sequence. Consequently, we can use the generalized Azuma-Hoeffding \citep[Corr. 2.20]{Wainwright2019} inequality to control the sum,
	\[ \pP(|\sum_{t=1}^{T} Q_t| \geq \epsilon) \leq 2\exp\left(-\frac{2\epsilon^2}{4\sum_{t=1}^{T}B_t^2}\right), \]
	Setting $\delta/2 = \exp\left(-\frac{2\epsilon^2}{4\sum_{t=1}^TB_t^2}\right)$, gives $\epsilon^2 = 2\log(2/\delta)\sum_{t=1}^TB_t^2$.
\end{proof}

\begin{lemma}[Hoeffding on Hilbert space \citep{Lugosi2004}]\label{lemma:hilbert-hoeffding}
	Let $X_i \in L_2$ be zero mean s.t. $\norm{X_i}_2\leq s$, then
	for any $t \geq \sqrt{Ns^2}$
	\[ P( \norm{S_N} \geq t )\leq \exp(-(t-\sqrt{Ns^2})^2/(2(Ns^2)))\]
\end{lemma}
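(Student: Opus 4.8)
The plan is to prove this Hilbert-space version of Hoeffding's inequality by reducing it to a one-dimensional concentration statement about the scalar random variable $\norm{S_N}$, where $S_N=\sum_{i=1}^N X_i$. Two ingredients suffice: first, an $L^2$ estimate bounding the ``center'' $\mE\norm{S_N}$ by $\sqrt{Ns^2}$ via orthogonality of the zero-mean increments; and second, a sub-Gaussian tail for the fluctuation $\norm{S_N}-\mE\norm{S_N}$ obtained from the bounded-differences (McDiarmid) inequality applied to the real-valued function $g(x_1,\dots,x_N)=\norm{\sum_{i=1}^N x_i}$. Combining the two and shifting the threshold by the bound on $\mE\norm{S_N}$ yields exactly the stated inequality and clarifies why the hypothesis $t\ge\sqrt{Ns^2}$ is imposed.

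\emph{Step 1 (bounding the center).} By Jensen's inequality, $\mE\norm{S_N}\le\left(\mE\norm{S_N}^2\right)^{1/2}$. Expanding the squared norm as a double sum of inner products and using that the $X_i$ are independent (or form a martingale difference sequence) and zero-mean, every cross term $\mE\langle X_i,X_j\rangle$ with $i\ne j$ vanishes, so $\mE\norm{S_N}^2=\sum_{i=1}^N\mE\norm{X_i}^2\le Ns^2$, where the last step uses $\norm{X_i}\le s$ almost surely. Hence $\mE\norm{S_N}\le\sqrt{Ns^2}$.

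\emph{Step 2 (concentration around the center).} The map $g(x_1,\dots,x_N)=\norm{\sum_i x_i}$ is $1$-Lipschitz in each argument with respect to the Hilbert norm, and since $\norm{X_i}\le s$ almost surely, replacing a single coordinate alters $g$ by at most $2s$ by the triangle inequality. Thus $g$ satisfies the bounded-differences property with constants $c_i=2s$, and McDiarmid's inequality gives, for all $u\ge 0$,
\[
  P\!\left(\norm{S_N}-\mE\norm{S_N}\ge u\right)\le\exp\!\left(-\frac{2u^2}{\sum_{i=1}^N (2s)^2}\right)=\exp\!\left(-\frac{u^2}{2Ns^2}\right).
\]

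\emph{Step 3 (combine).} For $t\ge\sqrt{Ns^2}\ge\mE\norm{S_N}$, set $u=t-\mE\norm{S_N}\ge 0$; since the right-hand side above is non-increasing in $u\ge0$ and $u\ge t-\sqrt{Ns^2}$, we conclude $P(\norm{S_N}\ge t)\le\exp\!\left(-(t-\sqrt{Ns^2})^2/(2Ns^2)\right)$. The restriction $t\ge\sqrt{Ns^2}$ is exactly what is needed so that the shift $u$ is non-negative and the McDiarmid tail can be invoked. I expect no genuine difficulty in this proof: the only point requiring care is the probabilistic structure, since both Step 1 (vanishing cross terms) and Step 2 (McDiarmid) use independence of the $X_i$; if the lemma is intended for a martingale difference sequence, I would instead appeal to a Hilbert-space-valued Azuma/Pinelis inequality, which delivers the same sub-Gaussian tail with variance proxy $Ns^2$. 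Everything else is elementary.
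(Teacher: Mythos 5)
Your proof is correct, and it is exactly the standard argument behind the cited result: the paper itself gives no proof (it simply invokes \citet{Lugosi2004}), and the proof in that reference proceeds precisely as you do, bounding $\mE\norm{S_N}\le\sqrt{Ns^2}$ by orthogonality of zero-mean independent terms and then applying the bounded-differences inequality to $\norm{S_N}$ with constants $2s$. Your caveat about independence is also apt but harmless here, since the lemma is only used for i.i.d.\ resampling from a fixed policy, where the summands $\delta_{\tau_t}-\eta^*$ are indeed independent.
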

Pick $t = 2\sqrt{Ns^2\log(1/\delta)}$, then $	P( \norm{S_N} \geq \sqrt{2Ns^2\log(1/\delta)} )\leq \exp(-\log(1/\delta)) = \delta$

\subsection{Resampling convergence: Proofs}\label{app:resampling}
First, we prove the saturation of the resampling, \textsc{non-adaptive}, algorithm. Namely, given a policy $\pi^*$ which induces the optimal $d^*$ and hence optimal $\eta^*$, how quickly does the empirical $\eta_T = \frac{1}{T}\sum_{t=1}^{T}\delta_{\tau_t}$ converge to $\eta^*$? The proof trivially uses Lemma \ref{lemma:tracking2} and \ref{lemma:hilbert-hoeffding}, to show concentration. 

\begin{proposition}\label{prop:resampling}
	Under Assumption \ref{ass:regularity}, for resampling oracle with $T$ resamplings, and $\eta_T = \frac{1}{T}\sum_{t=1}^T \delta_{\tau_t}$,
		\[ F(\eta_T) - F(\eta^*) \leq  ||\nabla F(\eta_T)||_\infty\sqrt{ \frac{2\log(2/\delta)}{T}} \]
with probability $1-\delta$. For D-design, we can show that 
 		\[ \mE[F(\eta_T)] - F(\eta^*) \leq \frac{4||\nabla F(\eta_T)||_\infty^2}{T} \log(T) + 2 \]
where the expectation is over the sampling process. 
\end{proposition}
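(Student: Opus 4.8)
The plan is to reduce both statements to the first-order (linearization) inequality coming from convexity of $F$ and then invoke the two concentration lemmas. Since the fixed policy $\pi^*$ induces exactly $\eta^*$, the trajectories $\tau_1,\dots,\tau_T$ are i.i.d.\ draws from $\eta^*$, so in the notation of Lemma~\ref{lemma:tracking2} we have $\eta_t\equiv\eta^*$ and $\mE[\delta_{\tau_t}]=\eta^*$, and $\eta_T=\tfrac1T\sum_t\delta_{\tau_t}$ is an unbiased empirical estimate of $\eta^*$. By convexity of $F$ (Assumption~\ref{ass:regularity}),
\[
F(\eta_T)-F(\eta^*)\;\le\;\langle\nabla F(\eta_T),\,\eta_T-\eta^*\rangle\;=\;\frac1T\sum_{t=1}^{T}\nabla F(\eta_T)(\delta_{\tau_t}-\eta^*).
\]
For the high-probability claim I would apply Lemma~\ref{lemma:tracking2} with the linear functional $f_t:=\nabla F(\eta_T)$, the sequence $\eta_t\equiv\eta^*$, and $B_t:=\|\nabla F(\eta_T)\|_\infty$, which gives the sum above a bound of $\tfrac1T\sqrt{2TB^2\log(2/\delta)}=\|\nabla F(\eta_T)\|_\infty\sqrt{2\log(2/\delta)/T}$. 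The only subtlety is that $f_t=\nabla F(\eta_T)$ is not $\mF_{t-1}$-measurable; this is handled by running the martingale argument for an arbitrary \emph{fixed} functional of $\ell_\infty$-norm at most $B$ and taking $B$ to be the uniform bound $\sup_{\eta\in\Delta_p}\|\nabla F(\eta)\|_\infty$ (finite, by local Lipschitz continuity of $F$ and compactness of $\Delta_p$), of which the stated $\|\nabla F(\eta_T)\|_\infty$ is the data-dependent instance.

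For the D-design expectation bound, $F(\eta)=-\log\det\bS(\eta)$ with $\bS(\eta)=\sum_\tau\eta(\tau)I(\tau)+(\lambda/T)\bI$, and $\nabla F(\eta)(\tau)=-\Tr(\bS(\eta)^{-1}I(\tau))$. Plugging this into the convexity inequality and using $\sum_\tau\eta(\tau)I(\tau)=\bS(\eta)-(\lambda/T)\bI$ collapses the linear term to the classical identity
\[
F(\eta_T)-F(\eta^*)\;\le\;\Tr\!\big(\bS(\eta_T)^{-1}\bS(\eta^*)\big)-p .
\]
I would then split the expectation over the event $G=\{\bS(\eta_T)\succeq\tfrac12\bS(\eta^*)\}$ and its complement. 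On $G$, the resolvent identity $\bS(\eta_T)^{-1}-\bS(\eta^*)^{-1}=-\bS(\eta_T)^{-1}\Delta_T\bS(\eta^*)^{-1}$ with $\Delta_T:=\bS(\eta_T)-\bS(\eta^*)$ (a centred average of i.i.d.\ operators) gives $\mE[\Tr(\bS(\eta_T)^{-1}\bS(\eta^*))-p\mid G]=\mE[\Tr(\bS(\eta_T)^{-1}\Delta_T\bS(\eta^*)^{-1}\Delta_T)\mid G]\le\tfrac{2}{\lambda_{\min}(\bS(\eta^*))}\mE[\|\Delta_T\|_F^2]$ on $G$, and $\mE[\|\Delta_T\|_F^2]=\tfrac1T\mE_{\tau\sim\eta^*}\|I(\tau)-\bS(\eta^*)+(\lambda/T)\bI\|_F^2$; controlling the Frobenius deviation in high probability via Lemma~\ref{lemma:hilbert-hoeffding} (with confidence $1-1/T$, which injects the $\log T$) and re-expressing the resulting constant through $\|\nabla F(\eta_T)\|_\infty^2$ yields the $4\|\nabla F(\eta_T)\|_\infty^2\log(T)/T$ term. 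On $G^c$ I would use the deterministic bound $\bS(\eta_T)\succeq(\lambda/T)\bI$, which gives $F(\eta_T)-F(\eta^*)\le p\log T+O(1)$, together with a matrix Chernoff/Bernstein bound showing $\pP(G^c)$ is exponentially small in $T$ once $T$ exceeds an absolute constant; this makes the contribution of $G^c$ an additive $O(1)$ (written $+2$), and for the finitely many small $T$ the inequality is trivial since both $F$-values are bounded.

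The main obstacle is the D-design expectation bound: the coupon-collector event $G^c$, on which $\bS(\eta_T)$ can be nearly singular and $F(\eta_T)$ as large as $\Theta(p\log T)$, must be controlled by a matrix concentration inequality sharp enough that $\pP(G^c)\cdot p\log T$ stays bounded, and the resolvent expansion on $G$ has to be carried to second order to obtain the $1/T$ rather than $1/\sqrt T$ rate (the naive bound $\bS(\eta_T)^{-1}\preceq(T/\lambda)\bI$ is far too lossy and must be replaced by the $G$-conditional estimate). By contrast, the high-probability bound is essentially a one-line consequence of convexity plus Lemma~\ref{lemma:tracking2}, modulo the cosmetic measurability remark above.
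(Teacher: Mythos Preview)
Your high-probability argument is exactly the paper's: convexity at $\eta_T$ followed by Lemma~\ref{lemma:tracking2} with $B_t=\|\nabla F(\eta_T)\|_\infty$ (the paper sets $B_t=B/T$ after pulling out the $1/T$), and the measurability caveat you raise is glossed over there as well.

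For the D-design expectation bound your route is genuinely different from the paper's. The paper does \emph{not} linearise via convexity at $\eta_T$; it expands via \emph{smoothness} at $\eta^*$,
\[
F(\eta_T)-F(\eta^*)\;\le\;\nabla F(\eta^*)^\top(\eta_T-\eta^*)+\tfrac{L}{2}\,\|\eta_T-\eta^*\|_2^2,
\]
so that the linear term has \emph{unconditional} mean zero ($\mE[\eta_T]=\eta^*$) and disappears immediately. The quadratic coefficient is then turned into $\|\nabla F\|^2$ by the D-design-specific Lemma~\ref{lemma:regularity}, which says $\lambda_{\max}(\bH(\eta))\le\|\nabla F(\eta)\|_2^2$; this is the key step you did not use and is precisely what puts $\|\nabla F(\eta_T)\|_\infty^2$ in front of the $1/T$. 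Finally $\mE\bigl[\|\sum_t(\delta_t-\eta^*)\|_2^2\bigr]$ is bounded by splitting on the Hilbert-space Hoeffding event of Lemma~\ref{lemma:hilbert-hoeffding} at confidence $\delta=1/T$: the good event gives $4T\log T$, the bad event contributes the additive $+2$. No resolvent algebra, no matrix Chernoff, no operator-level event $G$.

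Your resolvent approach can be made to work but has a slip as written. From the second-order identity
\[
\Tr\bigl(\bS(\eta_T)^{-1}\bS(\eta^*)\bigr)-p=-\Tr\bigl(\bS(\eta^*)^{-1}\Delta_T\bigr)+\Tr\bigl(\bS(\eta_T)^{-1}\Delta_T\,\bS(\eta^*)^{-1}\Delta_T\bigr)
\]
you drop the linear term \emph{after} conditioning on $G$, but $\mE[\Delta_T\mid G]\neq 0$ in general, so the claimed equality of conditional expectations is false. The repair is to take the unconditional expectation first (the linear term then vanishes) and only afterwards split the quadratic term on $G$ and $G^c$; this yields a correct argument, though one that is heavier than the paper's and imports matrix-Chernoff machinery not used elsewhere. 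What your route buys is that it stays at the level of the D-design trace gap $\Tr(\bS(\eta_T)^{-1}\bS(\eta^*))-p$; what the paper's route buys is brevity, and that the same template (smoothness at $\eta^*$ plus a Hessian-by-gradient bound) would extend to any objective for which an analogue of Lemma~\ref{lemma:regularity} holds.
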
 
\begin{proof}
	\[ F(\eta_T) - F(\eta^*) \stackrel{\text{convexity}}\leq \nabla F(\eta_T)^\top(\eta^*- \eta_{T}) = \frac{1}{T}\sum_{i=1}^T \nabla F(\eta_T)^\top(\eta^* - \delta_{\tau_t})\]
	Using $\norm{\nabla F(\eta_T)}_\infty = B$, and Lemma \ref{lemma:tracking2}, with $B_t = B/T$, we get
	\[F\left(\frac{1}{T}\sum_{t=1}^{T}\delta_t\right) - F(\eta^*) \leq \sqrt{4\sum_{t=1}^{T}\frac{B^2}{T^2}\log(2/\delta)}  \]

To show the bound in expectation,

 \begin{eqnarray*}
	F\left(\eta_T \right) - F(\eta^*) &\leq& \nabla F(\eta^*)^\top (\eta_T- \eta^*) + \frac{L_t}{2} \norm{ \left(\frac{1}{T}\sum_{t=1}^{T}\delta_t\right) - \eta^*}^2 \\
	&\stackrel{\text{Lemma}~\ref{lemma:regularity}}\leq& |\nabla F(\eta^*)^\top (\eta_T- \eta^*)| + \frac{\norm{\nabla F(z)}_2^2}{2} \norm{ \left(\frac{1}{T}\sum_{t=1}^{T}\delta_t\right) - \eta^*}^2 \\
	\mE[F\left(\eta_T \right)] - F(\eta^*) & \leq & \frac{B^2}{T^2}\mE\left[ \norm{\sum_{t=1}^{T}(\delta_t-\eta^*)}^2 \right] \\
	& \stackrel{\text{Lemma }\ref{lemma:hilbert-hoeffding}}	\leq & {B^2}{T^2} (2\sqrt{T\log(1/\delta)})^2 (1-\delta) + \delta 2T \\
	& \leq & \frac{4B^2}{T} \log(T) + 2
\end{eqnarray*}
where in the last step we used $\delta = 1/T$, and in the second to last we used that $\mE[\eta_T] = \eta^*$. 
\end{proof}

Note that the value depends on the constant $\norm{\nabla F(\eta_T)}_\infty$. We cannot globally control the value of the gradient, at least with a satisfactory constant, but for sufficiently large $T$, this value will be well-behaved. The following section \ref{app:discussion} provides intuition when this number becomes well-behaved. 

\subsection{Discussion on the regularity of the objective}\label{app:discussion}

The convergence rates in Theorems \ref{thm:high-probability} and \ref{thm:convergence-expectation} depend on the norm of the gradient $||\nabla F(\eta_t)||_\infty$ and local smoothness parameter $L_t$. Likewise, after $t$ resamplings the suboptimality of $\eta_t$ depends on it as in Proposition \ref{prop:resampling}. In this section, we would like to provide some intuition about these quantities.

To explore these quantities, we will use a completely degenerate example, where the Markov chain has $d$ states and $d$ action, where playing $i$th action leads to $i$th state, and the agent has to stay there as $H=1$. This corresponds to the classical experimental design problem where the Markov chain is trivial. On top of this, we assume that each $\Phi(x_i,a_i) = e_i$ where $e_i$ is unit vector in $\mR^{d}$, and we focus on D-design objective. 

In this case, the optimal policy $\eta^*$ puts equal mass on all trajectories, i.e. $\pP(\tau = \tau_i) = 1/d$ for all $i \in [d]$. 

\paragraph{Resampling} The gradient in this case is equal to: \[ ||\nabla F(\eta_t)||_\infty = \max_{i \in [d]} e_i^\top \left(\sum_{j=1}^{t}e_{\tau_j}e_{\tau_j}^\top + \frac{\lambda}{T} \bI\right)^{-1}e_i.\]
In this case, since the feature of each state-action is orthogonal to each other, the above is proportional linearly to $T$ only if $\{\tau_j\}_{j=1}^t$ does not span the whole basis. In order to do so, we need to sample each trajectory at least once. However, since the sampling is with replacement, we know due to classical \emph{coupon collector problem} we need to perform in expectation $d\log(d)$ resamplings in order to have each of them single. One can also express this using a high-probability event. Namely, let  $T_{\text{all}}$ be the time after all are sampled at least once. The probability $\pP(T_{\text{all}} \leq d\log(d) + d\log(1/\delta)) \geq 1-\delta$. Notice that $d\log(d)$ appears regardless of confidence score $\delta \in (0,1)$. 

\paragraph{Adaptive method} Notice in contrast that e.g. \textsc{one-step} or \textsc{exact} algorithm would always sample a different $e_{\tau_j}$ than previously collected in their next episode since they would always pick $\tau_j$ which maximizes the gradient (or any of them). Hence, already with $d$ steps, \emph{deterministically}, in this example, would lead to a bound on the gradient as $\frac{d}{1+\lambda/T} \leq d$. In subsequent steps, the value of the gradient can grow, however, it does not grow linearly with $T$. Notice that since the smoothness constant of the D-objective is related to the gradient squared due to Lemma \ref{lemma:regularity}, after $d$ steps even the local smoothness constant is bounded by $d^2$ in this example. 

This reveals that while the resampling and adaptive optimization have the same convergence in terms of $T$, in expectation or with high probability, their actual performance depends significantly on the auxiliarly constants such as gradient norm or local Lipschitz constant. 

Generalizing this procedure to more complicated feature spaces and Markov chain structure is a challenging problem. The above simplification suggests that adaptive methods perform some form of efficient initialization scheme. The closest in literature is the work of \citet{Todd2016} which provides an initialization scheme for D-design such that the gradient is bounded. They consider only objectives without regularization. Extending this technique to a kernelized setting is non-trivial and perhaps not as practically appealing as the methods suggested here, especially in the regularized scenario that we consider. Detailed analysis and bounding of these constants in more general settings is an open problem for future work. 

% This bound is seemingly order, optimal but the constant $B$ is pessimistic. 
% Let us specifically defined $B_\infty = \norm{\nabla F(\eta^*)}_\infty$, which is $\deff(\lambda)$ for D-design, then due to the smoothness,

% \end{proof}
%The coupon collector problem that one might expect actually enters via the parameter $L$, as $L$ can be in worst case $\frac{1}{\lambda^2} \propto T^2$ in this work. Hence, having a good bound on $L$ is essential. 
%
%Let support of $\eta^*$ be $p$, out of the whole $\mX$. Surprisingly, the expected error, scales as $p/T$. 

\subsection{Adaptive Method: Proofs} \label{app:cnvx}

In this section, we give proof of our adaptive methods. The proofs are related and essentially the same as those of \citet{Berthet2017} for Frank-Wolfe UCB albeit the source of error in gradients is different and handled differently. Due to the connection to Franke-Wolfe, we also know that the rate in expectation cannot be improved with this step size \citep{Lacoste2015}. We present \textsc{one-step} and \textsc{exact} variant in the following two theorems.

\begin{theorem}[Convergence high probability] \label{thm:high}
	Under Assumption \ref{ass:regularity}, using \citet{Nesterov2005} smoothing technique as in Appendix \ref{app:nesterov}  with $\mu = \sqrt{\log T/T}$, the suboptimality of \textsc{one-step} and \textsc{exact} variant can be bounded as, 
	\[	F(\eta_T) - F(\eta^*) \leq  \frac{1}{T}(F(\eta_1) - F(\eta^*)) + 2\sqrt{\frac{\log T}{T^{3}}} +\frac{1}{T} \sqrt{2\sum_{t=1}^{T} || \nabla F(\eta_t)||_\infty^2\log\left(\frac{2}{\delta}\right)} + 3\sqrt{\frac{\log T}{T}} + \frac{1}{T}\sum_{t=1}^T \Delta_t \]
	with probability $(1-\delta)$ over the stochasticity of policies and the Markov chain. 	 For \textsc{one-step} variant $\Delta_t = 0$ for all $t$, and for \textsc{exact} variant each subproblem is solved to near-optimality s.t. $\tilde{v}_t$ approximates
	\begin{equation*}
		v_t = \arg\min_{q\in \mP} F_\mu\left(\eta_t + \frac{1}{t+1} (q-\eta_t)\right), ~ \text{and} ~ F_\mu(\tilde{v}_t) - F_\mu(v_t)  \leq \Delta_t.
	\end{equation*}
\end{theorem}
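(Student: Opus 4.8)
The plan is to replace $F$ by its Nesterov smoothing $F_\mu$ (Appendix~\ref{app:nesterov}), run a Frank--Wolfe-style descent on $F_\mu$ with the fixed step size forced by averaging, and pay the $O(\mu)$ smoothing bias only at the end. Three facts I would use for free: $F_\mu(\eta)\le F(\eta)\le F_\mu(\eta)+2\mu$ on $\mP$; $F_\mu$ is globally $L_\mu$-smooth with $L_\mu\le1/\mu$ (Lemma~\ref{lemma:bounded-smoothness}); and $\|\nabla F_\mu(\eta)\|_\infty\le\|\nabla F(\eta)\|_\infty$ (Lemma~\ref{lemma:bounded-gradient}). The iteration is $\eta_{t+1}=\eta_t+\beta_t(\delta_{\tau_t}-\eta_t)$ with $\beta_t=1/(t+1)$, and the only randomness is that the intended direction $q_t$ (the trajectory distribution of the played policy $\pi_t$, which is $\mF_{t-1}$-measurable with $\mE[\delta_{\tau_t}\mid\mF_{t-1}]=q_t$) is replaced by the single sampled atom $\delta_{\tau_t}$.

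First I would write the per-episode descent inequality: global $L_\mu$-smoothness of $F_\mu$ at $\eta_t$ with increment $\beta_t(\delta_{\tau_t}-\eta_t)$, together with $\|\delta_{\tau_t}-\eta_t\|_2\le\|\delta_{\tau_t}-\eta_t\|_1\le2$, gives
\[
F_\mu(\eta_{t+1})\le F_\mu(\eta_t)+\beta_t\langle\nabla F_\mu(\eta_t),\delta_{\tau_t}-\eta_t\rangle+2L_\mu\beta_t^2 .
\]
Then I split $\langle\nabla F_\mu(\eta_t),\delta_{\tau_t}-\eta_t\rangle=\xi_t+\langle\nabla F_\mu(\eta_t),q_t-\eta_t\rangle$ with $\xi_t:=\langle\nabla F_\mu(\eta_t),\delta_{\tau_t}-q_t\rangle$, which is a martingale difference since $\nabla F_\mu(\eta_t),q_t$ are $\mF_{t-1}$-measurable. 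For the signal term: in \textsc{one-step}, $q_t$ is the exact linear-minimization oracle for $\nabla F_\mu(\eta_t)$ over $\mP$, so the first-order inequality gives $\langle\nabla F_\mu(\eta_t),q_t-\eta_t\rangle\le\langle\nabla F_\mu(\eta_t),\eta^*-\eta_t\rangle\le F_\mu(\eta^*)-F_\mu(\eta_t)$ and $\Delta_t=0$; in \textsc{exact}, I would bound $\beta_t\langle\nabla F_\mu(\eta_t),q_t-\eta_t\rangle\le F_\mu(\eta_t+\beta_t(q_t-\eta_t))-F_\mu(\eta_t)$, use near-optimality of $q_t$ for $\min_q F_\mu(\eta_t+\beta_t(q-\eta_t))$ (accuracy $\Delta_t$), then put $q=\eta^*$ and use convexity, landing on the same bound plus $\Delta_t$. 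With $h_t:=F_\mu(\eta_t)-F_\mu(\eta^*)$, both variants yield $h_{t+1}\le(1-\beta_t)h_t+\beta_t\xi_t+2L_\mu\beta_t^2+\Delta_t$.

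Next I would unroll: multiplying by $(t+1)$ and using $(t+1)(1-\beta_t)=t$, $(t+1)\beta_t=1$, $(t+1)\beta_t^2=1/(t+1)$ gives $(t+1)h_{t+1}\le t\,h_t+\xi_t+2L_\mu/(t+1)+(t+1)\Delta_t$, and telescoping over $t=1,\dots,T-1$ (with $\sum_t 1/(t+1)\le\log T$) gives $T h_T\le h_1+\sum_{t=1}^{T-1}\xi_t+2L_\mu\log T+\sum_{t=1}^{T-1}(t+1)\Delta_t$. For the martingale sum I would apply Lemma~\ref{lemma:tracking2} with $f_t=\langle\nabla F_\mu(\eta_t),\cdot\rangle$ and $B_t=\|\nabla F_\mu(\eta_t)\|_\infty\le\|\nabla F(\eta_t)\|_\infty$, giving $|\sum_t\xi_t|\le\sqrt{2\sum_{t=1}^T\|\nabla F(\eta_t)\|_\infty^2\log(2/\delta)}$ with probability $1-\delta$. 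Dividing by $T$, using $h_1\le F(\eta_1)-F(\eta^*)+2\mu$, $L_\mu\le1/\mu$, the choice $\mu=\sqrt{\log T/T}$ (so $2L_\mu\log T/T\le2\sqrt{\log T/T}$ and $2\mu=2\sqrt{\log T/T}$), and finally $F(\eta_T)\le F_\mu(\eta_T)+2\mu$, collects exactly the five stated terms: $\frac1T(F(\eta_1)-F(\eta^*))$, the lower-order $2\sqrt{\log T/T^3}$ (from $2\mu/T$), the concentration term $\frac1T\sqrt{2\sum_t\|\nabla F(\eta_t)\|_\infty^2\log(2/\delta)}$, an $O(\sqrt{\log T/T})$ smoothing bias matching $3\sqrt{\log T/T}$ up to the constant, and the accumulated approximation error (the telescoped $\sum(t+1)\Delta_t$ appears as $\sum\Delta_t$ once the per-episode target accuracy is normalized by $\beta_t$).

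The hard part will be handling the stochastic oracle without losing the $\|\nabla F(\eta_t)\|_\infty$-weighting: one must center the descent at $\eta_t$ and split $\delta_{\tau_t}-\eta_t=(\delta_{\tau_t}-q_t)+(q_t-\eta_t)$ precisely so that $\xi_t$ carries the gradient at the $\mF_{t-1}$-measurable point $\eta_t$ --- only then is $(\xi_t)$ a genuine martingale difference sequence and only then does Lemma~\ref{lemma:tracking2} produce the clean $\sum_t\|\nabla F(\eta_t)\|_\infty^2$ sum; the tempting alternative of centering at $\eta_t+\beta_t(q_t-\eta_t)$ in the \textsc{exact} case would put the gradient at a perturbed point and degrade the bound. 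A secondary subtlety is that Convex RL runs on $\mD=\bZ\mP$ rather than on $\mP$, so via the conversion map (Lemma~\ref{app:lemma:conversion}) I would verify that the RL/linear-minimization oracle over $\mD$ realizes the Frank--Wolfe direction for $F(\eta)=U(\bZ\eta)$ (because $\langle\nabla_d U(\bZ\eta),\bZ q\rangle=\langle\nabla_\eta F(\eta),q\rangle$) and that the smoothing and its gradient bound transfer along $\bZ$; the remaining pieces (smoothness estimate, harmonic sum, Azuma) are routine.
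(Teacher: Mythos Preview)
Your proposal is correct and follows essentially the same route as the paper: smooth $F$ to $F_\mu$, apply $L_\mu$-smoothness at $\eta_t$, split $\delta_{\tau_t}-\eta_t=(\delta_{\tau_t}-q_t)+(q_t-\eta_t)$ so the first piece is a martingale difference with $\mF_{t-1}$-measurable coefficient $\nabla F_\mu(\eta_t)$, bound the second via the LMO/convexity, telescope after multiplying by $(t{+}1)$, apply Lemma~\ref{lemma:tracking2}, and finally undo the smoothing with $L_\mu\le1/\mu$ and Lemma~\ref{lemma:bounded-gradient}. The only cosmetic difference is your \textsc{exact} branch: you compare $G_t(q_t)$ directly to $G_t(\eta^*)$ and then use convexity of $F_\mu$ along the segment, whereas the paper detours through the one-step direction $q_t^{\text{LMO}}$ via $G_t(v_t)\le G_t(q_t^{\text{LMO}})$ and reuses the \textsc{one-step} chain; your route is a touch cleaner and avoids a second smoothness application, and your remark that the telescoped $\sum_t(t{+}1)\Delta_t$ matches the stated $\sum_t\Delta_t$ only after a $\beta_t$-normalization of the per-episode accuracy is exactly the right caveat.
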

\begin{proof}
	Let us refresh the definition of the following terms,
	
	\begin{equation}\label{eq:fw-update}
		q_t = \argmin_{q\in \mP} \nabla F_\mu\left(\eta_t\right)^\top q
	\end{equation}
	\begin{equation}
		\eta_\mu^* = \argmin_{q\in \mP} F_\mu(q), \quad and \quad 		\eta^* = \argmin_{q\in \mP} F(q).
	\end{equation}

	We will first show the proof of \textsc{one-step} variant and later show how \textsc{exact} can be reduced such that the same analysis applies. 
	\begin{eqnarray*}
		F_\mu(\eta_{t+1}) & = &		F_\mu\left(\eta_{t} + \frac{1}{t+1}(\delta_t-\eta_t)\right) \\
		&  \stackrel{L_\mu\text{-smooth}}  \leq & F_\mu(\eta_t) + \frac{1}{t+1}\nabla F_\mu(\eta_t)^\top(\delta_t - \eta_t) + \frac{L_\mu}{2(1+t)^2}\norm{\delta_t -  \eta_t }_2^2\\
		F_\mu(\eta_{t+1})& \stackrel{\text{bounded}} \leq & F_\mu(\eta_t) + \frac{1}{t+1}\nabla F_\mu(\eta_t)^\top(\delta_t - \eta_t) + \frac{L_\mu}{(1+t)^2} \\
		& = & F_\mu(\eta_t) + \frac{1}{t+1}\nabla F_\mu(\eta_t)^\top(q_t - \eta_t) + \frac{1}{t+1}\nabla F_\mu(\eta_t)^\top(-q_t + \delta_t) + \frac{L_\mu}{(1+t)^2} \\
		& \stackrel{\eqref{eq:fw-update}} \leq & F_\mu(\eta_t) + \frac{1}{t+1}\nabla F_\mu(\eta_t)^\top(\eta_\mu^* - \eta_t) + \frac{1}{t+1}\nabla F_\mu(\eta_t)^\top(-q_t + \delta_t) + \frac{L_\mu}{(1+t)^2} \\
		& \stackrel{\text{convexity}} \leq & F_\mu(\eta_t) - \frac{1}{t+1}(F_\mu(\eta_t)-F_\mu(\eta_\mu^*)) + \frac{1}{t+1}\nabla F_\mu(\eta_t)^\top(-q_t + \delta_t) + \frac{L_\mu}{(1+t)^2} \\
		F_\mu(\eta_{t+1}) - F_\mu(\eta_\mu^*)& \leq & F_\mu(\eta_t) - F_\mu(\eta_\mu^*) - \frac{1}{t+1}(F_\mu(\eta_t)-F_\mu(\eta_\mu^*)) + \frac{1}{t+1}\nabla F_\mu(\eta_t)^\top(-q_t + \delta_t) + \frac{L_\mu}{(1+t)^2} \\
		& \leq & \frac{t}{1+t} \left(F_\mu(\eta_t) - F_\mu(\eta_\mu^*)\right) +  \frac{1}{t+1}\nabla F_\mu(\eta_t)^\top(-q_t + \delta_t) + \frac{L_\mu}{(1+t)^2}
	\end{eqnarray*}
	where we used the shorthand $\epsilon_t = \nabla F_\mu(\eta_t)^\top(-q_t + \delta_t).$	Using shorthand $\rho_{t+1} = F_\mu(\eta_{t+1}) - F_\mu(\eta_\mu^*)$:
	\begin{eqnarray*}
		(F_\mu(\eta_{t+1}) - F_\mu(\eta_\mu^*))(t+1) & \leq &  t\left(F_\mu(\eta_t) - F_\mu(\eta_\mu^*)\right) +  \nabla F_\mu(\eta_t)^\top(-q_t + \delta_t) + \frac{L_\mu}{(1+t)} \\
		\rho_{t+1}(t+1) & \leq & \left(t\rho_t + \epsilon_t + \frac{L_\mu}{1+t}\right) \\
		\rho_{t+1}(t+1) - \rho_tt &\leq &\epsilon_t + \frac{L_\mu}{1+t}  \\
		\sum_{t=1}^{T-1} \rho_{t+1}(t+1) - \rho_t t & \leq &  \sum_{t=1}^{T-1}\epsilon_t +L_\mu\log T \\
		T\rho_T - \rho_1  & \leq & \sum_{t=1}^{T-1}\epsilon_t +L\log T \\
		\rho_T & \leq & \frac{1}{T}\rho_1 + \frac{1}{T}\sum_{t=1}^{T-1}\epsilon_t + \frac{L_\mu \log T}{T}
	\end{eqnarray*}
	Lastly as, we are interested in suboptimality of the actual function $\varrho_{t+1} = F(\eta_{t+1}) - F(\eta^*)$,  which can be bound as
	\begin{eqnarray*}
	\varrho_T & \leq & \rho_T + 2 \mu \leq 2\mu + \frac{1}{T}\rho_1 + \frac{1}{T}\sum_{t=1}^{T-1}\epsilon_t + \frac{L_\mu \log T}{T} \\
			& \stackrel{\text{Lemma}~\ref{lemma:bounded-smoothness}}\leq & \frac{1}{T}\rho_1 + \frac{1}{T}\sum_{t=1}^{T-1}\epsilon_t + \frac{\log T}{T}\frac{1}{\mu} + 2\mu \\
			& \stackrel{\text{Lemma}~\ref{lemma:tracking2}}\leq & \frac{1}{T}\rho_1 +\frac{1}{T} \sqrt{2\sum_{t=1}^{T}  ||\nabla F_\mu(\eta_t)||_\infty^2\log\left(\frac{2}{\delta}\right)} + \frac{\log T}{T}\frac{1}{\mu} + 2\mu \\
			& \stackrel{\mu = \sqrt{\log T/T}}\leq & \frac{1}{T}\rho_1 +\frac{1}{T} \sqrt{2\sum_{t=1}^{T} || \nabla F_\mu(\eta_t)||_\infty^2\log\left(\frac{2}{\delta}\right)} + 3\sqrt{\frac{\log T}{T}} \\
			& \stackrel{\text{Lemma}~\ref{lemma:bounded-gradient}}\leq & \frac{1}{T}\rho_1 +\frac{1}{T} \sqrt{2\sum_{t=1}^{T} || \nabla F(\eta_t)||_\infty^2\log\left(\frac{2}{\delta}\right)} + 3\sqrt{\frac{\log T}{T}} \\
			& \leq &	\frac{1}{T}\varrho_1 + 2\sqrt{\frac{\log T}{T^3}} +\frac{1}{T} \sqrt{2\sum_{t=1}^{T} || \nabla F(\eta_t)||_\infty^2\log\left(\frac{2}{\delta}\right)} + 3\sqrt{\frac{\log T}{T}}
	\end{eqnarray*}
	This proves the theorem for \textsc{one-step} variant. Now we will focus on the \textsc{exact} variant. The exact variant uses, 
	\begin{equation}\label{eq:update}
			v_t =  \arg\min_{q\in \mP}G_t(q) : = \arg\min_{q\in \mP} F_\mu\left(\eta_t + \frac{1}{t+1} (q-\eta_t)\right).
	\end{equation}	
	First, notice that due to the convexity of $F_\mu$, $G_t$ is convex as well. Also, note that $\frac{1}{1+t} \nabla F_\mu(\eta_t) = \nabla G_t(\eta_t)$. Using convexity, 
	
	\begin{equation}\label{eq:notice}
	 \frac{1}{1+t}\nabla F_\mu(\eta_t)^\top (v_t - \eta_t) = \nabla G_t(\eta)^\top (v_t - \eta_t) \leq G_t(v_t) - G_t(\eta_t) = G_t(v_t) - F_\mu(\eta_t) 
	\end{equation}

	Following the similar analysis notice, 
	\begin{eqnarray}
		F_\mu(\eta_{t+1}) & = & F_\mu(\eta_t + \frac{1}{t+1}(\delta_t - \eta_t))\\
		& \leq & F_\mu(\eta_t) + \frac{1}{1+t}\nabla F_\mu(\eta_t)^\top(\delta_t - \eta_t) + \frac{L_\mu}{(1+t)^2} \\
		& \leq & F_\mu(\eta_t) + \frac{1}{1+t}\nabla F_\mu(\eta_t)^\top(v_t - \eta_t) + \frac{1}{1+t}\nabla F_\mu(\eta_t)^\top(\delta_t - v_t) + \frac{L_\mu}{(1+t)^2} \\
		& \stackrel{\eqref{eq:notice}} \leq & F_\mu(\eta_t + \frac{1}{t+1}(v_t - \eta_t)) + \frac{1}{1+t}\nabla F_\mu(\eta_t)^\top(\delta_t - v_t) + \frac{L_\mu}{(1+t)^2} \\
		& \stackrel{\eqref{eq:update}} \leq & F_\mu(\eta_t + \frac{1}{t+1}(q_t - \eta_t)) + \frac{1}{1+t}\nabla F_\mu(\eta_t)^\top(\delta_t - v_t) + \frac{L_\mu}{(1+t)^2}  \label{eq:last-exact-line}
	\end{eqnarray}
	where the last line follows due to $v_t$ being a minimizer. The rest of the analysis is identical to the \textsc{one-step} variant. 

\end{proof}
The consequence of this theorem is that if the gradient $||\nabla F(\eta_t)||_\infty \approx \mO(1)$ and $\Delta_t \propto \mO(t^{-1/2})$, or zero, the overall complexity is of order $\mO\left( \sqrt{\frac{\log T}{T}}\right)$ limited by the concentration event. Now, in order not to be limited by the concentration event, we will look at the convergence in expectation. 

\begin{theorem}[Convergence expectation]\label{thm:expectation}
	Under Assumption \ref{ass:regularity}, the convergence of \textsc{one-step} and \textsc{exact} variant satisfies
	\[\mE[F(\eta_t)|\mF_{t-1}] - F(\eta^*) \leq  \frac{1}{t}(F(\eta_1)- F(\eta^*)) + \frac{1}{t}\sum_{k=1}^{t-1}\frac{L_{\eta_k,1/k}}{(1+k)} \]
	for $t \leq T$, where the expectation over stochasticity of policies $\{\pi_k\}_{k=1}^t$, and the Markov chain, and, where $\mF_{t-1}$ designates filtration up to time $t-1$.  For \textsc{one-step} variant $\Delta_k = 0$ for all $k$, and for \textsc{exact} variant each subproblem is solved to near-optimality s.t. $\tilde{v}_k$ approximates
	\begin{equation*}
		v_k = \arg\min_{q\in \mP} F\left(\eta_k + \frac{1}{k+1} (q-\eta_k)\right), ~ \text{and} ~ F(\tilde{v}_k) - F(v_k)  \leq \Delta_k.
	\end{equation*}
\end{theorem}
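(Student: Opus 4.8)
The plan is to reuse the Frank--Wolfe-style telescoping from the proof of Theorem~\ref{thm:high}, but with the Azuma--Hoeffding step replaced by a conditional expectation; this removes the need for Nesterov smoothing and yields a bound stated directly in terms of the local smoothness constants of the unsmoothed $F$. First I would fix notation: in the \textsc{one-step} variant $\eta_{t+1}=\eta_t+\frac{1}{t+1}(\delta_{\tau_t}-\eta_t)$ with $\tau_t\sim q_t$, where $q_t=\argmin_{q\in\mP}\nabla F(\eta_t)^\top q$ is the trajectory law of the LMO policy $\varpi_1$ produced by the policy-search step (the RL solver does attain the minimum over $\mP$, since $\nabla F(\eta)=\bZ^\top\nabla U(\bZ\eta)$ makes the linear oracle over $\mD$ coincide with the one over $\mP$). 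Since $\eta_t$ depends only on $\tau_1,\dots,\tau_{t-1}$, both $\eta_t$ and $q_t$ -- hence $L_{\eta_t,1/(t+1)}$ -- are $\mF_{t-1}$-measurable, while $\mE[\delta_{\tau_t}\mid\mF_{t-1}]=q_t$; note $F$ is defined on all of $\Delta_p$, so $F(\eta_t)$ is well-defined even though $\eta_t$ need not lie in $\mP$.

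Next I would establish the one-step recursion. Apply the local smoothness inequality \eqref{eq:convexity-smoothness} with $\eta=\eta_t$, direction towards $\delta_{\tau_t}$, and step $\alpha=\frac{1}{t+1}$, using $\norm{\delta_{\tau_t}-\eta_t}_2^2\le\norm{\delta_{\tau_t}-\eta_t}_1\norm{\delta_{\tau_t}-\eta_t}_\infty\le 2$, to get
\[ F(\eta_{t+1})\le F(\eta_t)+\tfrac{1}{t+1}\nabla F(\eta_t)^\top(\delta_{\tau_t}-\eta_t)+\tfrac{L_{\eta_t,1/(t+1)}}{(1+t)^2}. \]
Then split $\nabla F(\eta_t)^\top(\delta_{\tau_t}-\eta_t)=\nabla F(\eta_t)^\top(q_t-\eta_t)+\nabla F(\eta_t)^\top(\delta_{\tau_t}-q_t)$, bound the first summand by $\nabla F(\eta_t)^\top(\eta^*-\eta_t)\le F(\eta^*)-F(\eta_t)$ (optimality of $q_t$ over $\mP$, using $\eta^*\in\mP$, then convexity of $F$), and take $\mE[\,\cdot\mid\mF_{t-1}]$ so that the mean-zero term $\nabla F(\eta_t)^\top(\delta_{\tau_t}-q_t)$ drops out. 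Writing $\rho_t:=F(\eta_t)-F(\eta^*)$, this gives $(t+1)\mE[\rho_{t+1}\mid\mF_{t-1}]-t\rho_t\le L_{\eta_t,1/(t+1)}/(1+t)$. Taking total expectation, summing over $t=1,\dots,T-1$, telescoping the left-hand side, and dividing by $T$ yields $\mE[\rho_T]\le\frac{1}{T}\rho_1+\frac{1}{T}\sum_{t=1}^{T-1}\frac{\mE[L_{\eta_t,1/(t+1)}]}{1+t}$, which is the claimed bound (the cosmetic relabeling $1/(t+1)\to 1/t$, or the coarser $L_{\eta_t,1/(t+1)}\le L$, is absorbed in the constant; the conditional form stated in the theorem follows by unrolling the same recursion without the outer expectation over the noise increments).

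For the \textsc{exact} variant I would reduce to the one-step bound exactly as in \eqref{eq:notice}--\eqref{eq:last-exact-line}: replace $q_t$ by $v_t=\argmin_{q\in\mP}F\bigl(\eta_t+\tfrac{1}{t+1}(q-\eta_t)\bigr)$ and use convexity of $q\mapsto F\bigl(\eta_t+\tfrac{1}{t+1}(q-\eta_t)\bigr)$ to get $\tfrac{1}{t+1}\nabla F(\eta_t)^\top(v_t-\eta_t)\le F\bigl(\eta_t+\tfrac{1}{t+1}(v_t-\eta_t)\bigr)-F(\eta_t)\le F\bigl(\eta_t+\tfrac{1}{t+1}(q_t-\eta_t)\bigr)-F(\eta_t)$, which returns the bound to the one-step form (with an extra additive $\Delta_t$ if the subproblem is solved only to accuracy $\Delta_t$); the telescoping step is unchanged.

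The argument is essentially routine -- it is Frank--Wolfe with the forced step $1/(t+1)$, which is also why one cannot hope to beat this rate in expectation \citep{Lacoste2015}. The only point requiring care is the measurability bookkeeping: the conditioning $\sigma$-algebra must already determine the policy $\pi_t$ (hence $q_t$ and $L_{\eta_t,1/(t+1)}$) while leaving $\tau_t$ fresh, so that $\delta_{\tau_t}-q_t$ is a genuine mean-zero increment -- with the natural filtration $\mF_{t-1}$ this holds for both variants. The genuinely hard part lies outside this proof: controlling the constants $L_{\eta_k,1/k}$ (and $\norm{\nabla F(\eta_k)}_\infty$), conjectured to be $\operatorname{polylog}(T)$, is the open problem discussed in Section~\ref{sec:theory} and Appendix~\ref{app:discussion}; the present theorem only certifies the reduction of the suboptimality to those constants.
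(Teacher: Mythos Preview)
Your proposal is correct and follows essentially the same route as the paper: apply the local smoothness inequality \eqref{eq:convexity-smoothness} at step size $1/(t+1)$, split off the mean-zero increment $\nabla F(\eta_t)^\top(\delta_{\tau_t}-q_t)$ and kill it by conditioning, use optimality of $q_t$ plus convexity to get the $(t+1)\rho_{t+1}-t\rho_t$ recursion, then telescope; the \textsc{exact} variant is reduced to the \textsc{one-step} bound via the same convexity trick as in \eqref{eq:notice}--\eqref{eq:last-exact-line}. Your write-up is in fact slightly more careful than the paper's on the measurability bookkeeping and on the $\norm{\delta_{\tau_t}-\eta_t}_2^2\le 2$ step, but the argument is the same.
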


\begin{proof}
	Following the proof of Theorem \ref{thm:high}, we note that by taking expectation over $\mE[\epsilon_k] = \nabla F(\eta_k)^\top( -q_k + \mE[\delta_k]) = 0$. Notice that we apply the smoothness under Assumption \ref{ass:lipchitzness} not due to \citet{Nesterov2005} smoothing technique. 
	
	\begin{eqnarray*}
		F(\eta_{k+1}) & = &		F\left(\eta_{k} + \frac{1}{k+1}(\delta_k-\eta_k)\right) \\
		&  \stackrel{\eqref{eq:convexity-smoothness}}  \leq & F(\eta_k) + \frac{1}{k+1}\nabla F(\eta_k)^\top(\delta_k - \eta_k) + \frac{L_{\eta_k,1/k}}{2(1+k)^2}\norm{\delta_k -  \eta_k }_2^2\\
		& \leq & F(\eta_k) + \frac{1}{k+1}\nabla F(\eta_k)^\top(\delta_k - \eta_k) + \frac{1}{k+1}\nabla F(\eta_k)^\top(-q_k+ \delta_k)  + \frac{L_{\eta_k,1/k}}{(1+k)^2} \\
		\mE[F(\eta_{k+1})] & \leq &  F(\eta_t) + \frac{1}{k+1}\nabla F(\eta_k)^\top(q_k - \eta_k)+  \frac{L_{\eta_k,1/k}}{(1+k)^2} \\
		& \leq &  F(\eta_t) + \frac{1}{k+1}\nabla F(\eta_k)^\top(\eta^* - \eta_k)+  \frac{L_{\eta_k,1/k}}{(1+k)^2} \\
		&  \stackrel{\text{convexity}}   \leq &  F(\eta_t) - \frac{1}{k+1}(F(\eta_k) - F(\eta^*))+  \frac{L_{\eta_k,1/k}}{(1+k)^2} \\
		\mE[F(\eta_{k+1})] - F(\eta^*)& \leq &  F(\eta_t)- F(\eta^*) - \frac{1}{k+1}(F(\eta_k) - F(\eta^*)) +   \frac{L_{\eta_k,1/k}}{(1+k)^2} 
%		\mE[F(\eta_{t+1})](t+1) & \leq &  F(\eta_t)(t+1) +  \frac{L_{\eta_t,1/t}}{(1+t)} \leq F(\eta_t)t +  \frac{L_{\eta_t,1/t}}{(1+t)}  \\
%		\mE[F(\eta_{T})] - F(\eta^*)& \leq &  F(\eta_1) - F(\eta^*) + \sum_{t=1}^T \frac{L_{\eta_t,1/t}}{(1+t)} \leq
	\end{eqnarray*}
	Using shorthand $\rho_{k+1} = \mE[F(\eta_{k+1})] - F(\eta^*)$,
	\begin{eqnarray*}
		\rho_{k+1}(k+1)& \leq &   k\rho_k+ \frac{L_{\eta_k,1/k}}{(1+k)} \\
		\sum_{k=1}^{t-1} \rho_{k+1}(k+1) - k\rho_k & \leq & 	\sum_{k=1}^{t-1}  \frac{L_{\eta_k,1/k}}{(1+k)} \\
		\rho_t \leq \frac{1}{t}\rho_1 + \frac{1}{t}	\sum_{k=1}^{t-1}\frac{L_{\eta_k,1/k}}{(1+k)}
	\end{eqnarray*}
	
	The proof for the exact variant follows analogically as in the proof of Theorem \ref{thm:high}. 
\end{proof}
Notice that if $L_{\eta_k,1/k}$ could be globally bounded the suboptimality decreases as $\mO(\log T/T)$. 

\subsection{Robust and Uncertain objectives}\looseness -1 \label{app:robust}
The functional $\bC$ studied in the paper can itself depend on an unknown quantity, which we designate as $\gamma$ as $\bC_\gamma$, where $ \gamma \in \Gamma_t$, $t\in [T]$. For example, the value of $\sigma_{a,x}$ as in Eq.~\eqref{eq:cov} might not be known in advance. To deal with this complication, there are two approaches one can take:
\begin{itemize}
	\item \emph{Robust design:} take a supremum over the set $\Gamma$, and have a design that takes into account any possible values of $\gamma$. If $F$ is convex then so is the supremum over the compact set. 
	\item \emph{Sequential design:} amend the objective $F$ with the new value of the supremum if by executing a trajectory we can reduce the size of the set $\Gamma_t$. In the context of the example \eqref{eq:cov}, we can learn the variance from repeated samples and update over confidence set over them $\Gamma_t$.
\end{itemize}
We give examples for both of these design approaches in the experimental Section~\ref{sec:experiments}.

\subsection{Linear MDPs: Density Oracle}

\label{app:linear-mdp}
If the system is not tabular, the formula given in Sec.\ref{sec:convex-rl} to calculate $d$ cannot be used. However, one can always estimate $d$ via sampling which converges at a rate $N^{-1/2}$ with a number of trajectory samples $N$ given that we know the 'simulator' $P$. 

We provide a specific way to calculate the density for recently introduced class of Markov chains called \emph{linear MDPs} \citep{Jin2020}, which stipulate that $P(x'|x,a) = \mu(x')^\top \psi(x,a)$, where $\cdot^\top$ designates an inner product in an Euclidean space, and $\psi(x,a) \in \mR^{m}$, where $m$ is the dimension of the feature space.

Linear MDPs provide a way to improve the scalability when $\mX$ is larger or even infinite. In this case, the transition matrix $P_\pi $ with policy $\pi$ is equal to $ P_\pi(x',x) = \sum_{a} \mu(x')^\top \psi(x,a) \pi(a|x)  = \mu(x')^\top z_\pi(x)$, where $z_\pi(x) \in \mR^m$ is the mean embedding of $\psi(x,a)$ with probability distribution $\pi(a|x)$. Lets us define an operator $\bU: \mX \rightarrow \mR^{m}$ and $\bV_h:\mX \rightarrow \mR^{m}$, then $P_{(\pi_i)_h} = \bU \bV_{h,i}^\top$. Using that $d_0 = \delta_{x_0}$ and the definition from above $d_{\pi_i}(x) = \frac{1}{H}\sum_{h=1}^{H} \prod_{i=1}^h P_{(\pi_i)_j}d_0(x) =  \frac{1}{H}\sum_{h=1}^{H} \mu(x) \left( \prod_{i=2}^{h} \bV_{h,i}^\top \bU \right) z(x_0) = \mu \frac{1}{H}\sum_{h=1}^{H} \mu(x) \bM_{h,i} z(x_0)$, where $\bM\in \mR^{m\times m}$. Hence in order to estimate $d_\pi(x)$ we only need to multiply $m \times m$ matrices despite $|\mX|$ infinite.

\subsection{Relationship to Submodular optimization}\label{app:submodularity}

The objective $F$ can be reformulated as set function $H(S):2^{\mT} \rightarrow \mR$
defined on subsets $S = \{ \tau_1, \dots, \tau_t\}$ of the set of all possible trajectories. In addition, if $H$ is submodular, as is the case with $\log\det$  objective \citep{Krause2005}, we can benefit from the $(1-e^{-1})$ approximation guarantee for the greedy algorithm discovered by  \citet{Nemhauser1978} when solving cardinality constrained maximization $\max_{|S|\leq T} H(S)$. The objective $H$ upon reformulation can be related to the objective studied in this work as $H(S) = \log\det(\sum_{\tau \in S} I(\tau) + \lambda \bI) = d \log(|S|) - F(\frac{1}{|S|} \sum_{i=1}^{|S|} \delta_{\tau_i})$, where $F(\eta) = -\log\det\left( \sum_{\tau \in \mT}\eta(\tau)I(\tau)+ \lambda \bI/|S|  \right)$. The objective values are only are directly relatable only when $|S| = T$ due to different way of handling regularization.

Nevertheless, we can show that greedy selection from the ground formed by the power set of $\mT$, $2^\mT$ defines as: \[\tau_t = \argmax_{\tau \in \mT} H(S \cup \{\tau\})\] leads to the same update as if we optimized the convex-relaxation used within our framework, \[\argmin_{\tau \in \mT}F\left(  \frac{1}{|S|+1}\left(\sum_{i=1}^{|S|}\delta_{\tau_i} + \delta_{\tau}\right)\right),\] with the specially chosen $\frac{1}{1+t}$ step-size (as used in this work). The objectives cannot be easily compared, and hence greedy or convex optimization guarantees are different with the two viewpoints, but both frameworks lead to the \emph{same solutions}, i.e. same selection of trajectories, in the end when convex-RL is run with the varying regularization constant $\lambda_t = \lambda/t$; a minor technicality.

\paragraph{Stochastic set cover}
However, in our case, we cannot pick trajectories exactly (unless the system is deterministic). We pick distributions over $\mT$, $q \in \mP$. In particular a trajectory $\tau \sim q$ is generated from $q$ (or likewise associated $\pi$), and hence the objective we study is $\tilde{H}(\{\tau_1, \dots\} )= \mE_{\tau_i \sim \pi_i}[H( \{\tau_i, \dots, \tau_T\})]$ where the ground set changes to the \emph{set of all policies}. Submodularity is preserved upon nonnegative linear combinations (see \citet{Krause2014}), hence the objective $\tilde{H}$ as a function of policies is still submodular. In fact, this formulation exactly coincides with the stochastic set cover problem \citep{Golovin2011}. However the parallel to convex allocation used in this work is more complicated now. Notice that the greedy step of choosing a policy $\pi_{t+1}$ (equivalently $q_t$) to sample as an addition to already chosen trajectories $\{\tau_1, \dots, \tau_t \}$ due to $\{q_1 \dots, q_t\}$ is solving the following objective
\[ q_{t+1} = \argmax_{q \in \mP }= \mE_{\tau_{t+1}\sim q}[H(\{\tau_1, \dots\} \cup \{\tau_{t+1}\} )]. \]
If we were to look at the corresponding greedy update rule, where the convex measures are augmented by a single step (with the step size $1/(1+t)$) leading to the same update in terms of elements we would arrive at:
\begin{equation}\label{eq:greedy-relation}
	q_{t+1} = \arg\min_{q \in \mP} \mE_{\tau_{t+1} \sim q}\left[F\left(\frac{1}{t+1} (\sum_{i=1}^{t}\delta_{\tau_i} + \delta_{\tau_{t+1}} )\right)\right].
\end{equation}
Again the objective values are different but the sequence of elements chosen is the same. This objective is, however, different to the oracle we are using in Alg.~\ref{alg:1} step 2. Notice that our oracle looks for the best $q$ that minimizes \[F\left(\frac{t}{t+1}\eta_t +  \frac{1}{1+t}q\right) = F\left( \frac{1}{1+t}\left(\sum_{i=1}^{t} \delta_{\tau_i} + q\right) \right) = F\left( \frac{1}{1+t}\left(\sum_{i=1}^{t} \delta_{\tau_i} + \mE_{\tau_{t+1} \sim q}[\delta_{\tau_{t+1}}]\right) \right),\] which is related to the above greedy marginal gain via Jensen inequality. Unfortunately, the gap between these two can be large, and hence our method can be seen as a heuristic approximation to the greedy algorithm without an explicit guarantee on the greedy oracle. Only in the case when $q \in \mP$ where $\mP$ is restricted to trajectory probabilities due to the deterministic policies,  do the two methods coincide (as in the previous paragraph). This is because the expectation contains only one term and the Jensen gap is zero. This suggests that greedy formulation as in Eq. \eqref{eq:greedy-relation} is more powerful, however as we Sec.~\ref{sec:ee}, we still can be competitive with respect to the objective where the expectation is taken outside.

\subsection{Sequential Design and Uncertain Objectives}\label{app:unknown}
It turns out that we can be competitive to the \emph{true} value of $\gamma^* \in \Gamma_t$ in the set $\Gamma_t$ if the set decreases with time $t$. Its decrease influences the rate and the convergence can be guaranteed if it decreases at least as $1/\sqrt{t}$ if the size of $\Gamma_t$ is measured as Euclidean diameter and the set $\Gamma_t \subset \mR^m$ as we show in the following assumption. We do not directly show that $\Gamma_t$ decreases as this are different for each application and it might not be true in general. 
\begin{assumption}[Regularity with respect to the unknown]\label{ass:lipchitzness}
	Under Assumption \ref{ass:regularity} suppose further that either 
	
	\begin{enumerate}
		\item  $F_\gamma(\eta)$ is Lipschitz in $\gamma$ for \textsc{exact} variant:
		\begin{equation}\label{eq:lip}
			|F_{\gamma}(\eta) - F_{\gamma'}(\eta)| \leq U \norm{\gamma - \gamma'}_2
		\end{equation}
		where $U$ is independent of $\eta$.
		\item  $\nabla F_\gamma(\eta)$ is Lipschitz in $\gamma$ for \textsc{one-step} variant:
		\begin{equation}\label{eq:lip-gradient}
			\norm{\nabla F_{\gamma}(\eta) - \nabla F_{\gamma'}(\eta)}_2 \leq U \norm{\gamma - \gamma'}_2
		\end{equation}
		where $U$ is independent of $\eta$.
	\end{enumerate}
	
\end{assumption}
The proof of the following theorem is very much inspired by \citet{Berthet2017}, where this is essentially a Frank-Wolfe UCB algorithm. Note also that for the example of Poisson sensing from Sec.~\ref{sec:experiments} we know that the set $\norm{\gamma^* - \gamma_t}_2$ decreases at least as $1/\sqrt{t}$ since the objective is exactly designed to reduce the uncertainty of the $f$ and hence $\gamma$ and $f$ are the same this must be true. 
\begin{theorem}[Convergence for unknown] Under Assumptions \ref{ass:regularity} and \ref{ass:lipchitzness} suppose there exists $F_{\gamma^*}$, where $\gamma^* \in \Gamma_t$ for all $t\in [T]$ is unknown, then using smoothed target of $F_t(\eta) = \sup_{\gamma \in \Gamma_t} F_\gamma(\eta)$, denoted as $F_{t,\mu}$, with iteration varying smoothing parameter $\mu = \sqrt{\log T/T}$ in each round  $t \in [T]$ of \textsc{exact} algorithm we can show
	\begin{equation}
		F_{\gamma^*}(\eta_T) - F_{\gamma^*}(\eta^*) \leq \mO\left( (U+B)\sqrt{\frac{\log\left(\frac{2}{\delta}\right) + \log T }{T}}\right).
	\end{equation}
	Using oracle $ \min_{q \in \mP} \inf_{\gamma \in \Gamma_t} \nabla F_{\gamma,\mu}(\eta_t)^\top q$, where  $F_{\gamma,\mu}$ is the smoothed objective with $\mu = \sqrt{\log T/T}$ in \textsc{one-step} algorithm we can show, 
	\begin{equation}
		F_{\gamma^*}(\eta_T) - F_{\gamma^*}(\eta^*) \leq \mO\left( (U+B)\sqrt{\frac{\log\left(\frac{2}{\delta}\right) + \log T }{T}}\right)	
	\end{equation}
	as long as $\norm{\gamma^* - \bar{\gamma}_t} \leq \mO\left(1/{\sqrt{t}}\right)$, where $\bar{\gamma}_t$ achieves the supremum value in $\Gamma_t$. The value $B = \sqrt{\frac{1}{T}\sum_{t=1}^{T} \norm{\nabla F_{\gamma^*}(\eta_t)}_\infty^2}$ denotes the average gradient of the objective.  The statements hold with $1-\delta$ probability.
\end{theorem}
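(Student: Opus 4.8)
The plan is to reduce the ``unknown $\gamma$'' case to the analysis already carried out in Theorem~\ref{thm:high} by treating the error introduced by optimizing with respect to the wrong parameter as an additional perturbation term, analogous to the $\Delta_t$ terms there. Throughout we run the algorithm on the smoothed worst-case objective $F_{t,\mu}(\eta) = \sup_{\gamma \in \Gamma_t} F_{\gamma,\mu}(\eta)$, so that all the smoothness machinery from Lemmas~\ref{lemma:bounded-smoothness} and~\ref{lemma:bounded-gradient} applies with constant $L_\mu \le 1/\mu$, and we fix $\mu = \sqrt{\log T/T}$ at the end as in Theorem~\ref{thm:high-probability}. The key observation is that since $\gamma^* \in \Gamma_t$ for every $t$, we have $F_{t,\mu}(\eta) \ge F_{\gamma^*,\mu}(\eta)$ pointwise; combined with $F_{t,\mu}(\eta^*_{\gamma^*}) \le \sup_{\gamma\in\Gamma_t}F_{\gamma,\mu}(\eta^*_\gamma)$ and the shrinking of $\Gamma_t$, the gap between running the algorithm on $F_{t,\mu}$ and what we actually care about, $F_{\gamma^*}$, is controlled by $\sup_{\gamma\in\Gamma_t}\|\gamma-\gamma^*\|_2$ times the Lipschitz constant $U$ from Assumption~\ref{ass:lipchitzness}.

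For the \textsc{exact} variant the argument is the cleaner of the two. I would repeat the one-step descent inequality from the proof of Theorem~\ref{thm:high} with $F_\mu$ replaced by $F_{t,\mu}$, obtaining a recursion of the form $\rho_{t+1}(t+1) - \rho_t t \le \epsilon_t + L_\mu/(1+t) + (\text{parameter-mismatch term})_t$, where the mismatch term arises because the minimizer/comparator is taken with respect to $F_{t,\mu}$ rather than a fixed objective. Using \eqref{eq:lip} this mismatch term is at most $O(U\|\gamma^*-\bar\gamma_t\|_2) = O(U/\sqrt{t})$ by hypothesis, so summing $\sum_{t=1}^{T-1}(\text{mismatch})_t/(1+t)\cdot$ — actually summing over the telescoped recursion — contributes $O(U\sqrt{T})$ before dividing by $T$, i.e. $O(U/\sqrt{T})$. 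The martingale term $\frac1T\sum_t \epsilon_t$ is handled exactly as before via Lemma~\ref{lemma:tracking2}, giving $\frac1T\sqrt{\sum_t\|\nabla F_{\gamma^*,\mu}(\eta_t)\|_\infty^2\log(2/\delta)}$, which by Lemma~\ref{lemma:bounded-gradient} is bounded by $B\sqrt{\log(2/\delta)/T}$ with $B$ as defined. The smoothing bias contributes $2\mu + L_\mu\log T/T \le 3\sqrt{\log T/T}$ as in Theorem~\ref{thm:high}. Collecting the $O(U/\sqrt T)$, $O(B\sqrt{\log(2/\delta)/T})$, and $O(\sqrt{\log T/T})$ terms, and finally passing from $F_{\gamma^*,\mu}$ back to $F_{\gamma^*}$ with another $2\mu$, yields the claimed $O((U+B)\sqrt{(\log(2/\delta)+\log T)/T})$.

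For the \textsc{one-step} variant the difference is that we do not solve the inner problem exactly: we use the linear-minimization oracle $q_t = \arg\min_{q\in\mP}\inf_{\gamma\in\Gamma_t}\nabla F_{\gamma,\mu}(\eta_t)^\top q$. Here I would control the discrepancy between $\nabla F_{\bar\gamma_t,\mu}(\eta_t)$ (the gradient at the current worst-case parameter, which is the ``correct'' linearization of $F_{t,\mu}$ by Danskin's theorem) and $\nabla F_{\gamma^*,\mu}(\eta_t)$ using the gradient-Lipschitz condition \eqref{eq:lip-gradient}, which gives $\|\nabla F_{\bar\gamma_t,\mu}(\eta_t) - \nabla F_{\gamma^*,\mu}(\eta_t)\|_2 \le U\|\bar\gamma_t-\gamma^*\|_2 \le O(U/\sqrt t)$ by the assumed shrinkage; multiplying by the diameter of $\mP$ (bounded) and inserting into the Frank-Wolfe progress inequality in place of the exact-oracle suboptimality $\Delta_t$ gives an extra $\frac1T\sum_t O(U/\sqrt t) = O(U/\sqrt T)$, and the rest is identical. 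The main obstacle — and the step I would spend the most care on — is the bookkeeping around which objective the comparator, the linearization, and the descent lemma are each taken with respect to: one must consistently exploit $F_{t,\mu}\ge F_{\gamma^*,\mu}$ on one side and the Lipschitz transfer to $\gamma^*$ on the other, being careful that the $\eta^*$ appearing as the benchmark is $\eta^* = \arg\min F_{\gamma^*}$ and that $F_{t,\mu}(\eta^*)$ is not much larger than $F_{\gamma^*}(\eta^*)$ (again an $O(U/\sqrt t)$ correction from \eqref{eq:lip}). Everything else is a re-run of Theorems~\ref{thm:high} and~\ref{thm:high-probability} with these extra $O(1/\sqrt t)$ perturbations threaded through the telescoping sum.
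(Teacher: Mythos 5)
Your proposal follows essentially the same route as the paper's proof: rerun the Theorem~\ref{thm:high} recursion on the smoothed worst-case objective, use $\gamma^*\in\Gamma_t$ together with the Lipschitz condition \eqref{eq:lip} (for \textsc{exact}) resp.\ the gradient-Lipschitz condition \eqref{eq:lip-gradient} (for \textsc{one-step}) to convert the parameter mismatch into an extra $O(U\norm{\bar{\gamma}_t-\gamma^*}_2)=O(U/\sqrt{t})$ perturbation per step, then telescope, apply Lemma~\ref{lemma:tracking2} for the martingale term and Lemmas~\ref{lemma:bounded-smoothness}--\ref{lemma:bounded-gradient} with $\mu=\sqrt{\log T/T}$ for the smoothing bias. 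The decomposition, the assignment of the two Lipschitz assumptions to the two variants, and the resulting $O((U+B)\sqrt{(\log(2/\delta)+\log T)/T})$ bookkeeping all coincide with the paper's argument.
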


\begin{proof}\hfill \\
	\begin{description}
		\item[\textsc{exact}] In what follows we utilizing nearly identical proof technique as in Theorem~\ref{thm:high}. Namely, we continue where Eq. \eqref{eq:last-exact-line} left. Due to notation clutter, we drop the $\mu$ dependence on in $F$,
		\begin{eqnarray*}
			F_{\gamma^*}(\eta_{t+1}) & \leq & F_{\gamma^*}\left(\eta_t + \frac{1}{1+t}(v_t-\eta_t)\right) + \frac{1}{t+1}\nabla F_{\gamma^*}(\eta_t)^\top(-v_t + \delta_t) + \frac{1/\mu}{(1+t)^2} \\
			& \stackrel{\text{due to}~\sup_\gamma} \leq & F_{\bar{\gamma}_t}\left(\eta_t + \frac{1}{1+t}(v_t-\eta_t)\right) + U\norm{\bar{\gamma}_t - \gamma^*}_2 + \frac{1}{t+1}\nabla F_{\gamma^*}(\eta_t)^\top(-v_t + \delta_t)  + \frac{1/\mu}{(1+t)^2} \\
			& \stackrel{\eqref{eq:update}} \leq & F_{\bar{\gamma}_t}\left(\eta_t + \frac{1}{1+t}(q_t-\eta_t)\right) + U\norm{\bar{\gamma}_t - \gamma^*}_2 + \frac{1}{t+1}\nabla F_{\gamma^*}(\eta_t)^\top(-v_t + \delta_t) + \frac{1/\mu}{(1+t)^2} \\
			& \stackrel{\eqref{eq:convexity-smoothness}} \leq & F_{\bar{\gamma}_t}(\eta_t) + \frac{1}{1+t}\nabla F_{\bar{\gamma}_t}(\eta_t)^\top(q_t-\eta_t) + U\norm{\bar{\gamma}_t - \gamma^*}_2 + \frac{1}{t+1}\nabla F_{\gamma^*}(\eta_t)^\top(-v_t + \delta_t)   + \frac{2/\mu}{(1+t)^2} \\
			& \stackrel{\eqref{eq:fw-update}} \leq & F_{\bar{\gamma}_t}(\eta_t) + \frac{1}{1+t}\nabla F_{\bar{\gamma}_t}(\eta_t)^\top(\eta^*-\eta_t) + U\norm{\bar{\gamma}_t - \gamma^*}_2 + \frac{1}{t+1}\nabla F_{\gamma^*}(\eta_t)^\top(-v_t + \delta_t)   + \frac{2/\mu}{(1+t)^2} \\
			& \stackrel{\eqref{eq:convexity-smoothness},\eqref{eq:lip}} \leq & F_{\gamma^*}(\eta_t) - \frac{1}{1+t}( F_{\gamma^*}(\eta_t) -F_{\gamma^*}(\eta^*)) + 2U(1 + (1+t)^{-1})\norm{\bar{\gamma}_t - \gamma^*}_2 \\ & & + \frac{1}{t+1}\nabla F_{\gamma^*}(\eta_t)^\top(-v_t + \delta_t)   + \frac{2/\mu}{(1+t)^2} \\
			& \leq & F_{\gamma^*}(\eta_t) - \frac{1}{1+t}( F_{\gamma^*}(\eta_t) -F_{\gamma^*}(\eta^*)) + 4U\norm{\bar{\gamma}_t - \gamma^*}_2 \\ & & + \frac{1}{t+1}\nabla F_{\gamma^*}(\eta_t)^\top(-v_t + \delta_t)   + \frac{2/\mu}{(1+t)^2} 	
		\end{eqnarray*}
		The rest of the proof is similar to Thm. \ref{thm:high}, where $\rho_t = F_{\gamma^*}(\eta_t) - F_{\gamma^*}(\eta^*)$, $\epsilon_t =  \nabla F_{\gamma^*}(\eta_t)^\top(-v_t + \delta_t)$,
		\begin{eqnarray*}
			\rho_{t+1}(t+1) & \leq  & t\rho_t + 4U\norm{\bar{\gamma}_t - \gamma^*}_2 +  \epsilon_t + \frac{2/\mu}{(1+t)} 		
		\end{eqnarray*}
		Summing both sides on $t=1$ to $T-1$, 
		\begin{eqnarray*}
			T\rho_T - \rho_1 & \leq & \sum_{t=1}^{T-1}4U\norm{\bar{\gamma}_t- \gamma^*} + \epsilon_t + \sqrt{\frac{\log T}{T}} \\
			\rho_T  & \leq & \frac{1}{T} \rho_1 + \frac{4U}{\sqrt{T}} + B\sqrt{\frac{2\log\left(\frac{2}{\delta}\right)}{T}} + \sqrt{\frac{\log T}{T}} 
		\end{eqnarray*}
		where in the last step we used the fact that the distance $\norm{\bar{\gamma}_t- \gamma^*} \leq 1/\sqrt{t}$, and the Lemma \ref{lemma:tracking2} to bound the deviation of $\sum_{t=1}^{T}\epsilon_t$. Again, notice that due to to the smoothing the value is off by $\sqrt{\frac{\log T}{T}}$  which does correspond to the leading term. 
		
		\item[\textsc{one-step}]  Dropping $\mu$ from $F$ as above and utilizing nearly identical proof technique as in Thm.\ref{thm:high} we can show the following, 
		\begin{eqnarray*}
			F_{\gamma^*} (\eta_{t+1})& \stackrel{\eqref{eq:fw-update}} \leq & F_{\gamma^*} (\eta_t) + \frac{1}{t+1}\nabla F_{\gamma^*} (\eta_t)^\top(q_t - \eta_t) + \frac{1}{t+1}\nabla F_{\gamma^*} (\eta_t)^\top(-q_t + \delta_t)  \frac{1/\mu}{(1+t)^2} \\
			& = & F_{\gamma^*} (\eta_t) + \frac{1}{t+1}(\nabla F_{\gamma^*}(\eta_t) - \nabla F_{\bar{\gamma}_t}(\eta_t) + \nabla F_{\bar{\gamma}_t(\eta_t)} )^\top(q_t - \eta_t)  \\ & & + \frac{1}{t+1}\nabla F_{\gamma^*} (\eta_t)^\top(-q_t + \delta_t) + \frac{1/\mu}{(1+t)^2} \\
			& \stackrel{\eqref{eq:lip-gradient}} \leq & F_{\gamma^*} (\eta_t) + \frac{1}{t+1}(\nabla F_{\bar{\gamma}_t(\eta_t)} )^\top(q_t - \eta_t) + U\norm{\gamma^* - \bar{\gamma}_t}_2\norm{q_t-\eta_t}_2 \\ & & + \frac{1}{t+1}\nabla F_{\gamma^*} (\eta_t)^\top(-q_t + \delta_t) + \frac{1/\mu}{(1+t)^2} \\
			& \stackrel{\text{update rule}} \leq & F_{\gamma^*} (\eta_t) + \frac{1}{t+1}\nabla F_{\gamma^*} (\eta_t)^\top(\eta^* - \eta_t) + U\norm{\gamma^* - \bar{\gamma}_t}_2\norm{q_t-\eta_t}_2 \\ & & + \frac{1}{t+1}\nabla F_{\gamma^*} (\eta_t)^\top(-q_t + \delta_t) + \frac{1/\mu}{(1+t)^2} \\
		\end{eqnarray*}
		where in the last step, we used the update rule $ q_t = \argmin_{q \in \mP} \inf_{\gamma \in \Gamma_t} \nabla F_\gamma(\eta_t)^\top q$. The problem as appears now is reduced to the proof of Theorem \ref{thm:high}, as all terms feature only $\gamma^*$, and the only additional factor is $\sum_{t=1}^{t-1}U\norm{\gamma^* - \bar{\gamma}_t}_2\norm{q_t-\eta_t}_2 \leq 2U/\sqrt{t}$ upon summing as in Theorem \ref{thm:high}, which is $\mO(U/\sqrt{T} )$ due to the assumption in the theorem, which finished the proof. 
	\end{description}
	
\end{proof}

\subsection{Relationship to multi-agent systems}\label{app:blowup}
If we had $T$ agents that we were to release jointly at the same time, or equivalently we would like to plan jointly for $T$ episodes in one optimization step, we could increase the state-action space by assuming the new action space be $(\mX \times \mA) \times (\mX\times \mA) \dots $, where we do the product $T$ times. This increases the action-space exponentially in $T$, and we denote it $\tilde{\mX} \times \tilde{\mA}$. Visiting the state $x_i^{(t)},a_i^{(t)}$ in episode $t$ does not provide any different information and hence the observations for $\Phi(x_i^{(t)},a_i^{(t)}) = \Phi(x_i^{(t')},a_i^{(t')})$ for all $t,to \in [T]$. Hence, let us just drop the time superscript. 

Thus, if we were to consider the information matrix due to observing a states $\{(x_i^{(t)},a_i^{(t)})\}_{t=1}^T$ as a function of state-action visitation over $\tilde{d}$ over $\tilde{\mX} \times \tilde{\mA}$:
\[ U(\tilde{d}) = f \left(\sum_{x_i,a_i \in \mX \times \mA} \Phi(x_i,a_i)\Phi(x_i,a_i)^\top \left( \sum_{t=1}^{T} \tilde{d}(x_i^{(t)},a_i^{(t)})  \right) + \bI \lambda \right) \quad \tilde{d} \in \tilde{\mD}\]
where $f$ is the scalarization and $\tilde{\mD}$ is the average state-action polytope on $\tilde{\mX} \times \tilde{\mA}$. The equivalent solution is to pick $d(x_i^{(t)},a_i^{(t)}) = d(x_i,a_i)$ fixed for all $t$, which does not arise when optimizing jointly. We would arrive at an improved solution over the solution in Sec.~\ref{sec:convex-rl}. However, optimizing for multiple reruns jointly has two disadvantages a) blows up the state-action space combinatorially and b) does not adapt to the prior executed trajectories.
\section{Experiments: Further Information} \label{app:experiments}
In this section, we provide details of the experiments that we introduced in Sec.~\ref{sec:experiments}. Before we do so a couple of general comments. When optimizing over the polytope $\mD$, we always use an average-case polytope due to simpler implementations. This, from our experience, does not reduce the performance but provides a significant simplification in the code, and is a commonly made simplification found in other works such as \citet{Hazan2019}. Secondly, we never use \citet{Nesterov2005} smoothing technique, which is strictly not necessary since all our objectives are smooth only with globally large smoothness constants. Despite this, the algorithm performs very well suggesting the analysis is pessimistic in nature. 

To run the experiments we used a smaller server-class machine with $28$ CPU cores that we utilized for no more than 20 hours of active time. In general, this is a methodological paper and does not rely on any heavy calculation.

\paragraph{Synthetic grid} In this experiment we use $H = 20$. The exact location of different unit vectors as described in Sec.~\ref{sec:experiments} is visualized in Fig.\ref{fig:banner} with different pictograms. The optimization is run with \textsc{exact}-method such that when the duality gap is below $\epsilon = 0.05$ the optimization terminates. We used the exact line search with bisections to solve the line search problem. We always marginalize the policy before execution; except for \textsc{tracking} variant. The initial state is in the lower-left corner while the final is in the upper-right corner as in Fig.~\ref{fig:banner}.

\subsection{Beilschmeida}
First of all this dataset comes from the seminal work of \citet{Baddeley2015} and their R package. We closely follow the setup from \citet{Mutny2021a} to design the sensing problem. We assume a collection of Borel sets $\mA$, s.t. each set $A \in \mA$. They are all subsets of $[-1,1]^2$ in this case and are generated via hierarchical splitting of the domain. You can see all the sets in Fig.~\ref{fig:bels-map}, There are $16^2$ of these sensing actions. The Poisson process we model has an associated {\em intensity function} $\lambda(x)$ (plotted in Fig.~\ref{fig:bels-map2}),  where the number of events sensed during the unit time in $A$ is distributed as, 
\begin{equation}\label{eq:poison-counts}
	N(A) \sim \text{Poisson}\left( \int_{A} f(x) dx \right).
\end{equation}
We assume that $f \in \mH_k$, and that $f \geq 0$.  Given {\em number} of events in $A$, $n(A)$ for the duration period $\Delta$ in $t$ sessions we have  $\{(n(A_i), A_i)\} $ we use the heteroscedasdict least-squares estimator given the data which is defined via:
\begin{eqnarray}\label{eq:estimator-integral} 
	%				\begin{split}
		\hat{f}_t = \argmin_{f \in \mH_k, f \geq 0} \sum_{i=1}^{t} \frac{(\int_{A_i}f(x)dx-n(A_i))^2}{\sigma_i^2} + \frac{\gamma}{2} \norm{f}_{k}^2.
		%				\end{split}
\end{eqnarray}
where $\sigma_i^2 = \int_{A_i}\bar{f}_{i-1}(x)dx$ is the upper confidence bound in the $i$th iteration. The bar denotes the element in the confidence set which maximizes the value of $\sigma_i^2$ over all plausible $f$. For the construction of the confidence set please refer to the \citet{Mutny2021a}.

Alternatively, we could use an absolute bound on $\sigma_i^2$ as, $\sigma_i \leq \int_{A_i}\bar{f}_{i-1}(x)dx \leq \operatorname{vol}(A_i) \max_x |f(x)| \leq \operatorname{vol}(A_i)\max_{x}k(x,x)\norm{f}_{{\mH}_k}\leq \operatorname{vol}(A_i) B\kappa$. The value of $B = 840$ for this experiment. See $\kappa$ below. Notice that this is a modeling parameter that depends on the volumes of the sets $A_i$ and these are scaled to be $[-1,1]$ which makes this value seem large, but if integrated over such small sets the rate will be of order $\mO(1)$. We use both approaches with known upper bound and with estimated $\sigma_i^2$ in Fig.~\ref{fig:bels}. 

To model the $\mH_k$, we use the squared exponential kernel that takes the slope $s_{x,y}$ and height $h_{x,y}$ of a point $(x,y)$ as inputs, as these are predictive of the habitat of Beilschmiedia, as $k((x,y),(x',y')) = \kappa \exp(-\frac{(s_{x,y}-s_{x',y;})^2 + (h_{x,y}-h_{x',y'})^2}{2\gamma}) $ where $\gamma = 0.1$, where $\kappa = 1000$ to match the scaling of the domain to $[-1,1]$. To relate this experiment to the grid-world example as in Figure \ref{fig:banner} we can cluster the sectors in the $(x,y)$ map due to the similarity, and close to `orthogonal` regions in space will belong to a different cluster. We report this clustering in Fig.~\ref{fig:bels-map} with 12 clusters. These clusters are not used in the algorithm; this is just for visualization purposes and to relate it to the toy experiment. The action space is the same as for the grid-world example with the difference that the initial and final states are in the lower-left corner. 

Notice also that due to RKHS assumption, $\int_{A_i} f(x) dx = \int_{A_i} f^\top \Phi(x) dx  = f^\top \varphi(A)$, where the new kernel due to evaluation function $\varphi(\cdot)$ is defined on Borel sets $A \in \mA$. We approximate the kernel using the triangle basis (with $400$ basis functions) as in \citep{Mutny2021a} to make sure that the positivity constraint can be implemented. We use their publicly available code for the fitting. 

\paragraph{Parameters of \textsc{markov-design}} The episode lengths is set to $H = 64$, $T = 128$, $|\mX| = 16^2$. We use D-design objective on the approximated $f(x) = \Phi(x)^\top f$ on the $400$-dim vector $f$, with $\lambda = 1$. The algorithm is run with line search and accuracy $0.05$ for the convex-RL part. We used the sequential variant of the algorithm as in Appendix \ref{app:unknown} for the case when variances are estimated after each episode.

\begin{figure}
	\centering
	\begin{subfigure}[t]{0.4\textwidth}
		\centering
		\includegraphics[width=\textwidth]{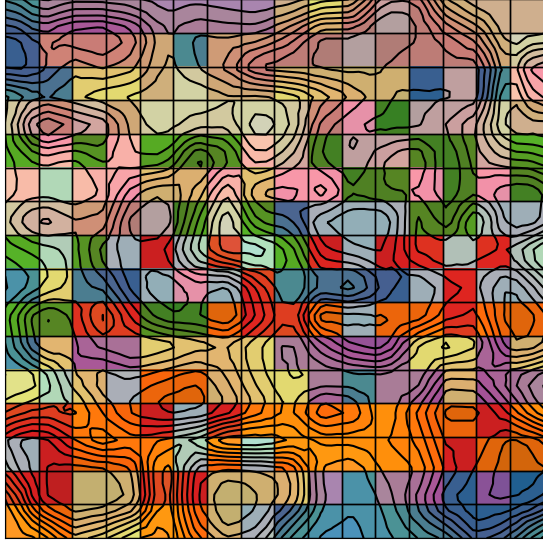}
		\caption{Beilschmiedia sectors via kernelized clustering in the color scheme. }
		\label{fig:bels-map}
	\end{subfigure}
	\begin{subfigure}[t]{0.42\textwidth}
		\centering
		\includegraphics[width=\textwidth]{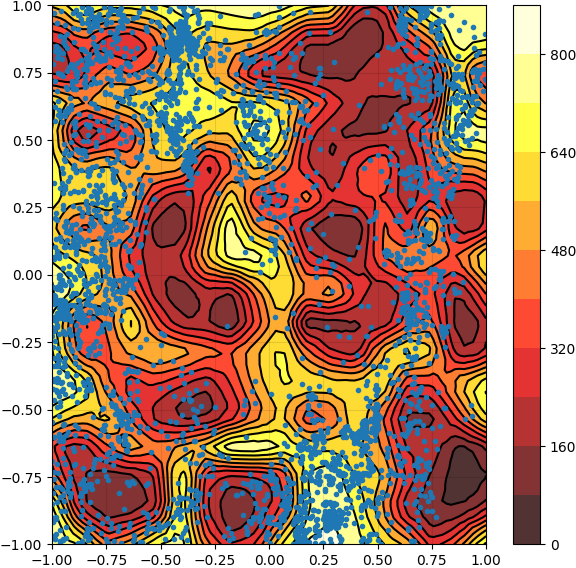}
		\caption{The rate function $f$ with observations in blue for the Beilschmiedia experiment.}
		\label{fig:bels-map2}
	\end{subfigure}
	\caption{ Beilschmiedia details:  In a) we report kernelized clustering due to the kernel $k$ of map sensed sectors. The same color corresponds to the same group overlayed with the rate function $f$. This map connects this example to the motivating example of Fig.~\ref{fig:banner}. b) Estimated rate function with the ground truth points.}
	\label{fig:bels-app}
\end{figure}
\begin{figure}\centering
	\begin{subfigure}[t]{0.45\textwidth}
		\centering
		\includegraphics[width=\textwidth]{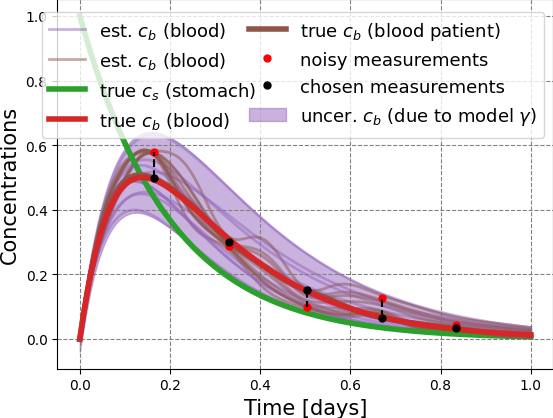}
		\caption{Medicament concentrations}
		\label{fig:pharma-example}
	\end{subfigure}
	\begin{subfigure}[t]{0.45\textwidth}
		\centering
		\includegraphics[width=\textwidth]{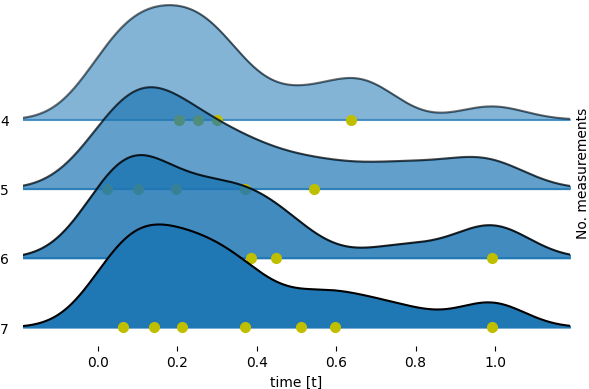}
		\caption{Distribution of measurements.}
		\label{fig:pharma-distribution}
	\end{subfigure}
	\caption{Pharmacokinetics: a) We plot the typical experimental measurements over time $t$. We see the two concentration values: $c_s(t)$ in stomach and $c_b(t)$ in blood. On top of that in brown, we see subject-specific values of $f_i$ which deviate from purple $c_b^{\gamma}$, where dependence on $\gamma$ is explicit, which correspond to possible $c_b$ values due to different $\gamma$ consistent with the data. The true blood concentration is in red. The uncertainty due to the unknown $\gamma$ is in shaded regions, which contains all plausible explanations to the data that we observed in red points. The chosen measurements are not due to our algorithm here. b): This plots a smoothed distribution of measurement locations over $T$ patients with different constraints on a total number of blood measurements. Notice that the measurements are denser in the initial stages of the experiment as a signal-to-noise ratio is much better. In yellow we depict one measurement plan for a patient (the first one). As one goes higher we vary the number of blood draws allowed in our Markov chain.}
	\label{fig:pharma-app}
\end{figure}
\subsection{Pharmacokinetics}
This dataset is synthetic, however, requires only 3 numbers which are choice reasonably with accordance to the prior works on this subject as in \citet{Mutny2022}. A similar problem occurs in \citet{Foster2021a}. One can study the details in the exhaustive book of \citet{Gabrielsson1995}. The explicit treatment of patient specific-contamination is novel to the  best of our knowledge. The grand goal of this experiment is to estimate the parameters of the linear differential equation which we do by first estimating the concentration levels of individual patients. We obtain data by sampling (drawing blood) at specific time intervals and estimating the concentration of medication in the blood. 

For each patient $i\in [T]$ we have the following oracle,
\[f_i(t) = c_b(t) + g_i(t) + \epsilon \quad \text{where} \quad  \epsilon \sim \mN(0,\sigma^2) \quad \text{and $t$ is time.}\]
where $g_i \sim \text{GP}(0,k)$, patient-specific random contamination, where $k$, is non-stationary variant of squared exponential kernel as $k(t,t') = \kappa \exp(-3t)\exp(-3t')k_{SE,\gamma}(t,t')$ with $\gamma = 0.1$, where $k_{SE}$ is the classical squared exponential kernel, and $\kappa = 0.05$. The small $\kappa$ models the fact that the variation is small and the non-stationary ensures that the variation decreases with increasing time. The value of $\sigma = 0.05$ is used as noise standard deviation. 

The $c_b$ denotes the concentration of a medication in the blood, which is shared among all patients. The $c_s$ denotes the concentration of the medication in the stomach. You can see in Figure \ref{fig:pharma-example} that the green curve decreases over time while $c_b$ increases. The dynamics is governed by a \emph{linear differential equation}:
\begin{equation}
	\frac{d}{dt}		\begin{pmatrix}
		c_s(t) \\ c_b (t)
	\end{pmatrix} - \begin{pmatrix}
		-a & 0 \\ b & - c
	\end{pmatrix} \begin{pmatrix}
		c_s(t) \\ c_b(t)
	\end{pmatrix} = 0 
\end{equation}
We stack the coefficients $\gamma = (a,b,c)$, and denote the above operator as $L_\gamma$ acting on $(c_s(t), c_b(t))$. We do not know the values of $\gamma$ as we want to estimate them, however we know that they varying in a box constraint $\Gamma$ which for this experiment is $\Gamma = [4,6]\times[9,11]\times[9,11]$. 

Now, note that as $L_\gamma (c_s(t), c_b(t)) = 0$, we see that the real solutions lie in the kernel of $L_\gamma$. If we discretize the operator over a time-horizon $t \in [t_0,t_1]$ and enforce this constraint and apply it to $c_b(t) = \Phi(t)^\top u_b$ (likewise $c_s(t) = \Phi(t)^\top u_s$), where $\Phi(t)$ is the evaluation functional of a kernel $k$, in this example corresponding to sufficiently rich kernel to reproduce any trajectory (in this case squared exponential as it is an universal kernel). We know that $c_b$ is such that it has to be in the kernel of the operator of discretized operator $\bL_\gamma$. We can stack the orthogonal rows of the kernel of discretized operator $\bL_\gamma$ to get $\bC_\gamma$. Thus $(u_b,u_s) = \bC_\gamma^\top v$, where $v \in \mH_k\setminus \operatorname{span}(\bL_\gamma)$. Thus the only unknown of the trajectory is the $\bC_\gamma$ which corresponds to unknown initial conditions given a fixed $\gamma$. 

We model the initial conditions of blood to be known $c_b(0)=0$, and to be slightly unknown for stomach in order to have a well-defined problem $(c_s(0) - c_{\text{dose}})^2 \leq \epsilon$. For a fixed $\gamma$ we can design an experiment that reduces the uncertainty in the initial condition the most. Despite the initial condition not being the core \emph{unknown} problem here we use them as a tool to get an informative data collection scheme. So given $\gamma$, the objective would be A-design as minimization of $\Tr\left[(\bC_\gamma \left( \sum_{\tau \in \mT} \eta(\tau) I(\tau) + (1/T)  \bV_0 \right)^{-1} \bC_\gamma^\top) \right]$, 
where $\bV_0 = \bI + [\Phi(0),0][\Phi(0),0]^\top$, where the second term corresponds to prior on initial conditions of $c_s$. As $\gamma$ is unknown we take $\inf_\gamma$ over the above objective to get the overall objective which should yield a data collection scheme which is good for any $\gamma \in \Gamma$. The scheme we just explained were complete if there was not any contamination due to $g_i$. As this one is present, we can absorb it into noise as correlated noise, 
\[ f_i(t) = (\Phi(t))^\top (u_b) + \epsilon + \Phi_N(t)^\top g_i \]
where $\Phi_N(t)$ is the evaluating functional of random element $g_i$. Stacking all the problem vectors together we have $f_i = (u_s,u_b)$ and features $\tilde{\Phi}(t) = (\Phi(t), \Phi(t))$. In order to implement $\Phi(t)$ for convenience of implementation we use high fidelity approximation squared-exponential kernel from \citet{Mutny2018b} known as \emph{Quadrature Fourier Features} (QFF) with $\gamma = 0.05$ with $m = 150$. The $\Phi(x)_N$ correspond to the non-stationary kernel we explained for patient-specific contamination. 

We absorb the noise $g_i$ into the information matrix by scaling the features properly. First, consider all times $t$ (i.e. all states), corresponding to matrix $\bSigma(t,t') = \Phi_N(t)^\top\Phi_N(t')$, then we define, new features $\Psi(t) = \sum_{t'}\bSigma^{-1/2}(t,t')\tilde{\Phi}_{t'}$, which are in turn define the information matrix $I(\tau) = \sum_{t \in \tau} \frac{1}{\sigma^2} {\Psi}(t){\Psi}(t)^\top$ used in for this experiment. This makes sure that features are properly scaled to reflect the heteroscedastic noise due to $g_i$. A more detailed treatment of heteroscedastic noise is provided in \citet{Kirschner2018}, which inspired the definition of the information matrix in this work.

We show the smoothed distribution of measurement location in Fig.~\ref{fig:pharma-distribution} and we see that the locations are distributed in the initial phases of the experiment as we would expect from the pharmacological perspective \citep{Gabrielsson1995}. In this time frame the signal-to-noise ratio is higher than later when the concentration drops below a noise level. Having the trajectories estimated we can then run maximum likelihood estimation to identify the parameter gamma from the inferred trajectories given $\gamma$ similarly as done in \citet{Mutny2022} or by \citet{Foster2021a}. 

\paragraph{Parameters of \textsc{markov-design}} The episode lengths are set to $H = 128$, $T = 128$, $|\mX| = 128\times 5\times 3$, where we allow $5$ draws of blood in the experiment over the whole time duration, and the spacing between blood draws needs to be at least $3$; in other words, we have to wait for 3 time-steps before we draw blood again. We use the A-design objective on the values approximated with linear functional $\bC_\gamma$ as $\gamma$ is not known we take the robust version as explained in App.\ref{app:robust}, we use $\lambda = 0.5$. The algorithm is run with line search and accuracy $10^{-8}$ due to different scaling of the objective ($\bC_\gamma$ is small as it contains normalized rows). The domain $t \in [0,1]$ for defining $\bC_\gamma$ is discretized with $128$ points.

\end{document}